  \providecommand\BibTeX{{%
    \normalfont B\kern-0.5em{\scshape i\kern-0.25em b}\kern-0.8em\TeX}}}
\begin{document}
\title[I/O Lower Bounds for Auto-tuning of Convolutions in CNNs]{I/O Lower Bounds for Auto-tuning of Convolutions in CNNs}
\subtitle {\Large Technical Report}

\author{Xiaoyang Zhang, Junmin Xiao$^{*}$, and Guangming Tan}
\affiliation{%
	\institution{%
		State Key Laboratory of Computer Architecture,
		Institute of Computing Technology, Chinese Academy of Sciences}
	\country{University of Chinese Academy of Science}\\
	zhangxiaoyang@ncic.ac.cn \quad xiaojunmin@ict.ac.cn \quad tgm@ict.ac.cn
}

\renewcommand{\shortauthors}{Xiaoyang Zhang, Junmin Xiao, and Guangming Tan}

\begin{abstract}
Convolution is the most time-consuming part in the computation of convolutional neural networks (CNNs), which have achieved great successes in numerous practical applications. Due to the complex data dependency and the increase in the amount of model samples, the convolution suffers from high overhead on data movement (i.e., memory access). This work provides comprehensive analysis and methodologies to minimize the communication for the convolution in CNNs. With an in-depth analysis of the recent I/O complexity theory under the red-blue game model, we develop a general I/O lower bound theory for a composite algorithm which consists of several different sub-computations. Based on the proposed theory, we establish the data movement lower bound results for two main convolution algorithms in CNNs, namely the direct convolution and Winograd algorithm, which represents the direct and indirect implementations of a convolution respectively. Next, derived from I/O lower bound results, we design the near I/O-optimal dataflow strategies for the two main convolution algorithms by fully exploiting the data reuse. Furthermore, in order to push the envelope of performance of the near I/O-optimal dataflow strategies further, an aggressive design of auto-tuning based on I/O lower bounds, is proposed to search an optimal parameter configuration for the direct convolution and Winograd algorithm on GPU, such as the number of threads and the size of shared memory used in each thread block. Finally, experiment evaluation results on the direct convolution and Winograd algorithm show that our dataflow strategies with the auto-tuning approach can achieve about $3.32 \times$ performance speedup on average over cuDNN. In addition, compared with TVM, which represents the state-of-the-art technique for auto-tuning, not only our auto-tuning method based on I/O lower bounds can find the optimal parameter configuration faster, but also our solution has higher performance than the optimal solution provided by TVM.
\end{abstract}

\keywords{I/O lower bounds, red-blue pebble game, dataflow design, auto-tuning, convolutional neural network.}


\maketitle

\section{Introduction}

Convolutional neural networks (CNNs) are commonly applied to numerous computer vision and machine learning applications, which have achieved great successes because the complex layer structures could produce high-quality results based on a large number of data. Specifically, the convolution layer is an important structure in many state-of-the-art modern CNN models, such as MobileNet \cite{howard2017mobilenets}, ResNet \cite{szegedy2016inception}, ShuffleNet \cite{zhang2018shufflenet}, SqueezeNet \cite{iandola2016squeezenet}, VggNet\cite{simonyan2015deep} and so on. The wide adoption of convolution and its huge cost have led to a high demand to optimize convolution operations for high performance. From the hardware perspective, GPUs have been demonstrated to be able to provide tremendous computation power for accelerating convolution operations \cite{yan2020ppopp}. Furthermore, many specific accelerators for convolutions in CNNs are designed based on field-programmable gate arrays (FPGA) and application-specific integrated circuits (ASIC). From the software perspective, a variety of optimization techniques have been developed from algorithm level \cite{yucheng2017corr} to compilation level \cite{zhao2020mirco}. Many optimization efforts have also been incorporated to the widely used software libraries, such as NVIDIA cuDNN \cite{sharan2014corr} and AMD MIOpen \cite{jehandad2019corr}.

For convolution operations in CNNs, multiple convolution algorithms have been developed and classified into two categories: direct and indirect approaches. Typical direct and indirect representatives are the direct convolution and Winograd convolution algorithms respectively, each of which involves a large amount of memory accesses due to the complex computational workflow and massive data in convolution operations. For example, all inputs and weights are typically stored in the off-chip memory of CNN accelerators, such as global memory in GPUs. During computation, partial inputs and weights are loaded from the off-chip memory into the on-chip buffer to produce portions of outputs. Meanwhile, each processor could use its own registers to read some inputs and weights which have been in the on-chip buffer. Consequently, the frequent data movement in the memory hierarchy commonly dominates the energy consumption in convolution operations \cite{chen2020hpca}. Therefore, optimizing the data transmission in convolutions is the key for improving the performance of convolutions.

To minimize data movement, the most works focus on how to reduce the model size, such as quantifying weights \cite{zhou2017incremental}. On the other hand, another effective way for reducing communication is to increase data reuse based on the dataflow design. In recent years, a variety of dataflow approaches have been proposed \cite{chen2016eyeriss,shah2018runtime,jo2018energy}, most of which mainly focus on the computation efficiency. However, the data movement of convolutions has not been taken a full account. This work would try to consider the communication-optimal strategies for different convolution algorithms based on the I/O lower bound analysis.

Since I/O lower bound analysis is important for evaluating the optimality of a proposed algorithmic solution, it is widely concerned to establish appropriate lower bounds of the data movement of application codes \cite{xiao2019ccfhpc,xiao2018icpp}. Under the red-blue pebble game model \cite{jia1981complexity} for data transmission in memory hierarchy, past work on I/O lower bounds has found bounds for specific algorithms, such as matrix-matrix multiplication and FFT. As the recent methodology mainly focuses on the workflow’s specific properties which do not translate across different computational patterns, the recent lower bound theory seems hard to be applied to arbitrary computations such as convolutions, in which different sub-computations involve different computational patterns. How to establish a systematic I/O lower bound theory for convolutions based on the red-blue pebble game model is a big challenge\cite{zhang2020spaa}. Even if the lower bounds could be obtained, the theoretical minimum of I/O complexity is not easy to directly yield an efficient dataflow strategy. There is a very large space to explore. How to determine the 
dataflow with the help of I/O lower bound is another challenge.

To solve the above challenges, this work considers to quantify the contribution of each sub-computation to the total computation, and then generalizes the recent I/O lower bound theory to establish I/O lower bound results for convolutions under the red-blue pebble game model. Next, through a deeper investigation of the highest order term in the lower bound results, we determine which data reuse should be fully exploited, and propose I/O-optimal dataflow strategy for maximizing such data reuse to minimize the memory access in convolutions. Furthermore, by comparing the lower bound result with I/O cost of our dataflow strategy, the optimality condition for implementation of convolutions is deduced. Based on the optimality condition, a fine-grained auto-tuning optimization is designed to effective find the optimal implementation with high performance.

In this work, we make the following key contributions.

\begin{itemize}
	\item Develop a general I/O lower bound theory for any arbitrary composite algorithm which involves different sub-computations and different computational patterns, under the red-blue pebble game model.
	\vspace{0.25cm}
	\item Establish I/O lower bound results for two typical representatives of direct and indirect convolution algorithms, which are the direct convolution and Winograd convolution algorithms.
	\item Design near I/O-optimal dataflow strategies respectively for the direct convolution and Winograd convolution algorithms.
	\item Propose an auto-tuning engine to achieve excellent implementations of our dataflow strategies.
\end{itemize}

\section{Background}

\subsection{Red-blue Pebble Game}
\label{subsection: Red-blue Pebble Game}

The red-blue pebble game is a two-level memory access model which is proposed by Hong \& Kung. This model is suitable for small-fast and large-slow memory structures and our theoretical analysis of lower bound is based on it. Red and blue pebbles represent fast storage and slow storage, respectively. The fast storage is limited, thus the number of red pebbles is small. The slow storage is unlimited, and there is no limit to the number of blue pebbles. The game is played on a directed acyclic graph (DAG), and DAG describes the operation of the algorithm. Furthermore, the rules of a red-blue pebble game are as follows:
\begin{itemize}
	\item (Load) A red pebble may be placed on any vertex that has a blue pebble.
	\item (Store) A blue pebble may be placed on any vertex that has a red pebble.
	\item (Compute) If all the immediate predecessors of a vertex have red pebbles, a red pebble may be placed on such vertex.
	\item (Free) A pebble no matter red or blue may be removed from any vertex.
\end{itemize}

Let $G(V,E)$ be a DAG, which describes an algorithm. $V$ is the vertex set representing operations of algorithm, and $E$ is the edge set representing the dependency of two operations. A partition on $G$ is called an S-partition, if the following four properties hold.

\begin{itemize}
	\item Property 1: $V$ is partitioned into $h$ subsets $V_1,V_2, \cdots, V_h$ such that $V_i$'s are disjoint but their union is $V$.
	\item Property 2: There is a dominator set $D_i$ for each $V_i$ that contains at most $S$ vertices. A dominator set $D_i$ for $V_i$ is a set of nodes in $V$ such that any path from an input of $G$ to a node in $V_i$ contains some nodes in $D_i$.
	\item Property 3: There is a minimum set $M_i$ for each $V_i$ that contains at most $S$ vertices. The minimum set of $V_i$ is defined to be the set of vertices in $V_i$ that do not have any successor vertex belonging to $V_i$.
	\item Property 4: No cyclic dependence is among $V_1,\cdots,V_h$.
\end{itemize}

Let $P(S)$ be the minimum number of subsets that any S-partition of a DAG must have. The following theorem describes the communication lower bound based on the S-partition model (for the proof, refer to \cite{jia1981complexity}).

\begin{theorem}
	\label{theorem: original lower bound}
	Any complete calculation of a red-blue pebble game on DAG $G=(V,E)$ with at most $S$ red pebbles needs the minimum I/O time $Q$ such that
	\begin{equation}
	\label{equation: original lower bound}
	Q \geq S \cdot (P(2S)-1).
	\end{equation}
\end{theorem}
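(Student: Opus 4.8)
The plan is to reproduce the classical phase-partitioning argument: starting from a legal complete pebbling that attains the minimum I/O time $Q$, I would construct a bona fide $2S$-partition of $G$ whose number of blocks is at most $Q/S + 1$, and then appeal to the definition of $P(2S)$ as the least number of blocks over all $2S$-partitions.

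First I would fix any legal complete calculation realizing $Q$ and cut its sequence of moves into consecutive phases $\mathcal{P}_1,\dots,\mathcal{P}_h$ so that each of $\mathcal{P}_1,\dots,\mathcal{P}_{h-1}$ contains exactly $S$ I/O moves (loads plus stores) and $\mathcal{P}_h$ contains at most $S$; this immediately yields $(h-1)S \le Q$. Next I would let $V_i$ be the set of vertices that receive a red pebble by a compute move somewhere in $\mathcal{P}_i$, resolving overlaps by assigning each such vertex to the \emph{last} phase in which it is computed, so that the $V_i$ become pairwise disjoint, and throwing every remaining vertex (these are inputs, which obtain red pebbles only by loading) into any convenient block so that $\bigcup_i V_i = V$. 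Property~1 holds by construction, and Property~4 holds because an edge of $G$ can only be used in the forward time direction, so a cyclic dependence among the blocks would entail a cyclic dependence in time.

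The crux is Properties~2 and~3 with the relaxed bound $2S$ in place of $S$. For a dominator set of $V_i$ I would take $D_i = R_i \cup L_i$, where $R_i$ (at most $S$ vertices) carry a red pebble at the instant $\mathcal{P}_i$ begins and $L_i$ (at most $S$ vertices) are loaded during $\mathcal{P}_i$. Given any path from an input of $G$ to a vertex $v \in V_i$, I would trace it backwards starting from the compute move that placed $v$ into $V_i$: each predecessor encountered either already held a red pebble when $\mathcal{P}_i$ began (so it lies in $R_i$), or was loaded inside $\mathcal{P}_i$ (so it lies in $L_i$), or was itself computed inside $\mathcal{P}_i$, in which case the trace moves one edge further back and must eventually reach an input, which could only have been loaded; hence the path meets $D_i$ and $|D_i| \le 2S$. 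Dually, a vertex of $V_i$ with no successor still inside $V_i$ can be consumed after $\mathcal{P}_i$ only if it retains its red pebble to the end of $\mathcal{P}_i$ (at most $S$ such vertices) or is written out by a store during $\mathcal{P}_i$ (at most $S$ stores), so the minimum set $M_i$ satisfies $|M_i| \le 2S$. With $\{V_i\}_{i=1}^h$ now a valid $2S$-partition we get $P(2S) \le h$, and combining with $(h-1)S \le Q$ gives $Q \ge S(h-1) \ge S(P(2S)-1)$.

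I expect the genuinely delicate point to be recomputation: a vertex may be pebbled, freed, and re-pebbled arbitrarily many times, so one has to check that "the last phase in which $v$ is computed" is well defined, that input vertices are treated consistently so that the backward trace in the dominator argument actually terminates (this is precisely where the convention that a red pebble on an input can only arise from a load is needed), and that no vertex is simultaneously claimed by two blocks. Everything else — counting the phases, Property~4, and the minimum-set bound — is routine bookkeeping once these definitions are pinned down.
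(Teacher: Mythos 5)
The paper itself offers no proof of Theorem~\ref{theorem: original lower bound}; it is quoted from Hong and Kung with a pointer to \cite{jia1981complexity}, so your proposal can only be measured against the classical phase-partition argument, which is indeed what you reconstruct. The skeleton is right: cut the minimum-I/O pebbling into phases of $S$ I/O moves, take $D_i$ to be the (at most $S$) vertices holding red pebbles when phase $i$ opens together with the (at most $S$) vertices loaded during it, take $M_i$ to be the vertices that either keep a red pebble past the phase or are stored inside it, and combine $h \ge P(2S)$ with $Q \ge (h-1)S$. Your remark that an input can acquire a red pebble only through a load is exactly the convention that makes the backward trace in the dominator argument terminate.

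There is, however, one concrete gap: resolving recomputation by assigning each vertex to the \emph{last} phase in which it is computed breaks your own justification of Property~4. If $u$ is computed early, consumed by $v$ in phase $j$, and then recomputed in a later phase $j' > j$ (a legitimate way to trade I/O for arithmetic, so it can occur even in an optimal pebbling), then $u$ lands in $V_{j'}$ while $v$ lands in $V_j$, and the edge $u \to v$ points from a later block to an earlier one; two such edges in opposite directions yield a cyclic dependence among the blocks, so ``an edge can only be used forward in time'' does not save you. The standard fix is to assign each vertex to the \emph{first} phase in which it is computed: for any edge $u \to v$, at the first computation of $v$ the vertex $u$ already holds a red pebble, and for a non-input $u$ that pebble traces back to a computation of $u$ occurring no later, so every inter-block edge respects the phase order. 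With that convention the dominator argument and the phase count go through unchanged. A residual subtlety, which essentially all presentations of this theorem must confront, is the bound $|M_i| \le 2S$: a vertex computed in phase $i$, discarded without a store, and recomputed in a later phase for actual use is in $M_i$ yet is neither red at the end of phase $i$ nor stored during it, so this step too needs the assignment convention (or a normal-form assumption on the optimal pebbling) to be pinned down rather than dismissed as routine bookkeeping.
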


\subsection{Direct Convolution}

\begin{figure}[ht]	
	\centering
	\includegraphics[scale=0.4]{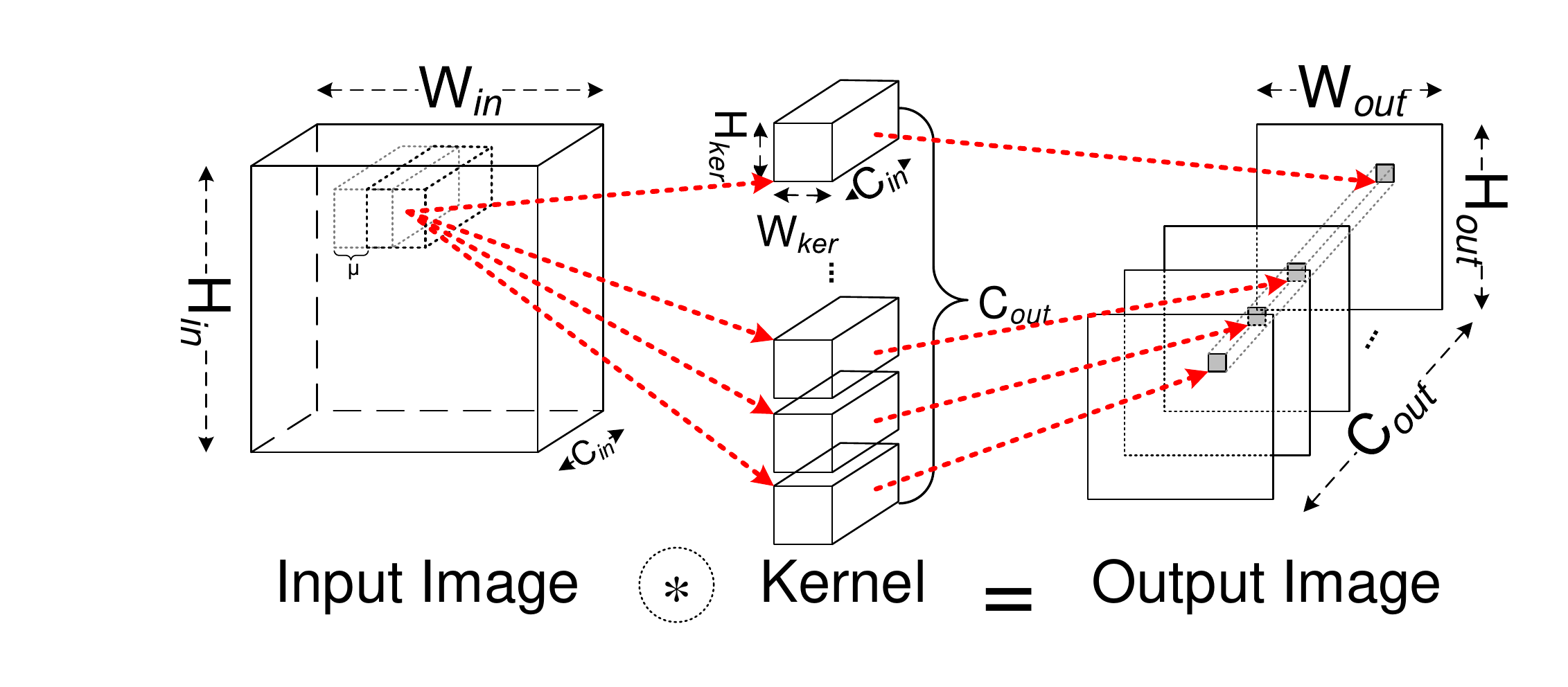}
	\caption{Direct Convolution.}
	\label{fig:DirectConvolution}
\end{figure}


Figure \ref{fig:DirectConvolution} illustrates a direct convolution. We have an input image of size $W_{in} \times H_{in} \times C_{in}$ and $C_{out}$ kernels of weights, producing a $W_{out} \times H_{out} \times C_{out}$ output image. For the convolution, the channels of input image $C_{in}$ is the number of channels in each kernel, and the channels of output image $C_{out}$ is equal to the number of kernels, and each channel of output image is a $H_{out} \times W_{out}$ matrix. The kernel is a $W_{ker} \times H_{ker} \times C_{in}$ tensor. Each output is computed by an inner product between a kernel tensor and a sliding input tensor with the size of $W_{ker} \times H_{ker} \times C_{in}$ from an input image by using a sliding window. The stride size $\mu$ is the position difference between two adjacent sliding windows.

\subsection{Winograd Algorithm}
\label{section: Winograd Algorithm}

Winograd algorithm for convolution is shown in Figure \ref{fig:WinogradAlgorithm}. This algorithm changes the characteristics of time-domain convolution calculations, and reduces the number of multiplication operations between input images and kernels through mathematical transformation. In order to perform the mathematical transformation, several parameter matrices are introduced. Matrix $A$, $B$ and $L$ are three transformation matrices for output images, input images and kernels respectively. Furthermore, as Winograd algorithm requires $W_{ker}=H_{ker}$, we denote $r$ as $W_{ker}$ or $H_{ker}$ briefly. Winograd algorithm can calculate multiple output results at once. Here, we denote $F(e \times e,r \times r)$ as a calculation process to deduce $e^2$ outputs in winograd algorithm. Theoretically, the value of $e$ is arbitrary, but in practice $e$ usually is chosen as $2$, $3$ or $4$. To compute every $e^2$ outputs at a fixed channel of an output image, $F(e \times e, r \times r)$ requires a sliding input tensor with the size of $(e+r-1) \times (e+r-1) \times C_{in}$ from input images using a sliding window and a kernel tensor with the size of $e \times e \times C_{in}$. Then the input tensor and kernel are transformed by $B$ and $L$ into $P$ and $J$ which have the same size of $(e+r-1) \times (e+r-1) \times C_{in}$. Next, the corresponding element product of $P$ and $J$ results in a new  $(e+r-1) \times (e+r-1) \times C_{in}$ tensor $\varLambda$, and the summation of elements in $\varLambda$ along channel direction generates a $(e+r-1) \times (e+r-1)$ matrix $\varPi$. Finally, $\varPi$ is transformed by $A$ into a $e\times e$ matrix which are $e^2$ outputs. 

\begin{figure}[ht]	
	\centering
	\includegraphics[scale=0.35]{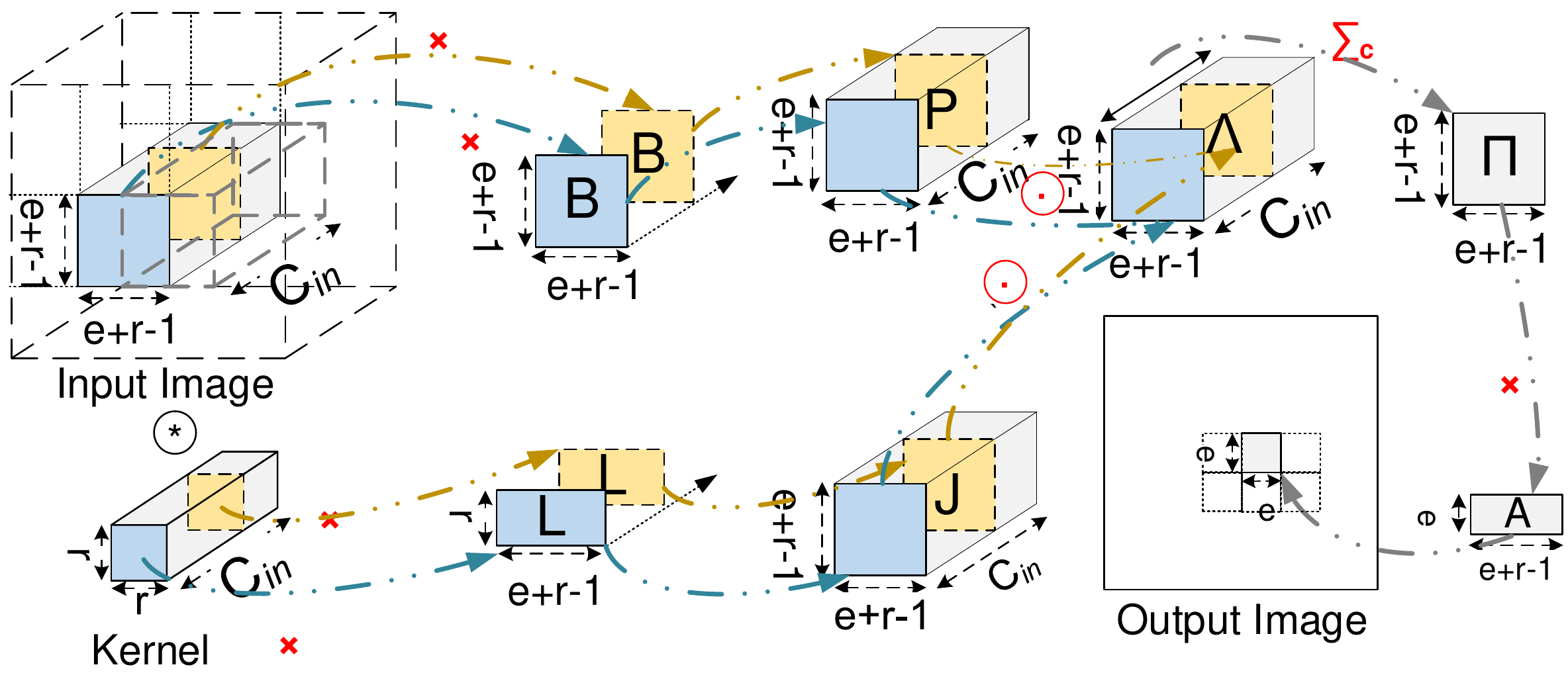}
	\caption{Winograd Algorithm.}
	\label{fig:WinogradAlgorithm}
\end{figure}

\section{Motivation}

In this section, we elaborate specific challenges that need to be addressed in order to build I/O lower bound theory and design I/O optimal dataflow implementations, and present our basic idea to address these challenges.

\subsection{Challenges for I/O Lower Bounds Building} 

\begin{figure}[htbp]
	\centering
	\includegraphics[scale=0.15]{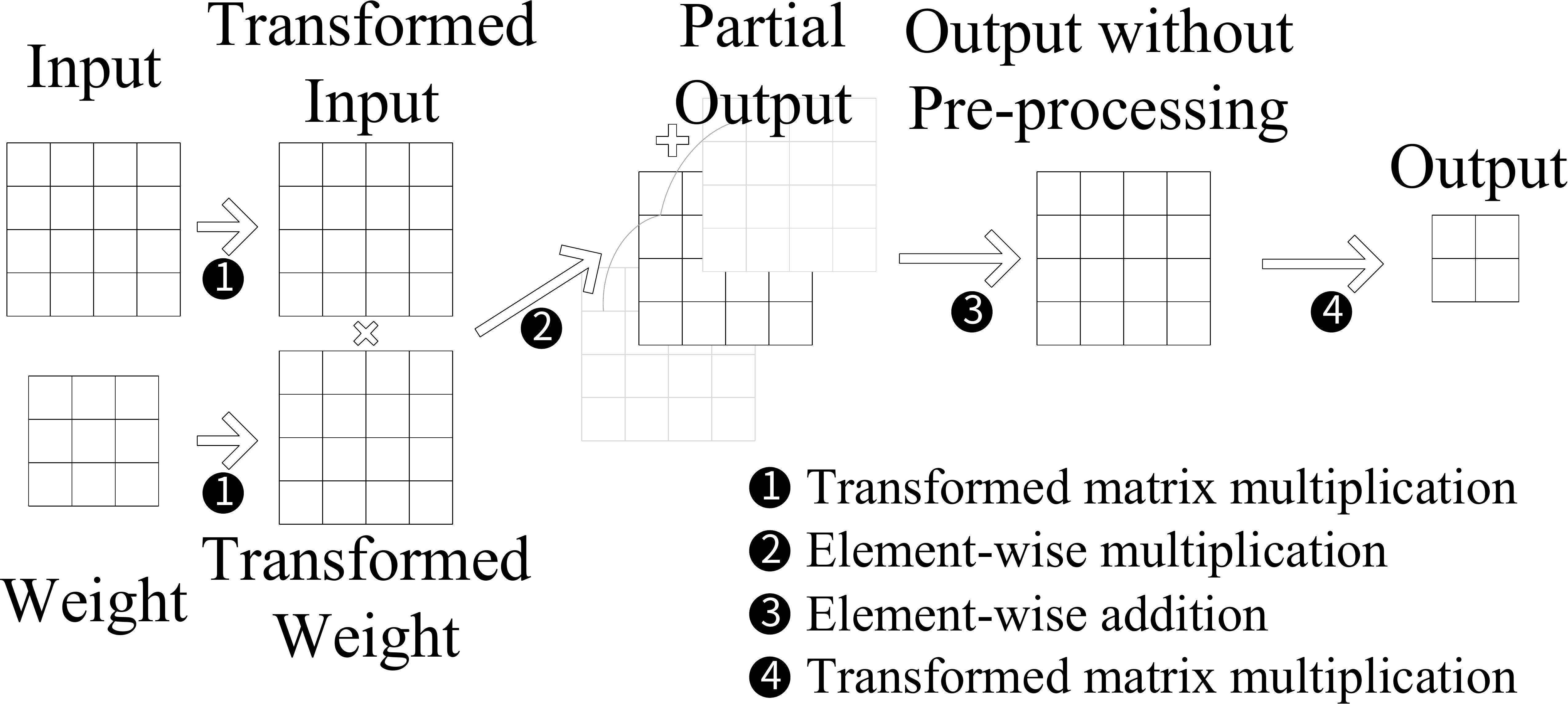}
	\caption{Different Patterns for Winograd Algorithm.}
	\label{fig:MultistepChallenge}
\end{figure}

Since the lower bound analysis is important for evaluating the optimality of a proposed algorithmic solution, and often yields new insights in algorithm optimization for achieving better performance, it is widely concerned to establish appropriate I/O lower bounds of the data movement of a numerical method on a hierarchical memory system. In real application, most numerical algorithms, such as convolutions, are typically constructed from a number of sub-computations. For instance, Winograd algorithm has 4 sub-computations, which involve $4$ different patterns (Figure \ref{fig:MultistepChallenge}): (1) transformed matrix multiplication, (2) element-wise multiplication, (3) element-wise addition, (4) transformed matrix multiplication. Although the red-blue pebble game model has been proposed for many years, it is still difficult to use this model to establish I/O lower bounds of composite algorithms which involve several different kinds of computational patterns \cite{elango2014characterizing}.
It is not even possible to deduce a suitable I/O lower bound of the DAG only focusing on each sub-computation of the composite codes, due to the following two main reasons. 
Firstly, at the beginning of the red-blue pebble game, all DAG vertices without predecessors have blue pebbles, and all vertices without successors would get blue pebbles at the end of the game. Based on this assumption, the calculation for each sub-DAG will require at least one load operation for each input and one store operation for each output. However, when the red-blue pebble game is played on the full DAG, the data could pass from a previous sub-computation to a later one directly through fast memory. Secondly, when a composite computation is assigned into several sub-computations, the total DAG is partitioned into several relevant sub-graphs. Under a common constraint that previous sub-computation must be totally finished before the later sub-computation starts, the partition way of the total DAG usually impacts the data movement complexity due to the limited size of fast memory. The two reasons above describe the essential difficulties to develop the general I/O lower bound theory for any composite algorithm. To get around these difficulties, the red-blue-white pebble game model has been proposed recently to analyze composite algorithms, which uses some restrictions on models \cite{elango2014characterizing}.

\subsection{Challenges for Optimal Implementations}

If I/O lower bound can be obtained, it often provides some insights for the algorithm design. For instance, I/O lower bound can tell us which data should be reused in the on-chip memory prior to the others (see Section \ref{section:I/O Optimal Dataflow}). When we know which data has higher reuse priority, another challenge is designing the optimal implementation to maximize such data reuse. For the implementation design of convolutions, the combinatorial choices of memory access, threading pattern, specific input shape and layout create a huge configuration space, such as loop tiling, ordering, unrolling, and so on. For 4 sub-computations in Winograd algorithm, the size configuration space is usually larger than $10^6$. This fact indicates that it is hard to manually design an efficient implementation for a convolution. Although NVIDIA proposes excellent implementations for different convolution algorithms in cuDNN library \cite{choi2010model}, these implementations mainly focus on general optimization on GPUs. Directly using the convolution API in cuDNN sometimes can not satisfy the real-time demand of inference applications. Recently, auto-tuning methods have been proposed for the fine-grained optimization of convolutions. The common way is to adopt a predefined cost model to guide the search, but building an accurate cost model is difficult due to the increasing complexity of modern hardware. As the state-of-the-art framework for auto-tuning convolutions, TVM proposes a new auto-tuning method based on ML-model \cite{chen2018tvm}. However, it still needs a large search cost due to the huge search space.

\subsection{Basic Idea}

In this work, we explore the red-blue pebble game. The analysis on each subcomputation could not accurately estimate the data movement complexity, which is because the contribution of each sub-computation to total computation is ignored.
Through the quantification of such contribution, all sub-computations can be viewed as a whole, which provides an opportunity to build I/O lower bound of composite algorithms.

Besides, to addresses the challenges for algorithm optimization, this work combines both the coarse-grained design and fine-grained optimization for convolutions. 
Based on the lower bound analysis, we propose a coarse-grained dataflow design by fully exploiting the data reuse and minimizing the off-chip memory access. With comparing the I/O volume of the dataflow with the lower bound, we discover the optimality condition for I/O optimal design. Using this optimality condition, our fine-grained optimization is to reduce the size of search space and proposing an effective parallel searching method to find the optimal implementation, which leads to an auto-tuning engine.

\section{Lower Bound Theory}

\subsection{Red-Blue Pebble Game Re-exploration}
\label{section: Red-Blue Pebble Game Re-exploration}

\subsubsection{Basic Idea}
\label{subsubsection: Basic Idea}

In order to build I/O lower bounds of
convolutions, we revisit the red-blue pebble game in fine-grained. First of all, it is not easy for a composite code to deduce the value of $P(S)$ indeed, while we could try to estimate a valid lower bound of $P(S)$. Denote $\mathbb{P}_{S}$ as the set containing all possible options of S-partitions for DAG $G(V,E)$, and each element in $\mathbb{P}_{S}$ represents one S-partition of $G(V,E)$. Let
\begin{equation}
\label{equation: HS}
H(S) = \min_{\{V_{1},\cdots, V_{h}\} \in \mathbb{P}_{S} }  \dfrac{|V|}{\max_{1\leq i \leq h} |V_{i}|}.
\end{equation}
According the definition of $P(S)$ in Section \ref{subsection: Red-blue Pebble Game}, we have $P(S) \geq H(S)$. This fact, together with Equations (\ref{equation: original lower bound}) and (\ref{equation: HS}), implies that the I/O time $Q$ satisfies 
\begin{equation}
\label{equation: QQ}
Q \geq S \cdot (P(2S)-1) \geq S \cdot (H(2S)-1).
\end{equation}
Hence, we only need to estimate $H(2S)$ instead of $P(2S)$. Secondly, from Equation (\ref{equation: HS}), $H(S)$ depends on the value of $\max_{1\leq i \leq h} |V_{i}|$, which means that the fine-gained analysis on $V_{i}$ is the key. Thirdly, if we can find out the relationship between $V_i$ and all sub-computations of $G(V,E)$, it would become possible to estimate the number of vertices in $V_i$. Before deducing the upper bound of $|V_i|$, we formalize the notation of multi-step partition of a DAG.
\begin{definition}
	\label{definition: multi-step partition of DAG}
	Assume that a DAG $G(V,E)$ is decomposed into $n$ sub-DAGs $G_1(U_{1}, E_{1}), G_2(U_{2}, E_{2}), \cdots, G_n(U_{n},E_{n})$ where $ G_j(U_{j}, E_{j})$ is corresponding to a sub-computation. $\{G_1(U_{1}, E_{1})$, $\cdots, G_n(U_{n},E_{n}) \}$ is called as \textit{a multi-step partition} of $G(V,E)$, if and only if any input vertex of $G_j(U_{j},E_{j})$ must be an output vertex of $G_{j-1}(U_{j-1},E_{j-1})$, and the internal vertex sets of all $U_j$'s are disjoint from each other.
\end{definition}
It is clear that any sequence of sub-computations can be represented as a multi-step partition of the DAG for the total computation. Assume that $\{G_1(U_{1}, E_{1}), \cdots, G_n(U_{n},E_{n}) \}$ is a multi-step partition of $G(V,E)$. If we are able to estimate all the upper bounds of $|V_{i} \cap U_{j}|$ ($j=1, 2, \cdots, n$) by using Property 2 and Property 3 in the definition of S-partition, it is possible to obtain the maximum of $|V_{i}|$. 

In the following, we study the feasibility on the derivation of the upper bound of $|V_{i}|$ based on recursive analysis. For some $j$-th sub-computation, assume that the upper bound of $|V_{i} \cap U_{j}|$ has been obtained successfully. The next problem is how to estimate $|V_{i} \cap U_{j+1}|$. Since $|V_{i} \cap U_{j}|$ seems not to be associated with the upper bound of $|V_{i} \cap U_{j+1}|$, we have to focus on how the output set of $U_j$ affects the $(j+1)$-th sub-computation. Denote $\widetilde{O}_j$ as the output set of $U_j$, and $D_i$ as a dominator set of $V_i$. Further, we apply a new concept of \textit{vertex generation} to determine the vertecies in $\widetilde{O}_j$ which are associated with $D_i$ and $V_{i}$.

\begin{definition}
	In a DAG $G(V,E)$, \textit{a vertex set $U$ can generate another vertex set $U'$}, if and only if every path from an input of $V$ to a vertex in $U'$ contains some vertex in $U$. Furthermore, $\varTheta(U)$ represents a set containing all vertices which can be generated by $U$.
\end{definition}

It is obvious that the dominator set $D_i$ can generate $V_i$. Furthermore, $\varTheta(D_i) \cap V_i \cap  \widetilde{O}_{j}$ determines the vertecies in $\widetilde{O}_j$ which are associated with $D_i$ and $V_{i}$. $|\varTheta(D_i) \cap V_i \cap  \widetilde{O}_{j}|$ could be used to deduce the upper bound of $|V_i \cap U_{j+1}|$, because that all inputs of $V_i \cap U_{j+1}$ are included in $\varTheta(D_i) \cap V_i \cap \widetilde{O}_{j}$. In conclusion, if we can dedue the upper bounds of $|V_i \cap U_{j}|$ and $|\varTheta(D_i) \cap V_i \cap \widetilde{O}_{j}|$, it is easy to obtain $|V_i \cap U_{j+1}|$ and $|\varTheta(D_i) \cap V_i \cap \widetilde{O}_{j+1}|$ by using $\varTheta(D_i) \cap V_i \cap \widetilde{O}_{j}$ as the inputs for $V_i \cap U_{j+1}$. Based on recursive analysis, all upper bounds of $|V_i \cap U_{j+1}|$ ($j=1, 2, \cdots, n$) can be established, which would lead to the upper bound of $|V_i|$.

After the feasibility analysis above on the derivation of the upper bound of $|V_{i}|$, \textit{we use a simple example to show the intuition of how to obtain the the upper bound of $|V_i|$}. Assume DAG $G(U,E)$ of a composite algorithm has two sub-computations ($G(U,E)= G_1(U_1,E_1)+G_2(U_2,E_2)$). Denote $k^{i}$ as the number of vertices in the dominator $D_i$ of $V_{i}$. According to the definition of S-partition, we have $|D_i| = k^{i} \leq S$, and divide $k^{i}$ into $k^{i} = k^{i}_{1}+k^{i}_{2}$ where $k^{i}_{1}$ is the number of input vertices for $V_{i} \cap U_1$ and $k^{i}_{2}$ is the number of a part of input vertices for $V_{i} \cap U_2$. For any integer $k$, find two functions $\varphi_j(k)$ and $\psi_j(k)$, where $\varphi_j(k)$ represents the maximum of vertices in $U_j$ generated by using $k$ input vertices, and $\psi_j(k)$ represents the maximum of vertices in $\widetilde{O}_{j}$ generated by using $k$ input vertices. Hence, $|\varTheta(D_i) \cap V_{i} \cap U_1|$ is not larger than $\varphi_1(k^{i}_{1})$, and at most $\psi_1(k^{i}_{1})$ vertices are generated as the inputs for $V_{i} \cap U_2$. Hence, there are at most $k^{i}_2+\psi_1(k^{i}_{1})$ input vertices for $V_{i} \cap U_2$. Further, $|\varTheta(D_i) \cap V_i \cap U_2| \leq \varphi_2(k^{i}_2+\psi_1(k^{i}_{1}))$ is valid. Hence, we have 
\begin{eqnarray}
|V_i| & \leq &  |D_i| + |\varTheta(D_i) \cap V_i \cap U_1|+| \varTheta(D_i) \cap V_i \cap U_2 |  \nonumber \\
& \leq & S+\varphi_1(k^{i}_{1})+\varphi_2(k^{i}_2+\psi_1(k^{i}_{1}))
\nonumber \\
& \leq & S+ \max_{k_{1} +k_{2} \leq S} \left(\varphi_1(k_{1})+\varphi_2(k_2+\psi_1(k_{1}))\right).
\nonumber
\end{eqnarray}
Letting $T(S)=S+\max_{k_{1} +k_{2} \leq S} (\varphi_1(k_{1})+\varphi_2(k_2+\psi_1(k_{1})))$, we achieve $|V_i| \leq T(S)$.

Acorrding to the discussion above, we deduce the general I/O lower bound result of any composite code by three steps. Firstly, find two functions which can determine the numbers of vertices generated by $D_i$ in $V_i \cap U_{j}$ and $V_i \cap \widetilde{O}_{j}$ respectively (Section \ref{subsubsection: Vertex Generation Functions}). Secondly, deduce the upper bound of $|V_i|$ by using the upper bounds of the two functions (Section \ref{subsubsection: Estimation of Upper Bound of Vi}). Finall, establish general I/O lower bound result by substituting the upper bound of $|V_i|$ into Equations (\ref{equation: HS}) and (\ref{equation: QQ}) (Section \ref{subsubsection: IO lower bound results}).

\subsubsection{Two Maximum Vertex Generation Functions}
\label{subsubsection: Vertex Generation Functions}


For any integer $k$ and a vertex set $U$ with any dominator set $D$ satisfying $|D \cap U_{j}| + | \varTheta(D) \cap \widetilde{O}_{j-1} | \leq k$, define \textit{two vertex generation functions} for the $j$-th sub-computation, as follows
\begin{displaymath}
\widetilde{\varphi}_{j} (U, k) = | \varTheta(D) \cap U \cap U_{j}| ~\hbox{and}~ \widetilde{\psi}_{j} (U, k) = | \varTheta(D) \cap U \cap \widetilde{O}_{j}|.
\end{displaymath}
Here, $\widetilde{\varphi}_{j}$ and $\widetilde{\psi}_{j}$ represent the numbers of vertices generated by $D$ in two sub-graphs $U \cap U_{j}$ and $U \cap \widetilde{O}_{j}$ respectively. Furthemore, for any given $k$ and the $j$-th sub-computation, we define 
\textit{maximum vertex generation functions} as
\begin{equation}
	\label{equation: varphi and psi}
	\varphi_{j} (k) = \max_{U} \widetilde{\varphi}_{j} (U, k) ~\hbox{and}~\psi _{j} (k) = \max_{U} \widetilde{\psi}_{j} (U, k).
\end{equation}
It is clear that $\varphi_{j}$ and $\psi_j$ provide the upper bound estimation on the number of vertices in $U_{j}$ and $\widetilde{O}_{j}$, which can be generated by a vertex $D$ satisfying $|D \cap U_{j}| + | \varTheta(D) \cap \widetilde{O}_{j-1} | \leq k$. With maximum vertex generation functions $\varphi_{j}$ and $\psi_j$, it becomes possible to estimate $|V_i \cap U_j|$ ($j =1,\cdots, n$) one by one.

\subsubsection{Estimation of Upper Bound of $|V_i|$}
\label{subsubsection: Estimation of Upper Bound of Vi}

The analysis in Section \ref{subsubsection: Basic Idea}
inspires us that I/O lower bound establishment is equivalent to finding the upper bound of $|V_{i}|$. Further, two kinds of maximum vertex generation functions $\varphi_{j}$ and $\psi_{j}$ in Section \ref{subsubsection: Vertex Generation Functions}, provide us a powerful tool to respecitvely estimate the numbers of vertices in $V_{i} \cap U_{j}$
and $V_{i} \cap \widetilde{O}_{j}$, which are generated by any dominator $D_i$ of $V_i$. In the following, we try to deduce the upper bound of $|V_i|$.

First of all, we deduce two auxiliary results. Let $\widetilde{O}^{i}_{j}$ be the subset of $\widetilde{O}_{j}$ such that for any $v \in \widetilde{O}^{i}_{j}$, any path from the input set of $V$ to $v$ has at least one vertex which belongs in $\cup^{j}_{k=1} (D_i \cap U_k)$. 

\begin{lemma}
	\label{lemma: dominator set of Oij}
	$\widetilde{O}^{i}_{j} \cup (D_i \cap U_{j+1})$ is a dominator set of $\widetilde{O}^{i}_{j+1}$.
\end{lemma}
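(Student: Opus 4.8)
The plan is to unfold the definition of a dominator set and argue directly. For brevity write $A_{j} := \bigcup_{k=1}^{j}(D_{i}\cap U_{k})$, so that, by definition, $\widetilde{O}^{i}_{j}=\widetilde{O}_{j}\cap\varTheta(A_{j})$, $\widetilde{O}^{i}_{j+1}=\widetilde{O}_{j+1}\cap\varTheta(A_{j+1})$, and $A_{j+1}=A_{j}\cup(D_{i}\cap U_{j+1})$. Fix any $v\in\widetilde{O}^{i}_{j+1}$ and any path $p$ in $G(V,E)$ from an input vertex of $G$ to $v$; it suffices to show $p$ meets $\widetilde{O}^{i}_{j}\cup(D_{i}\cap U_{j+1})$. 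Since $v\in\varTheta(A_{j+1})$, the path $p$ meets $A_{j+1}$; if the meeting vertex lies in $D_{i}\cap U_{j+1}$ we are done, so assume instead that $p$ meets some $w\in D_{i}\cap U_{k}$ with $k\le j$.

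The next step is to exhibit an output vertex of $U_{j}$ on $p$. By Definition~\ref{definition: multi-step partition of DAG}, every input vertex of $G_{j+1}$ is an output vertex of $G_{j}$ and the internal vertex sets of the $U_{j}$'s are pairwise disjoint; combined with acyclicity of $G$ this yields the two structural facts I will use repeatedly: (i) along any directed path the stage index of the visited vertices is nondecreasing and an earlier $U_{k}$ cannot be re-entered once left; and (ii) to pass from a vertex of $U_{k}$ with $k\le j$ to a vertex of $U_{j+1}$, a path must visit a vertex of $\widetilde{O}_{j}$. Applying (ii) to the portion of $p$ from $w$ to $v$, let $u\in\widetilde{O}_{j}$ be a vertex of $p$ occurring at or after $w$ and at or before $v$. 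It remains to prove $u\in\widetilde{O}^{i}_{j}$, i.e. $u\in\varTheta(A_{j})$: that every path from an input of $G$ to $u$ meets $A_{j}$.

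Let $q$ be an arbitrary path from an input of $G$ to $u$. Splicing $q$ with the segment of $p$ from $u$ to $v$ produces a walk from an input of $G$ to $v$; since a walk meeting no vertex of a set contains a simple path with the same property, and $v\in\varTheta(A_{j+1})$, this walk meets $A_{j+1}$, say at a vertex $z$. If $z$ lies on $q$, then fact (i) forbids a vertex of $A_{j}\subseteq U_{1}\cup\cdots\cup U_{j}$ from occurring strictly after $u\in\widetilde{O}_{j}$ and forbids a genuine stage-$(j{+}1)$ vertex from occurring before $u\in U_{j}$, so $z$ is forced to be a boundary vertex of $U_{j}$ or $u$ itself, and in either case $z\in A_{j+1}\cap U_{j}=D_{i}\cap U_{j}\subseteq A_{j}$, whence $q$ meets $A_{j}$; if instead $z$ lies strictly beyond $u$ on $p$, then fact (i) forces $z\in D_{i}\cap U_{j+1}$, so $p$ itself meets $D_{i}\cap U_{j+1}$ and the original assertion already holds. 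As $q$ was arbitrary, $u\in\varTheta(A_{j})$, hence $u\in\widetilde{O}^{i}_{j}$ and $p$ meets $\widetilde{O}^{i}_{j}$, which completes the proof. The main obstacle is a rigorous justification of the structural facts (i) and (ii) directly from the bare definition of a multi-step partition, in particular the bookkeeping around boundary vertices that are simultaneously an output of $G_{j}$ and an input of $G_{j+1}$; acyclicity of $G$ together with the defining property of the partition should suffice, but that case analysis is the delicate part, and the walk-versus-path reduction used in the splicing step, though routine, must be stated carefully so as to remain within the letter of the dominator-set definition.
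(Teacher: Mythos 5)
Your proof is correct and follows essentially the same route as the paper's: both arguments locate an output vertex of $U_j$ on the given path to $v$ and show that it lies in $\widetilde{O}^{i}_{j}$ by extending an arbitrary path to that vertex along the tail of $p$ and invoking the fact that $v$ is dominated by $\cup_{k=1}^{j+1}(D_i\cap U_k)$, with the alternative branch landing in $D_i\cap U_{j+1}$. The only differences are organizational — the paper splits up front on whether the tail of the path contains a $D_i$-vertex, whereas you fold that dichotomy into the per-path splicing analysis — together with your more explicit (and welcome) bookkeeping of stage monotonicity and boundary vertices, which the paper leaves implicit.
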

\begin{proof}
	For each $v \in \widetilde{O}^{i}_{j+1}$, denote $P$ as any path from the input set of $V$ to the vertex $v$. In order to prove Lemma \ref{lemma: dominator set of Oij}, we need to prove that the path $P$ has a vertex which belongs in $\widetilde{O}^{i}_{j} \cup (D_i \cap U_{j+1})$. 
	
	On one hand, if all vertices in $P$ belong in $U_{j+1}$, by the definition of $\widetilde{O}_{j+1}$, there must exist a vertex $u \in D_i \cap U_{j+1}$ on $P$ due to $v \in \widetilde{O}^{i}_{j+1}$. On the other hand, if there is a vertex $p \notin U_{j+1}$ on the path $P$, $p$ belongs in $U_{k}$ ($1 \leq k \leq j$). Hence, the path $P$ would be joint with $\widetilde{O}_j$ which is the output set of $U_j$. Let $w$ be one vertex in $P \cap \widetilde{O}_j $. If the sub-path of $P$ from $w$ to $v$ has a vertex $u$ in $D_i$, then we have $u \in D_i \cap U_{j+1}$ similar to the discussion above. Otherwise, if the sub-path of $P$ from $w$ to $v$ has no vertex in $D_i$, then it is clear that $w$ must belong in $\widetilde{O}^{i}_{j}$. In fact, when the sub-path of $P$ from $w$ to $v$ has no vertex in $D_i$, any path from the input set of $V$ to $w$ must has a vertex in $D_i$. Furthermore, since $w \in \widetilde{O}_j$, any path from the input set of $V$ to $w$ must has non vertex in $D_i \cap U_{k}$ ($k \geq j+1$). Therefore, on each path from the input set of $V$ to $w$, any vertex in $D_i$ must belong in $\cup^{j}_{k=1} (D_i \cap U_k)$. Hence, we have $w \in \widetilde{O}^{i}_{j}$. In conclusion, Lemma \ref{lemma: dominator set of Oij} is valid. 
\end{proof}

With a similar discussion in the proof above, we can find out a dominator set of $V_i \cap U_{j+1}$.

\begin{lemma}
	\label{lemma: dominator set of ViUj}
	$\widetilde{O}^{i}_{j} \cup (D_i \cap U_{j+1})$ is also a dominator of $V_i \cap U_{j+1}$.
\end{lemma}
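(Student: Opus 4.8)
The plan is to mirror the case analysis from the proof of Lemma~\ref{lemma: dominator set of Oij}, simply swapping the target set $\widetilde{O}^{i}_{j+1}$ for $V_i \cap U_{j+1}$. Fix any $v \in V_i \cap U_{j+1}$ and any path $P$ from the input set of $V$ to $v$; the goal is to exhibit a vertex of $P$ lying in $\widetilde{O}^{i}_{j} \cup (D_i \cap U_{j+1})$, which is exactly what it means for this set to be a dominator of $V_i \cap U_{j+1}$.

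First I would split on whether $P$ remains inside $U_{j+1}$. If every vertex of $P$ lies in $U_{j+1}$, then since $v \in V_i$ and $D_i$ is a dominator set of $V_i$, the path $P$ meets $D_i$, and any such intersection vertex is automatically in $D_i \cap U_{j+1}$ — done. Otherwise $P$ has a vertex $p \notin U_{j+1}$; by Definition~\ref{definition: multi-step partition of DAG}, $p$ lies in some $U_k$ with $k \le j$, and since every input vertex of $U_{j+1}$ is an output vertex of $U_j$, the portion of $P$ from $p$ onward must cross $\widetilde{O}_j$. Let $w$ be the last vertex of $P$ lying in $\widetilde{O}_j$, so that every vertex of $P$ strictly after $w$ belongs to $U_{j+1}$.

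Then I would branch again according to whether the sub-path of $P$ from $w$ to $v$ meets $D_i$. If it does, that vertex comes after $w$, hence lies in $U_{j+1}$, so it is in $D_i \cap U_{j+1}$. If it does not, I claim $w \in \widetilde{O}^{i}_{j}$: were there a path from an input of $V$ to $w$ avoiding $D_i$, concatenating it with the $D_i$-free sub-path $w \to v$ would yield a path to $v \in V_i$ avoiding $D_i$, contradicting that $D_i$ dominates $V_i$; hence every path from an input of $V$ to $w$ meets $D_i$. Moreover, because $w \in \widetilde{O}_j$ sits in $U_j$ and the multi-step ordering forbids any vertex of $U_k$ ($k \ge j+1$) from preceding a vertex of $U_j$, every such $D_i$-vertex must lie in $\cup_{k=1}^{j}(D_i \cap U_k)$, which is precisely the defining condition of $\widetilde{O}^{i}_{j}$. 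So $w \in \widetilde{O}^{i}_{j}$, and in all cases $P$ meets $\widetilde{O}^{i}_{j} \cup (D_i \cap U_{j+1})$.

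The one step I would treat with care — and essentially the only place where anything beyond bookkeeping happens — is the claim that a path terminating at $w \in \widetilde{O}_j$ cannot run through a later sub-DAG $U_k$ with $k \ge j+1$. This is where acyclicity of the multi-step partition (Property~4) and the fact that edges flow forward only from the outputs of $U_j$ into the inputs of $U_{j+1}$ are genuinely used; everything else is a verbatim adaptation of the argument already given for Lemma~\ref{lemma: dominator set of Oij}, with ``$v \in V_i$'' playing the role that ``$v \in \widetilde{O}^{i}_{j+1}$'' played there in the first case.
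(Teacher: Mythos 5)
Your proof is correct and is exactly the argument the paper intends: the paper gives no explicit proof of this lemma, stating only that it follows ``with a similar discussion'' from the proof of Lemma~\ref{lemma: dominator set of Oij}, and your write-up is precisely that adaptation (replacing the appeal to the definition of $\widetilde{O}^{i}_{j+1}$ with the fact that $D_i$ dominates $V_i$, and reusing the same $w \in \widetilde{O}_j$ case split). The only cosmetic imprecision --- the $D_i$-vertex on the sub-path from $w$ to $v$ could be $w$ itself, which lies in $U_j$ rather than $U_{j+1}$, but then $w \in D_i \cap U_j$ forces $w \in \widetilde{O}^{i}_{j}$ anyway --- is shared with the paper's own proof of the preceding lemma and does not affect validity.
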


By Lemma \ref{lemma: dominator set of Oij} and Lemma \ref{lemma: dominator set of ViUj}, we can obtain an upper bound of $|V_i|$, which is improtant for I/O complexity analysis under the red-blue pebble game model. 

\begin{theorem}
\label{theorem: upper bound of Vi}
Assume that $\{G_1(U_{1}, E_{1}), \cdots, G_n(U_{n},E_{n}) \}$ is a multi-step partition of a DAG $G(V,E)$. For any S-partition $\{V_{1},\cdots, V_{h} \}$ of $G(V,E)$, $|V_{i}|$ has an upper bound
\begin{eqnarray}
\label{equation: upper bound2 of Vi}
T(S) = S + \max_{\sum^{n}_{j=1} k_j \leq S}  ( \varphi_{1} (k_1) + \varphi_{2} (k_2 + \psi_{1}(k_1)) + \cdots \cdots \nonumber \\ 
+ \varphi_{n} (k_n + \psi_{n-1}(k_{n-1} + \psi_{n-2}(k_{n-2} \cdots +  \psi_{1}(k_{1}))))). 
\end{eqnarray}
\end{theorem}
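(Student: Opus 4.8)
To establish the upper bound $|V_i|\le T(S)$ of Theorem~\ref{theorem: upper bound of Vi}, the plan is to promote the two-step computation sketched in Section~\ref{subsubsection: Basic Idea} to a genuine $n$-step induction, using Lemma~\ref{lemma: dominator set of Oij} and Lemma~\ref{lemma: dominator set of ViUj} to transport the estimate from the $j$-th sub-computation to the $(j+1)$-th. First I would decompose $V_i$: since $D_i$ is a dominator set of $V_i$ it also generates $V_i$, so $V_i\subseteq\varTheta(D_i)$, and every vertex of $V_i$ outside $D_i$ is a non-input vertex of some sub-DAG $G_j(U_j,E_j)$; hence
\[
|V_i| \;\le\; |D_i| + \sum_{j=1}^{n} |\varTheta(D_i)\cap V_i\cap U_j|,
\]
exactly as in the $n=2$ case. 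I then split $D_i$ according to which sub-computation ``produces'' each of its vertices, writing $|D_i| = \sum_{j=1}^{n} k_j$ with $k_j$ the number of dominator vertices produced by $G_j$ (those present at the start being assigned to $G_1$); this is a genuine partition because dominator vertices shared by two consecutive sub-DAGs lie in $\widetilde{O}^{i}_{j}$ and so are not counted again among the internal dominator vertices of the next sub-DAG.

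The heart of the argument is an induction on $j$ showing that $|\varTheta(D_i)\cap V_i\cap U_j| \le \varphi_j(a_j)$ and $|\widetilde{O}^{i}_j| \le \psi_j(a_j)$, where $a_1 = k_1$ and $a_j = k_j + \psi_{j-1}(a_{j-1})$ for $j\ge 2$. For the base case $j=1$, the set $D_i\cap U_1$ is a dominator of both $V_i\cap U_1$ and $\widetilde{O}^{i}_1$, its input budget equals $k_1 = a_1$ (the $\widetilde{O}_0$ term being vacuous), and $\widetilde{O}^{i}_1\subseteq\varTheta(D_i\cap U_1)$, so the definitions \eqref{equation: varphi and psi} of $\varphi_1,\psi_1$ give the claim. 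For the inductive step I would set $D' := \widetilde{O}^{i}_{j}\cup (D_i\cap U_{j+1})$; by Lemma~\ref{lemma: dominator set of Oij} and Lemma~\ref{lemma: dominator set of ViUj}, $D'$ is a dominator of both $\widetilde{O}^{i}_{j+1}$ and $V_i\cap U_{j+1}$. Its input budget $|D'\cap U_{j+1}| + |\varTheta(D')\cap\widetilde{O}_j|$ for the $(j+1)$-th sub-computation splits as at most $k_{j+1}$ internal dominator vertices plus $|\varTheta(D')\cap\widetilde{O}_j| = |\widetilde{O}^{i}_j| \le \psi_j(a_j)$ generated boundary vertices (the equality holds since, by the definition of $\widetilde{O}^{i}_j$, any vertex of $\widetilde{O}_j$ whose every ancestor path meets $D'$ already meets $\cup^{j}_{k=1}(D_i\cap U_k)$), so the budget is at most $a_{j+1}$. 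Feeding $D'$ into \eqref{equation: varphi and psi} and using $V_i\cap U_{j+1}\subseteq\varTheta(D')$ and $\widetilde{O}^{i}_{j+1}\subseteq\varTheta(D')$ then closes the induction.

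Finally I would assemble the pieces: summing the inductive bounds and unfolding the recursion $a_j = k_j + \psi_{j-1}(k_{j-1}+\cdots+\psi_1(k_1))$ gives
\[
|V_i| \;\le\; S + \sum_{j=1}^{n}\varphi_j(a_j) \;=\; S + \varphi_1(k_1) + \varphi_2(k_2+\psi_1(k_1)) + \cdots,
\]
and since $\sum_{j} k_j = |D_i|\le S$, maximizing over all $\{k_j\}$ with $\sum_j k_j\le S$ produces precisely the bound $T(S)$ in \eqref{equation: upper bound2 of Vi}.

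The step I expect to be the main obstacle is the inductive step, and within it the bookkeeping around the shared boundary vertices: verifying that $\varTheta(D')\cap\widetilde{O}_j$ is exactly $\widetilde{O}^{i}_j$, that the split $|D_i| = \sum_j k_j$ double-counts nothing, and that the two pieces of the budget of $D'$ sum to at most $a_{j+1}$ rather than something larger. These are precisely the places where the ``data passing directly through fast memory from one sub-computation to the next'' effect discussed in Section~3 must be controlled, and where the definitions of $\widetilde{O}^{i}_j$ and of the input budget $|D\cap U_j| + |\varTheta(D)\cap\widetilde{O}_{j-1}|$ were crafted so that the recursion closes.
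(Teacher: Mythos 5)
Your proposal is correct and follows essentially the same route as the paper's proof: the same decomposition $|V_i|\le |D_i|+\sum_j|\varTheta(D_i)\cap V_i\cap U_j|$, the same induction on $j$ driven by Lemma~\ref{lemma: dominator set of Oij} and Lemma~\ref{lemma: dominator set of ViUj} with the dominator $\widetilde{O}^{i}_{j}\cup(D_i\cap U_{j+1})$, and the same recursion $a_{j+1}=k_{j+1}+\psi_j(a_j)$ feeding into the definitions of $\varphi_j$ and $\psi_j$. Your extra care about the budget bookkeeping (that $\varTheta(D')\cap\widetilde{O}_j$ reduces to $\widetilde{O}^{i}_j$ and that the split of $|D_i|$ does not double-count boundary vertices) is a point the paper glosses over but does not change the argument.
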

\begin{proof}
For any $V_i$ in the S-partition $\{V_{1},\cdots, V_{h} \}$ of $G(V,E)$ , let $k^i_j= |D_{i} \cap U_{j}|$. In the following, we prove $|V_i| \leq T(S)$ by three steps. First of all, we prove that
\begin{equation}
\label{equation: upper bound of Oij}
|\widetilde{O}^i_j| \leq \psi_{j}(k^i_{j} + \psi_{j-1}(k^i_{j-1} \cdots +  \psi_{1}(k^i_{1})))),
\end{equation}
is valid for any integer $j \in [1, n]$ by using the mathematical induction. When $j =1$, it is obvious that $D_i \cap U_1$ is a dominator set of $\widetilde{O}^i_1$. Since $|D_{i} \cap U_{1}|= k^i_1$, we have $|\widetilde{O}^i_1| \leq \psi_1( k^i_{1} )$. This implies that the inequality (\ref{equation: upper bound of Oij}) is valid for $j =1$. Assume that the inequality (\ref{equation: upper bound of Oij}) is valid for $j \geq 1$. We need to prove that the result is also valid for $j+1$. In fact, by Lemma \ref{lemma: dominator set of Oij}, $\widetilde{O}^{i}_{j} \cup (D_i \cap U_{j+1})$ is a dominator set of $\widetilde{O}^{i}_{j+1}$. Furthermore, using the assumption above, we have
\begin{eqnarray}
\label{equation: upper bound of OikDiUk1}
|\widetilde{O}^i_j \cup (D_i \cap U_{j+1})| & \leq &  |D_i \cap U_{j+1}| + |\widetilde{O}^i_j| \\
& \leq & k^{i}_{j+1} +
\psi_{j}(k^i_{j} + \psi_{j-1}(k^i_{j-1} \cdots +  \psi_{1}(k^i_{1}))). \nonumber
\end{eqnarray}
Furthermore, the definition of $\psi_{j}$ leads to 
\begin{equation}
\label{equation: upper bound of Oik1}
|\widetilde{O}^i_{j+1}| \leq \psi_{j+1}( k^{i}_{j+1} + \psi_{j}(k^i_{j} + \psi_{j-1}(k^i_{j-1} \cdots +  \psi_{1}(k^i_{1})))).
\end{equation}
Thus, the inequality (\ref{equation: upper bound of Oij}) is valid for $j+1$.
	
Next, by the inequality (\ref{equation: upper bound of Oij}), we can further check that the inequality (\ref{equation: upper bound of OikDiUk1}) is always valid. According to Lemma \ref{lemma: dominator set of ViUj}, $\widetilde{O}^{i}_{j} \cup (D_i \cap U_{j+1})$ is also a dominator set of $\varTheta(D_i) \cap V_i \cap U_{j+1}$. Hence, we have 
\begin{equation}
\label{equation: upper bound of ViU1}
|\varTheta(D_i) \cap V_i \cap U_{1}| \leq \varphi_{1} (|D_{i} \cap U_1|)  \leq  \varphi_{1} (k^{i}_{1}),
\end{equation}
and
\begin{eqnarray}
\label{equation: upper bound of ViUj1}
|\varTheta(D_i) \cap V_i \cap U_{j+1}| \leq  \varphi_{j+1} (|\widetilde{O}^{i}_{j} \cup (D_i \cap U_{j+1})|) \nonumber \\
\leq 
\varphi_{j+1} (k^i_{j+1} + \psi_{j}(k^i_{j} + \psi_{j-1}(k^i_{j-1} \cdots +  \psi_{1}(k_{1})))). 
\end{eqnarray}

Finally, since $V = \cup^{n}_{j =1} U_{j}$, $V_i = \cup^{n}_{j =1} (V_i \cap U_{j})$ is valid. As each vertex of $D_i$ can possibly be a vertex in $V_i$, we have
\begin{displaymath}
|V_i| \leq S + | \varTheta(D_i) \cap V_i \cap U_{1} | + | \varTheta(D_i) \cap V_i \cap U_{2}| + \cdots +| \varTheta(D_i) \cap V_i \cap U_{n}|,
\end{displaymath}
which together with (\ref{equation: upper bound of ViU1}) and (\ref{equation: upper bound of ViUj1}), implies that $|V_i| \leq T(S)$ is valid.
\end{proof}

\subsubsection{I/O Lower Bound Result For Composite Codes}
\label{subsubsection: IO lower bound results}

\begin{theorem}
	\label{theorem: general IO lower bound result}
	Assume that a DAG $G(V,E)$ describes an algorithm with $n$ steps. All sub-computations in $n$ steps are corresponding to a multi-step partition of the DAG.
	Given a fast memory of size $S$, to finish the algorithm, the minimum number $Q$ of I/O operations between the fast memory and the slow memory satisfies
	\begin{equation}
		\label{equation: general IO lower bound result}
		Q \geq S \cdot \left(\dfrac{|V|}{T(2S)} -1 \right).
	\end{equation}
\end{theorem}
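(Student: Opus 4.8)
The plan is to obtain the statement as a direct consequence of Theorem~\ref{theorem: original lower bound}, Theorem~\ref{theorem: upper bound of Vi}, and the elementary bound $P(S)\ge H(S)$ recorded in Section~\ref{subsubsection: Basic Idea}. Concretely, I would chain three inequalities. First, recall why $P(2S)\ge H(2S)$: fix a $2S$-partition $\{V_1,\dots,V_h\}$ attaining $h=P(2S)$ subsets; since the $V_i$ partition $V$, we have $|V|=\sum_{i=1}^{h}|V_i|\le h\cdot \max_{1\le i\le h}|V_i| = P(2S)\cdot\max_i|V_i|$, hence $P(2S)\ge |V|/\max_i|V_i|\ge H(2S)$ by the definition~(\ref{equation: HS}) of $H$. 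This reproduces the second inequality of~(\ref{equation: QQ}).

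Second, I would invoke Theorem~\ref{theorem: upper bound of Vi} with the parameter $S$ replaced by $2S$: for the multi-step partition $\{G_1(U_1,E_1),\dots,G_n(U_n,E_n)\}$ of $G(V,E)$ induced by the $n$ sub-computations, \emph{every} $2S$-partition $\{V_1,\dots,V_h\}$ of $G(V,E)$ satisfies $\max_{1\le i\le h}|V_i|\le T(2S)$, where $T(\cdot)$ is the function defined in~(\ref{equation: upper bound2 of Vi}). Consequently $|V|/\max_i|V_i|\ge |V|/T(2S)$ for each such partition (note $T(2S)\ge 2S\ge 1>0$, so the quotient is well defined), and taking the minimum over all $2S$-partitions gives $H(2S)\ge |V|/T(2S)$.

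Third, I would substitute these two bounds into Theorem~\ref{theorem: original lower bound}: the minimum I/O time $Q$ of any complete red-blue pebble game on $G(V,E)$ with at most $S$ red pebbles satisfies
\begin{equation}
Q \;\ge\; S\cdot\bigl(P(2S)-1\bigr) \;\ge\; S\cdot\bigl(H(2S)-1\bigr) \;\ge\; S\cdot\Bigl(\tfrac{|V|}{T(2S)}-1\Bigr),
\nonumber
\end{equation}
which is exactly~(\ref{equation: general IO lower bound result}). I do not anticipate a genuine obstacle here, since the theorem is essentially a corollary packaging the earlier results; the only points requiring a little care are the index shift from $S$ to $2S$ (so that Theorem~\ref{theorem: upper bound of Vi} aligns with the $P(2S)$ appearing in Theorem~\ref{theorem: original lower bound}), the fact that the upper bound $T(2S)$ holds \emph{uniformly} over all $2S$-partitions (needed before one may take the minimum in the definition of $H$), and the harmless observation that $T(2S)>0$ so that dividing by it is legitimate. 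The real work — deriving the recursive bound $|V_i|\le T(S)$ via the vertex-generation functions $\varphi_j,\psi_j$ — has already been carried out in Theorem~\ref{theorem: upper bound of Vi}.
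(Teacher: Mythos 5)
Your proposal is correct and follows essentially the same route as the paper's own proof: the paper likewise combines $Q \geq S\cdot(H(2S)-1)$ from Equations (\ref{equation: HS}) and (\ref{equation: QQ}) with the uniform bound $\max_{1\leq i\leq h}|V_i| \leq T(2S)$ supplied by Theorem~\ref{theorem: upper bound of Vi} for every partition in $\mathbb{P}_{2S}$. The only difference is that you spell out the justification of $P(2S)\geq H(2S)$ (via $|V|=\sum_i|V_i|\leq h\cdot\max_i|V_i|$), which the paper merely asserts when defining $H$; this is a harmless and welcome addition.
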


\begin{proof}
Based on Equations (\ref{equation: HS}) and (\ref{equation: QQ}), we have 
\begin{equation}
\label{equation: QQQ}
Q \geq S \cdot \left(\min_{\{V_{1},\cdots, V_{h}\} \in \mathbb{P}_{2S} }  \dfrac{|V|}{\max_{1\leq i \leq h} |V_{i}|} -1 \right).
\end{equation}
For any $\{V_{1},\cdots, V_{h}\} \in \mathbb{P}_{2S}$, 
Theorem \ref{theorem: upper bound of Vi} directly leads to
$\max_{1\leq i \leq h} |V_{i}| \leq T(2S)$, which, together with Equation (\ref{equation: QQQ}), implies that
$Q \geq S * ({|V|}/{T(2S)} -1)$ is valid.
\end{proof}

Theorem \ref{theorem: general IO lower bound result} concludes how the I/O lower bound of any composite algorithm depends on the upper bounds of $\varphi_j$ and $\psi_j$. It not only gives a general theoretical result, but also presents a lower bound proof method which is to estimate $\varphi_j$ and $\psi_j$ one by one. In addition, although Equation (\ref{equation: general IO lower bound result}) is similar to Equation (\ref{equation: original lower bound}), it is easier to obtain $T$ for a composite code. 

\subsection{I/O Lower Bounds for Direct Convolution}

\begin{figure}
	\centering
	\includegraphics[scale=0.3]{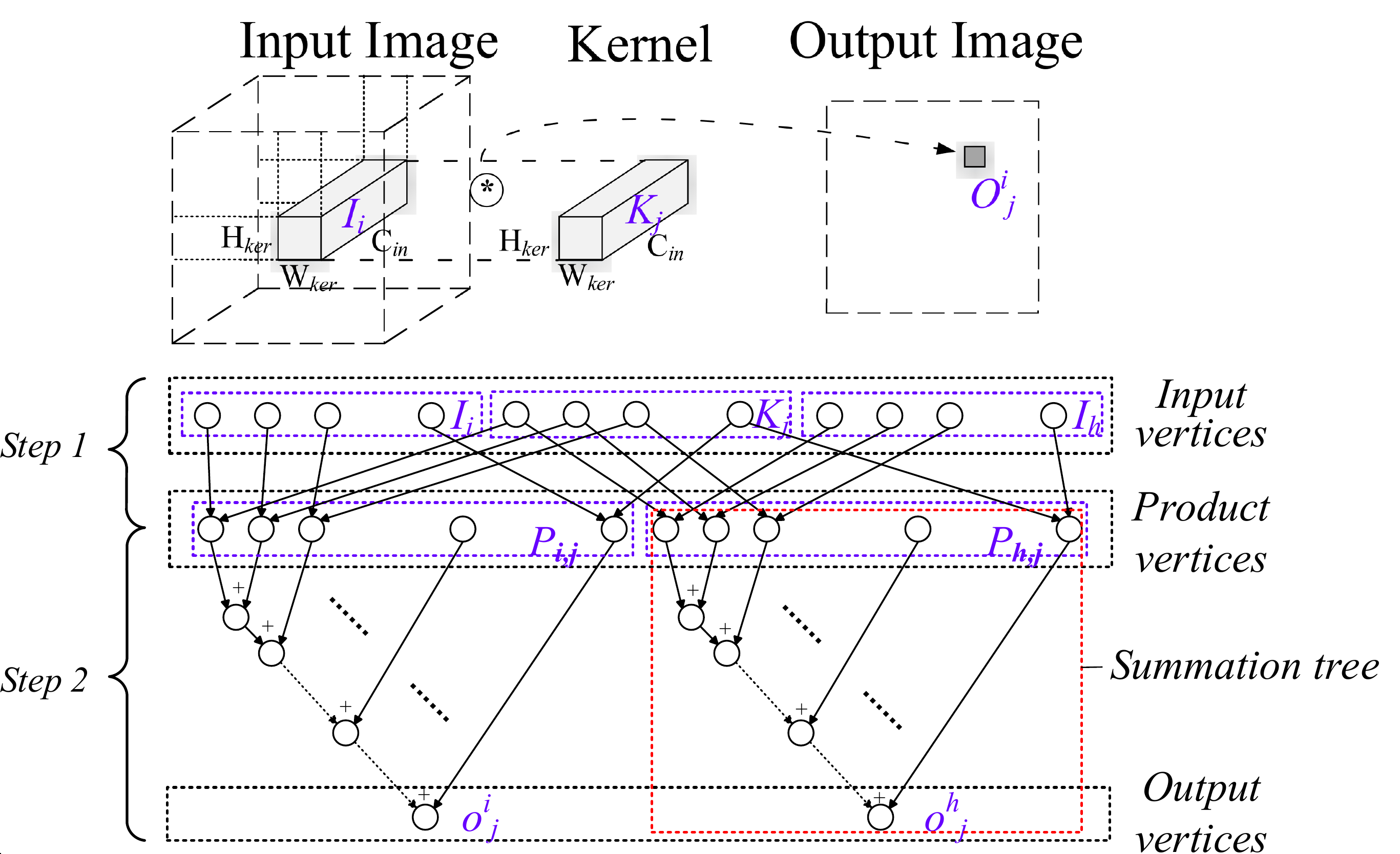}
	\caption{DAG of Direct Convolution.
		\label{fig:DAGofDirectConvolution}}
\end{figure}

Figure \ref{fig:DAGofDirectConvolution} shows a DAG $G(V,E)$ of the direct convolution. It is clear that the direct convolution consists of two steps. The first step is to generate a lot of product terms by using inputs in the input images and kernels. In DAG $G(V,E)$, we call the product vertices as the vertices which are corresponding to the product terms generated by the first step in the direct convolution. Denote $I_i$ as the $i$-th sliding input tensor with the size of $W_{ker} \times H_{ker} \times C_{in}$ from an input image by using a sliding window. Let $K_j$ be the $j$-th kernel whose size is also $W_{ker} \times  H_{ker} \times C_{in}$. For each $I_i$ and $K_j$, the first step generates $W_{ker} H_{ker} C_{in}$ product terms by executing the corresponding element product of $I_i$ with $K_j$. The second step is to sum the product terms generated by $I_i$ and $K_j$ to form one of final outputs based on a summation tree. The summation tree is a sub-DAG with the tree structure in which, except for all input vertices, the in-degree of other vertices is at most two, and all inputs of the tree would be summed together to the only one output (Figure \ref{fig:DAGofDirectConvolution}). After the summation process, the direct convolution is finished. Hence, the multi-step partition of $G(V,E)$ can be written as $G(V,E) = G_1(U_1,E_1) \cup G_2(U_2,E_2)$ where the sub-DAG $G_i(U_i,E_i)$ is corresponding to the $i$-th step of the direct convolution. \begin{lemma}
\label{lemma: the number of vertices on a summation tree}
A summation tree with $k$ input vertices involves $k-2$ internal vertices and $1$ output.
\end{lemma}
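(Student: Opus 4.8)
The plan is to prove this by induction on the number $k$ of input vertices, after first pinning down the model: a summation tree is a rooted tree whose edges are directed toward the unique output (the root), whose leaves are exactly the $k$ inputs, and in which every non-input vertex is an addition with in-degree exactly two. The definition only demands in-degree \emph{at most} two, but a vertex with a single predecessor would merely copy a partial sum and could be spliced out; since the tree has no shared subcomputations and must combine all $k$ inputs into one output, every internal vertex genuinely adds two partial sums, so in-degree two is forced. I would also note the implicit hypothesis $k \geq 2$, since for $k=1$ the input is already the output and the count $k-2$ is not meaningful.

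For the base case $k = 2$, the tree is two inputs feeding a single addition, so there are $0 = k-2$ internal vertices and one output. For the inductive step, assume the claim for $k-1 \geq 2$ inputs and let $T$ be a summation tree with $k$ inputs. Among the non-input vertices (a nonempty set, since it contains the root) pick a deepest one, $w$; both predecessors of $w$ must then be inputs, for an internal predecessor would be deeper than $w$, and when $k \geq 3$ this $w$ cannot be the root. Deleting these two inputs turns $w$ into a leaf and yields a summation tree $T'$ with $k-1$ inputs, whose output is still the original output of $T$ (distinct from $w$, since $w$ retains its parent). By the induction hypothesis $T'$ has $k-3$ internal vertices and one output; reattaching the two deleted inputs converts $w$ from a leaf of $T'$ back into an internal vertex of $T$ and changes nothing else, so $T$ has $(k-3)+1 = k-2$ internal vertices and the same single output.

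An equivalent and slicker argument I could give instead is a double counting of edges. Writing $N$ for the total number of vertices, the tree $T$ has $N-1$ edges; on the other hand, summing in-degrees gives $2(N-k)$ edges, because each of the $N-k$ non-input vertices has in-degree exactly two while the $k$ inputs have in-degree zero. Equating $N-1 = 2(N-k)$ gives $N = 2k-1$, hence $N-k = k-1$ non-input vertices, of which one is the output and the remaining $k-2$ are internal.

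The main obstacle is not the counting but nailing down the model: one must argue carefully that the "at most two" clause can be strengthened to "exactly two" at every internal vertex — equivalently, that there are no degenerate pass-through vertices and that a deepest pair of sibling inputs always exists — and that the case $k = 1$ is excluded. Once the model is fixed, either the induction or the handshake computation closes the proof in a line.
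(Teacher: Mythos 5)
Your proof is correct, and it takes a genuinely different and more general route than the paper's. The paper's own argument simply walks through one particular tree shape --- the sequential accumulation in which the first two inputs are summed, then each successive input is added to the running partial sum --- and counts the internal vertices produced along the way; it never addresses summation trees of other shapes (e.g.\ balanced binary reductions), even though the lemma is later applied to summation trees whose shape is not specified. Your two arguments (induction on a deepest internal vertex with two input predecessors, and the handshake count $N-1 = 2(N-k)$ giving $N = 2k-1$) both establish the count for an \emph{arbitrary} binary summation tree, which is what the downstream lemmas actually need. You are also right to flag the two modeling points the paper glosses over: the ``in-degree at most two'' clause must be strengthened to ``exactly two'' (a pass-through vertex of in-degree one would break the count of $k-2$), and the case $k=1$ must be excluded. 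The paper's proof buys brevity and concreteness at the cost of only verifying one tree shape; yours buys generality and rigor at the cost of a short detour to pin down the model. Either is acceptable, but yours is the stronger justification of the lemma as it is actually used.
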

\begin{proof}
For a summation tree with $k$ input vertices, the summation of the first two vertices would generate the first internal vertex. Next, we add the first internal vertex with the third input vertex, resulting the second internal vertex of the summation tree. Furthermore, we
continue the process above. After the $(k-2)$-th internal vertex is added with the $k$-th input vertex, the final output would be generated. Hence, there are $(k-2)$ internal vertices and $1$ output vertex on a summation tree. 
\end{proof}

Based on the summation tree structure, we can calculate the total number of internal and output vertices in the DAG of a direct convolution.

\begin{lemma} 
	\label{lemma: the number of vertices in DAG of DC}
	In a DAG of any direct convolution, the total number of internal and output vertices is 
	\begin{displaymath}
		| V_{inter} \cup V_{out} |= (2W_{ker} H_{ker} C_{in}-1) W_{out} H_{out} C_{out}.
	\end{displaymath}
\end{lemma}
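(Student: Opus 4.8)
The plan is to count the internal and output vertices one output at a time and then multiply by the number of outputs, using Lemma~\ref{lemma: the number of vertices on a summation tree} to handle the summation part. First I would fix a single output element. By the DAG description, it is produced from exactly one sliding input tensor $I_i$ and exactly one kernel $K_j$, each of size $W_{ker} H_{ker} C_{in}$. The first step contributes one product vertex for every coordinate of the element-wise product of $I_i$ with $K_j$, i.e. $W_{ker} H_{ker} C_{in}$ product vertices, all internal. Setting $k := W_{ker} H_{ker} C_{in}$, the second step feeds these $k$ product terms into a summation tree, which by Lemma~\ref{lemma: the number of vertices on a summation tree} adds $k-2$ internal vertices and exactly $1$ output vertex. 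Hence one output accounts for $W_{ker} H_{ker} C_{in} + (W_{ker} H_{ker} C_{in} - 2) + 1 = 2 W_{ker} H_{ker} C_{in} - 1$ vertices of $V_{inter} \cup V_{out}$ (the input vertices of $G$, namely the image pixels and kernel weights, are not counted here).

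Next I would verify there is no double counting across different outputs. A product vertex records the product of one specific entry of one specific sliding window $I_i$ with one specific entry of one specific kernel $K_j$, so it lies only on paths leading to the $(i,j)$-th output; therefore distinct pairs $(I_i, K_j)$ contribute disjoint sets of product vertices. Likewise, the internal and output vertices of a summation tree are, by construction, private to that tree. Since the number of pairs $(I_i, K_j)$ equals the number of outputs, which is $W_{out} H_{out} C_{out}$, summing the per-output count yields $|V_{inter} \cup V_{out}| = (2 W_{ker} H_{ker} C_{in} - 1) W_{out} H_{out} C_{out}$, as claimed.

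The only delicate point, and the one I expect to be the main obstacle, is making the ``no sharing'' argument fully rigorous from the construction in Figure~\ref{fig:DAGofDirectConvolution}: one must confirm that the product vertices associated with one sliding window are genuinely \emph{not} identified with those of an overlapping neighbouring window (the overlap occurs among input vertices, not among product vertices), and that each summation tree is a fresh copy. A secondary caveat is the degenerate regime $W_{ker} H_{ker} C_{in} \le 2$, where the summation tree is empty or trivial; there the formula is read with the convention that no summation vertices appear, or one simply assumes $W_{ker} H_{ker} C_{in} \ge 2$, which always holds in practice. Everything else reduces to the routine multiplication above.
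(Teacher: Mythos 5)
Your proposal is correct and follows essentially the same route as the paper's own proof: count $W_{ker}H_{ker}C_{in}$ product vertices plus the $W_{ker}H_{ker}C_{in}-2$ internal vertices and $1$ output vertex of each summation tree via Lemma~\ref{lemma: the number of vertices on a summation tree}, then multiply by the $W_{out}H_{out}C_{out}$ independent outputs. Your explicit remarks on the disjointness of product vertices across overlapping sliding windows and on the degenerate case $W_{ker}H_{ker}C_{in}\le 2$ are slightly more careful than the paper's one-line independence claim, but the argument is the same.
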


\begin{proof}
It is obvious that the DAG of a direct convolution has $W_{out} H_{out} C_{out}$ output vertices. Each output is the summation of the corresponding element product of $I_i$ and $K_j$, where $I_i$ is the $i$-th sliding input tensor and $K_j$ is the $j$-th kernel. $I_i$ and $K_j$ have the same dimension of $W_{ker} \times H_{ker} \times C_{in}$.
	
Firstly, the two tensors $I_i$ and $K_j$ are associated with two input vertex sets. Secondly, after executing the corresponding element product of $I_i$ and $K_j$, we can obtain $W_{ker} H_{ker} C_{in}$ product vertices which are $W_{ker} H_{ker} C_{in}$ of the outputs of $G_1(U_1,E_1)$. Thirdly, Lemma \ref{lemma: the number of vertices on a summation tree} indicates that, in order to sum $W_{ker} H_{ker} C_{in}$ internal results based on the summation tree,
the second step would generate another $W_{ker} H_{ker} C_{in}-2$ internal vertices and $1$ output vertex. Consequently, one output vertex depends on $2 W_{ker} H_{ker} C_{in} -2$ internal vertices.

Since each output vertex is generated independently, no internal vertex would be shared by two different summation trees. Hence, the total number of internal and output vertices in DAG is 
$(2W_{ker} H_{ker} C_{in}-1) W_{out} H_{out} C_{out}.$
\end{proof}

For any two tensors $a$ and $b$ with the same dimension, denote $a \circledast b$ as the summation of all corresponding element products of $a$ and $b$. By this notation, the $i$-th output vertex $O^{j}_{i}$ in the $j$-th output channel can be represented as $O^{j}_{i} = I_i \circledast K_j$ (Figure \ref{fig:DAGofDirectConvolution}). Before estimating the upper bound $T(S)$, we denote $R$ as the maximum reuse number of each input (element) in an input image by different silding windows, whose value is
\begin{equation}
\label{equation: definition of R}
R = \dfrac{W_{ker}H_{ker}}{\mu^2},
\end{equation}
where $\mu$ is the stride size. We will see that the $T(S)$ relies on $R$.

\begin{lemma}
\label{lemma: upper bound of ViU1 for DC}
In a direct convolution, $\psi_1 = \varphi_1$ is valid for the first step. Futhermore, for any positive integer $h_1$, $\varphi_1(h_1) \leq 2S\sqrt{R h_1}$.
\end{lemma}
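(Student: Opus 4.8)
The plan is to handle the identity $\psi_1=\varphi_1$ and the quantitative bound separately, obtaining the latter by directly counting how many product vertices a dominator set can generate in the first step.

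\textbf{The identity $\psi_1=\varphi_1$.} In the multi-step partition $G(V,E)=G_1(U_1,E_1)\cup G_2(U_2,E_2)$, every non-input vertex of $U_1$ is a product vertex, and every product vertex is consumed by a summation tree in $G_2$; hence the output set $\widetilde O_1$ of the first step is exactly its set of product vertices, i.e.\ it coincides with the non-input part of $U_1$. Reading off the definitions of $\widetilde\varphi_1(U,k)$ and $\widetilde\psi_1(U,k)$, both count precisely the product vertices of $U$ that are generated by the dominator $D$ (an input vertex of $U_1$ lies in $\varTheta(D)$ only when it is already in $D$, and is accounted for by the $S$ term), so $\widetilde\varphi_1=\widetilde\psi_1$ and therefore $\psi_1=\varphi_1$.

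\textbf{The bound $\varphi_1(h_1)\le 2S\sqrt{Rh_1}$.} Fix $U$ and a dominator $D$ of $U$ realising $\varphi_1(h_1)$; by Property~2 of an $S$-partition $|D|\le S$, and by definition $|D\cap U_1|\le h_1$. In $G_1$ the only paths from the global inputs to a product vertex $v$ are the two length-one edges from the two parents of $v$ (one input-image element, one kernel element), so $v\in\varTheta(D)$ iff $v\in D$ or both parents of $v$ lie in $D$. Thus
\[
|\varTheta(D)\cap U\cap U_1|\ \le\ |D|+N,\qquad N:=\#\{\text{product vertices with both parents in }D\}.
\]
The first term is $\le S$. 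For $N$, the map sending such a vertex to the ordered pair (image-element parent, kernel-element parent) is injective, and such a pair --- an image element at $(p,q,c)$ and a kernel element at coordinate $(x,y,c')$ --- can occur only if $c=c'$ and the induced sliding window $\big((p-x)/\mu,(q-y)/\mu\big)$ is valid; for a fixed image element this admits at most $R=W_{ker}H_{ker}/\mu^2$ kernel coordinates, which is exactly the reuse number $R$. I would then group the count of $N$ by the image elements of $D$, apply $\min(R,t)\le\sqrt{Rt}$ on each group, and use a Cauchy--Schwarz step against the at most $h_1$ kernel elements of $D$ (together with the fact that $D$ has at most $S$ image elements), expecting to reach $N\le S\sqrt{Rh_1}$. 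Combined with $|D|\le S$ and $R,h_1\ge 1$ this gives $|\varTheta(D)\cap U\cap U_1|\le S+S\sqrt{Rh_1}\le 2S\sqrt{Rh_1}$, and taking the supremum over $U$ proves the lemma.

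\textbf{Main obstacle.} The delicate point is the estimate for $N$: the trivial bound $N\le ab$ over the $a$ image and $b$ kernel elements of $D$ only yields $N\le(h_1/2)^2$, which is far too weak. The argument must genuinely combine the sliding-window reuse structure (each image element pairs with at most $R$ kernel coordinates) with the channel-matching constraint, and must prevent the number of kernels / sliding windows from surfacing in the final estimate --- absorbing that book-keeping into the factor $S$ --- in order to produce the $\sqrt{Rh_1}$ factor. Getting the constant (here $2$) right is where the accounting has to be carried out carefully.
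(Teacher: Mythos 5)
Your identity $\psi_1=\varphi_1$ matches the paper's (the first step has no internal vertices, so its generated vertices and its outputs coincide). The quantitative bound, however, has a genuine gap, and you have correctly located it yourself: the estimate $N\le S\sqrt{Rh_1}$ that your plan hinges on is not just delicate, it is false as stated. Your $N$ counts \emph{all} product vertices with both parents in $D$, with no reference to $U$, and you propose to bound it using only the structure of $D$ (the reuse factor $R$ and $|D|\le S$). But a single kernel coordinate $(x,y,c)$ corresponds to $C_{out}$ distinct kernel elements, one in each kernel $K_j$, so a fixed image element can pair with up to $R\,C_{out}$ kernel elements, not $R$; your phrase ``at most $R$ kernel coordinates'' silently identifies coordinates with elements. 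Concretely, put $\lfloor h_1/2\rfloor$ kernel elements at the same coordinate $(x,y,c)$ across that many different kernels into $D$, together with $\lceil h_1/2\rceil$ image elements in channel $c$ each compatible with $(x,y)$; every image/kernel pair yields a distinct product vertex, so $N\approx h_1^2/4$, which exceeds $2S\sqrt{Rh_1}$ once $h_1=S>64R$ (easily satisfied since $R=W_{ker}H_{ker}/\mu^2$ is small). No grouping or Cauchy--Schwarz step can rescue a bound that is violated by an explicit configuration.

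The missing ingredient is the minimum set $M$. The paper's proof uses $|M|\le S$ together with the disjointness of the summation trees to conclude that $U$ can meet at most $S$ summation trees, i.e.\ the product vertices of $\varTheta(D)\cap U\cap U_1$ live over at most $S$ output pairs $(i,j)$; in the counterexample above this caps the count at $S$ trees even though $N$ itself is quadratic. The paper then groups by the $\le S$ sliding input tensors $I_{i_k}$ attached to those outputs and splits them into those with at least $\sqrt{Rh_1}$ entries in $D\cap U_1$ (of which there are at most $\sqrt{Rh_1}$, by the reuse argument you do have) and the rest, each class contributing at most $S\sqrt{Rh_1}$ products. To repair your argument you must replace $N$ by the count of product vertices of $U$ itself and bring the $|M|\le S$ constraint into the estimate; as written, the proof does not go through.
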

\begin{proof}
Let $U$ be any vertex set whose dominator set $D$ and minimun set $M$ contain at most $S$ vertices. Suppose $|D \cap U_{1}| \leq h_1 $. In order to estimate $\varphi_1$ and $\psi_1$, we consider how many vertices in $U \cap U_1$ can be generated by $D \cap U_1$. Since $G_1(U_1,E_1)$ has no internal vertices, all vertices generated by $D \cap U_{1}$, can be used as the inputs of $G_2(V_2,E_2)$. Hence, $\psi_1 = \varphi_1$ is valid. 

Since $|M| \leq S$ and the internal vertex sets of different summation trees are disjoint from each other, $U$ can have nonempty intersections with internal vertex sets of at most $S$ summation trees, and each intersection has at least one distinct vertex in the minimum set. For any $\widetilde{O} = \{O^{j_1}_{i_1}, O^{j_2}_{i_2}, \cdots, O^{j_S}_{i_S} \}$ with $S$ output vertices, each vertex can be formed by $O^{j_k}_{i_k} = I_{i_k} \circledast K_{j_k}$ (Figure \ref{fig:DAGofDirectConvolution}). Without loss of generality, we assume that each $I_i$ in the set $I^1 =\{I_{i_1},I_{i_2},\cdots, I_{i_{k_0}} \}$ has at least $\sqrt{R h_1}$ entries in $D \cap U_1$, while each $I_{i'}$ in the reset $I^2 = \{I_{k_0+1},I_{k_0+2},\cdots, I_{i_{S}} \}$ has at most $\sqrt{R h_1}$ entries in $D \cap U_1$.

On one hand, for the set $I^1 = \{I_{i_1},I_{i_2},\cdots, I_{i_{k_0}} \}$, we can prove that $k_0 \leq \sqrt{R h_1}$. In the following, we verify this fact by reductio ad absurdum. Assume that $k_0 > \sqrt{R h_1}$. Since each $I_i \in I^1$ has at least $\sqrt{RS}$ entries in $D \cap U_1$, the set $I^1$ involves at least $k_0\sqrt{RS}$ vertices belonging to $D \cap U_1$, while some of these vertices may be the same. As each input vertices can be reused at most $R$ times, there exist at least $k_0\sqrt{RS}/R$ independent vertices in $D \cap U_1$, and $ k_0\sqrt{RS}/R > h_1$, which contradicts with the fact $|D \cap U_1| \leq h_1$. Thus, the assumption is not valid, and $k_0 \leq \sqrt{R h_1}$ is valid. Due to $|D| \leq S$, $\cup^{S}_{k=1}K_{j_k}$ involves no more than $S$ vertices in $D$. Hence, the vertices in $D \cap (\cup^{k_0}_{k=1}I_{i_k})$ and $D \cap (\cup^{S}_{k=1}K_{j_k})$ can generate at most $S\sqrt{R h_1}$ products for $O^{j_k}_{i_k} (k=1,2,\cdots, k_0)$. On the other hand, for the set $I^2 = \{I_{k_0+1},I_{k_0+2},\cdots, I_{i_{S}} \}$, since each $I_i \in I^2$ has at most $\sqrt{R h_1}$ entries in $D$, at most $S\sqrt{R h_1}$ products can be formed by the vertices in $D \cap (\cup^{i_S}_{k=k_0+1}I_{i_k})$ and $D \cap (\cup^{S}_{k=1}K_{j_k})$. In conclusion, $D \cap U_1$ can generate at most $2S\sqrt{R h_1}$ in $U \cap U_1$. This means that $\varphi(h_1) \leq 2S\sqrt{R h_1}$ is valid.
\end{proof}

\begin{lemma}
\label{lemma: upper bound of ViU2 for DC}
In a direct convolution, for any positive integer $h_2$, $\varphi_2(h_2) \leq h_2-1$.
\end{lemma}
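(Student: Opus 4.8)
The plan is to reduce the statement to an elementary fact about forests, using that the second sub-DAG $G_2(U_2,E_2)$ of the direct convolution is a disjoint union of summation trees (Lemma~\ref{lemma: the number of vertices on a summation tree}), in which every non-input vertex has in-degree exactly two — it adds one more product term to a running partial sum. So the heart of the argument will be: in a forest where every non-leaf vertex has in-degree at most two, a dominator set whose ``weight for the second step'', $|D\cap U_2| + |\varTheta(D)\cap\widetilde{O}_1|$, is at most $h_2$ can generate at most $h_2-1$ vertices inside $U_2$.

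First I would fix the set of vertices that act as inputs to the $U_2$-computation: the vertices $D\cap U_2$ that are themselves in the second step, together with the product vertices $\varTheta(D)\cap\widetilde{O}_1$ that $D$ generates, of which there are at most $h_2$ in total by hypothesis. Arguing exactly as in the proof of Lemma~\ref{lemma: dominator set of Oij} — trace any path reaching a vertex of $U_2$ back through $\widetilde{O}_1$ — one checks that every vertex of $\varTheta(D)\cap U_2$ is already generated, inside the forest $G_2$ alone, by this input set. Then I would invoke the standard closure property: if a generated vertex $v\in\varTheta(D)\cap U_2$ is not itself an input, then appending the edge $u\to v$ to any path reaching a predecessor $u$ of $v$ shows $u\in\varTheta(D)$ too; hence, writing $W$ for the generated vertices of $U_2$ other than the inputs, the subgraph induced on $W$ together with the at most $h_2$ inputs is a subforest in which every vertex of $W$ has in-degree exactly two. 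The conclusion is then a one-line edge count: a subforest on $|W| + h_2$ vertices has at most $|W|+h_2-1$ edges, while the in-degree-two property forces at least $2|W|$ edges, so $|W|\le h_2-1$. (Alternatively, splitting the $h_2$ available inputs among the individual summation trees and applying Lemma~\ref{lemma: the number of vertices on a summation tree} tree by tree gives the same bound.)

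The delicate point — and the only one I expect to require care — is the interface bookkeeping between the two steps: one has to make sure that a vertex of $U_2$ which is dominated by $D$ over the whole DAG $G$ is genuinely captured by the at-most-$h_2$ inputs $(D\cap U_2)\cup(\varTheta(D)\cap\widetilde{O}_1)$, and that the few vertices of $D$ lying inside $U_2$ are not allowed to produce a generated vertex ``for free'' (so that the count comes out as $h_2-1$ rather than $h_2$). Once the right input set is isolated and shown to behave like a set of sources for $G_2$, the edge-counting argument is routine, and in fact considerably simpler than the corresponding analysis for the first step in Lemma~\ref{lemma: upper bound of ViU1 for DC}.
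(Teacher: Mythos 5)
Your proposal is correct and follows essentially the same route as the paper: identify $(D\cap U_2)\cup(\varTheta(D)\cap\widetilde{O}_1)$, of size at most $h_2$, as the effective input set for the summation-tree forest $G_2$, and then bound the generated vertices by the tree structure — your in-degree/edge-count on the subforest is just a slightly more formal packaging of the paper's per-tree application of Lemma~\ref{lemma: the number of vertices on a summation tree}, which you also note as the alternative. The interface point you flag (that $D\cap U_2$ vertices act only as inputs and are not counted as generated output) is handled the same way, if more tersely, in the paper's proof.
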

\begin{proof}

Assume that a vertex set $U$ has a dominator set $D$ and a minimun set $M$ satisfying $|D \cap U_{2}| + | \varTheta(D) \cap \widetilde{O}_{1} | \leq h_2 $, $|D| \leq S$ and $|M| \leq S$. To deduce the upper bound of $\varphi_2$, we only need to consider the vertices in $U \cap U_2$ which can be formed by $D \cap U_2$ and $\varTheta(D) \cap \widetilde{O}_{1}$ . As $|D \cap U_{2}| + | \varTheta(D) \cap \widetilde{O}_{1} | \leq h_2 $, at most $h_2$ vertices would be the inputs of summation trees. By Lemma \ref{lemma: the number of vertices on a summation tree}, such $h_2$ vertices can generate at most $h_2-1$ internal vertices in the intersections of $V_i$ with $S$ summation trees. Therefore, we get $\varphi_2(h_2) \leq h_2-1$.
\end{proof}

Lemma \ref{lemma: upper bound of ViU1 for DC} and Lemma \ref{lemma: upper bound of ViU2 for DC} lead to an estimation of $T(S)$ directly.

\begin{lemma}
	\label{lemma: upper bound of TS for DC}
	For a direct convolution, $T(S)\leq 4 S \sqrt{RS} + S - 1$.
\end{lemma}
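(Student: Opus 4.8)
The plan is to specialize Theorem~\ref{theorem: upper bound of Vi} to the two-step case, which applies since the direct convolution's DAG decomposes as $G(V,E) = G_1(U_1,E_1)\cup G_2(U_2,E_2)$, and then substitute the two maximum vertex generation bounds just proved. With $n=2$, Theorem~\ref{theorem: upper bound of Vi} gives
\[
T(S) = S + \max_{k_1+k_2\le S}\bigl(\varphi_1(k_1) + \varphi_2(k_2+\psi_1(k_1))\bigr).
\]
By Lemma~\ref{lemma: upper bound of ViU1 for DC}, $\psi_1=\varphi_1$ and $\varphi_1(k_1)\le 2S\sqrt{Rk_1}$; by Lemma~\ref{lemma: upper bound of ViU2 for DC}, $\varphi_2(m)\le m-1$. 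Hence for every admissible pair $(k_1,k_2)$,
\[
\varphi_1(k_1)+\varphi_2\bigl(k_2+\psi_1(k_1)\bigr)\le 2S\sqrt{Rk_1} + \bigl(k_2 + 2S\sqrt{Rk_1} - 1\bigr) = 4S\sqrt{Rk_1} + k_2 - 1,
\]
so $T(S)\le S + \max_{k_1+k_2\le S}\bigl(4S\sqrt{Rk_1}+k_2-1\bigr)$. (The degenerate case $k_2+\psi_1(k_1)=0$ forces $k_1=0$ and contributes only $0\le 4S\sqrt{RS}+S-1$, so it can be ignored.)

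Next I carry out a one-variable optimization over the simplex $\{k_1+k_2\le S\}$. Using $k_2\le S-k_1$, it suffices to bound $g(k_1):=4S\sqrt{Rk_1}-k_1$ on $[0,S]$. Since $g'(k_1)=2S\sqrt{R/k_1}-1$, and $R=W_{ker}H_{ker}/\mu^2\ge 1$ together with $S\ge 1$ give $g'(k_1)\ge 2\sqrt{RS}-1\ge 0$ throughout $(0,S]$, the function $g$ is nondecreasing and attains its maximum at $k_1=S$, so $g(k_1)\le 4S\sqrt{RS}-S$. Therefore $4S\sqrt{Rk_1}+k_2-1\le g(k_1)+S-1\le 4S\sqrt{RS}-1$, and adding back the leading $S$ yields $T(S)\le S+4S\sqrt{RS}-1 = 4S\sqrt{RS}+S-1$.

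I do not expect a genuine obstacle here; the only place a careless argument loses sharpness is bounding $k_1$ and $k_2$ separately by $S$, which would give the weaker $T(S)\le 4S\sqrt{RS}+2S-1$. Exploiting the constraint $k_1+k_2\le S$ — so that pushing $k_1$ toward $S$ simultaneously saturates the $\sqrt{Rk_1}$ term and drives $k_2$ to $0$ — is exactly what produces the $S-1$ lower-order term rather than $2S-1$. Beyond that the argument is a direct substitution of Lemmas~\ref{lemma: upper bound of ViU1 for DC} and~\ref{lemma: upper bound of ViU2 for DC} into Theorem~\ref{theorem: upper bound of Vi} followed by the elementary monotonicity check above.
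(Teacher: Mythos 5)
Your proof is correct and follows essentially the same route as the paper: specialize Theorem~\ref{theorem: upper bound of Vi} to $n=2$, substitute the bounds from Lemmas~\ref{lemma: upper bound of ViU1 for DC} and~\ref{lemma: upper bound of ViU2 for DC}, and maximize $4S\sqrt{Rk_1}+k_2-1$ over $k_1+k_2\le S$ at $k_1=S$, $k_2=0$. The only difference is that you explicitly verify the monotonicity that makes $(k_1,k_2)=(S,0)$ the maximizer, which the paper simply asserts.
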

\begin{proof}
By the definition of $T$, we deduce
\begin{displaymath}
T(S) \leq  S + \max_{k_1 + k_2 \leq S} \{ 2S\sqrt{Rk_1} + (k_2 + 2S\sqrt{Rk_1} -1) \} \leq S + 4 S\sqrt{RS} -1,
\end{displaymath}
where the final equality holds if and only if $k_1 = S$ and $k_2 =0$.
\end{proof}

\begin{theorem}
	\label{theorem: lower bound results of DC}
	The I/O lower bound of a direct convolution (DC) is
	\begin{equation}
		\label{equation: communication lower bound of DC}
		Q_{lower~~DC} =  \Omega \left( \dfrac{W_{ker} H_{ker} C_{in} W_{out} H_{out} C_{out}}{4\sqrt{2R S}} \right).
	\end{equation}
\end{theorem}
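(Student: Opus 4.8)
The plan is to assemble the bound directly from three results already established: the general composite lower bound of Theorem~\ref{theorem: general IO lower bound result}, the vertex count of the direct-convolution DAG from Lemma~\ref{lemma: the number of vertices in DAG of DC}, and the estimate $T(S)\le 4S\sqrt{RS}+S-1$ from Lemma~\ref{lemma: upper bound of TS for DC}. Since the direct convolution has already been written as the two-step multi-step partition $G(V,E)=G_1(U_1,E_1)\cup G_2(U_2,E_2)$, Theorem~\ref{theorem: general IO lower bound result} applies verbatim with $n=2$, so no new structural work is required.

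First I would instantiate Theorem~\ref{theorem: general IO lower bound result} to get $Q\ge S\,(|V|/T(2S)-1)$. Next I would lower-bound the numerator: $|V|$ is at least the number of internal and output vertices, so Lemma~\ref{lemma: the number of vertices in DAG of DC} gives $|V|\ge (2W_{ker}H_{ker}C_{in}-1)\,W_{out}H_{out}C_{out}$. Then I would upper-bound the denominator by substituting $2S$ into Lemma~\ref{lemma: upper bound of TS for DC}, obtaining $T(2S)\le 8S\sqrt{2RS}+2S-1$. Plugging both into the displayed inequality yields
\[
Q \;\ge\; S\left(\frac{(2W_{ker}H_{ker}C_{in}-1)\,W_{out}H_{out}C_{out}}{8S\sqrt{2RS}+2S-1}\;-\;1\right).
\]
Finally I would extract the leading term: in the large-problem regime the numerator is $\Theta(W_{ker}H_{ker}C_{in}W_{out}H_{out}C_{out})$, the denominator is dominated by $8S\sqrt{2RS}$, and the outer $-1$ is negligible, so $Q=\Omega\!\big(W_{ker}H_{ker}C_{in}W_{out}H_{out}C_{out}/(4\sqrt{2RS})\big)$, which is exactly $Q_{lower~~DC}$.

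The step I expect to be the only delicate one is the final asymptotic cleanup: one must check that we are in the regime of interest, namely $W_{ker}H_{ker}C_{in}W_{out}H_{out}C_{out}$ much larger than $S^{3/2}$, which always holds for realistic convolution layers, so that the $-1$'s and the additive $2S-1$ can be absorbed into the $\Omega$, and one must verify that the factor $2$ coming from $2W_{ker}H_{ker}C_{in}$ cancels against the $8$ in $8S\sqrt{2RS}$ to leave the constant $4\sqrt{2}$. The substitution of $2S$ (rather than $S$) into the bound on $T$ is the other place to be careful, but it is purely mechanical. Everything else is a direct substitution with no additional combinatorial argument needed.
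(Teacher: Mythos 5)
Your proposal matches the paper's proof essentially step for step: the paper likewise instantiates Theorem~\ref{theorem: general IO lower bound result} with $|V|$ from Lemma~\ref{lemma: the number of vertices in DAG of DC} and $T(2S)$ from Lemma~\ref{lemma: upper bound of TS for DC}, arriving at the same displayed inequality before absorbing lower-order terms into the $\Omega$. Your extra care about the large-problem regime and the $2/8=1/4$ constant cancellation is exactly the (implicit) cleanup the paper performs, so there is nothing to add.
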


\begin{proof}
By Theorem \ref{theorem: general IO lower bound result}, Lemma \ref{lemma: the number of vertices in DAG of DC} and Lemma \ref{lemma: upper bound of TS for DC}, we have
\begin{displaymath}
Q \geq 
\dfrac{(2W_{ker} H_{ker} C_{in}-1) W_{out} H_{out} C_{out}}{8\sqrt{2R S} + 2 - 1/S} - \dfrac{1}{S},
\end{displaymath}
which implies that (\ref{equation: communication lower bound of DC}) is valid.
\end{proof}

It is worth mentioning that the derived lower bound is in
the form of $\Omega$ instead of a precise value. It provides the
asymptotic relation between the data movement and
the fast memory capacity when the problem scale is large
enough.

\subsection{I/O Lower Bounds for Winograd Algorithm}

\begin{figure}
\centering
\includegraphics[scale=0.32]{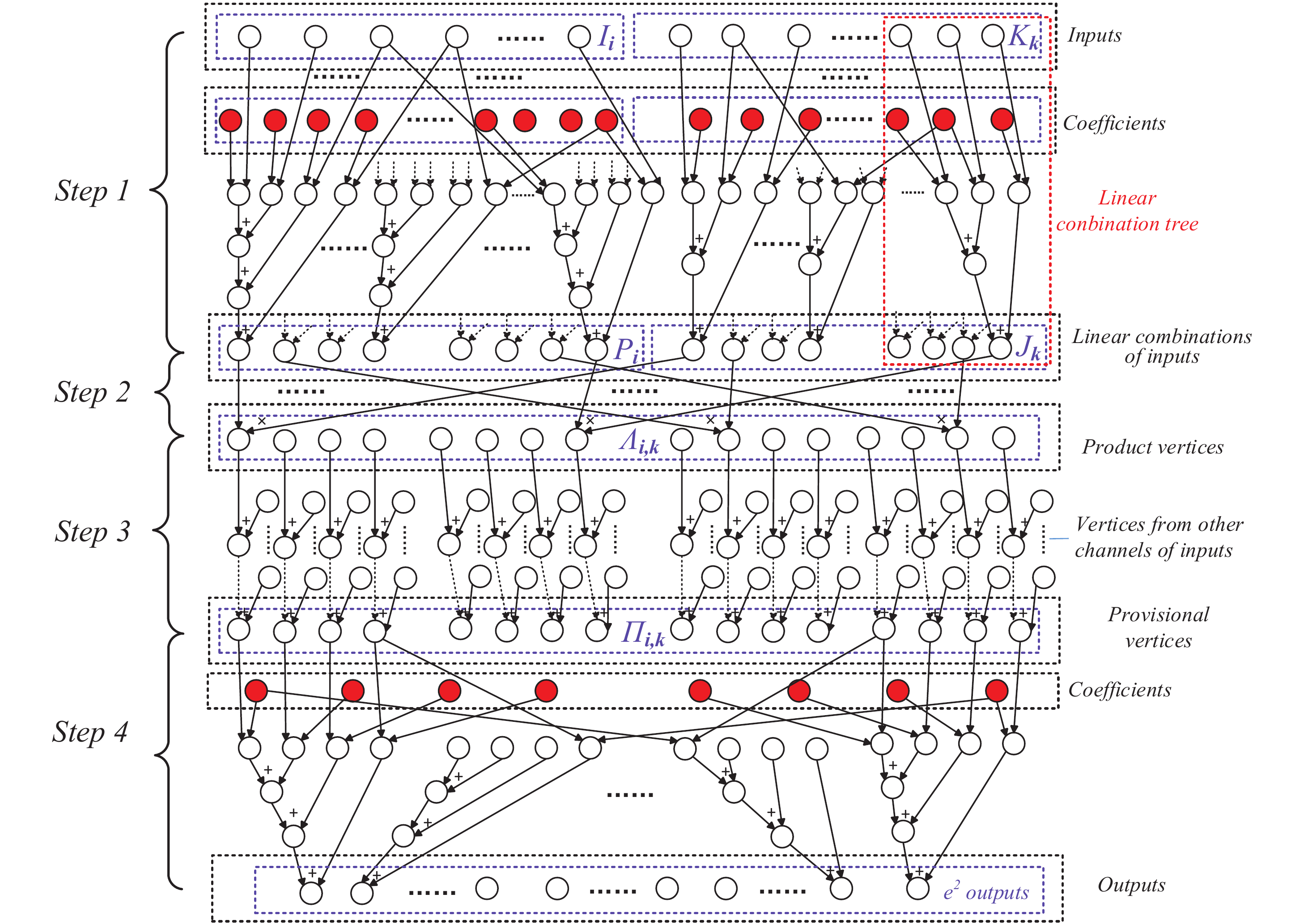}
\caption{DAG of Winograd Algorithm.
\label{fig:DAGofWinogradAlgorithm}}
\end{figure}

In Winograd algorithm, since the size of three transformation matrices $A$, $B$ and $L$ is small, we assume that they can be always stored in fast storage, and their volume can be ignored compared with the size $S$ of the fast memory. Futheremore, as Winograd algorithm requires $W_{ker}=H_{ker}$, we denote $r$ as $W_{ker}$ or $H_{ker}$ briefly. As mentioned in Section \ref{section: Winograd Algorithm}, Winograd algorithm decompsites the output matrix on each given channel of a output image into several sub-domains whose size is $e \times e$. Every $e^2$ outputs on each subdomain are calculated simultaneously using $F(e, r)$. Figure \ref{fig:DAGofWinogradAlgorithm} is a DAG $G(V,E)$ of Winograd algorithm which consists four steps. In Figure \ref{fig:DAGofWinogradAlgorithm}, the red vertices represent the elements of three transformation matrices which would not involve any I/O procedure. The first step is a tensor conversion process. Using a sliding window, a sliding input tensor $I_i$ with the size of $(e+r-1) \times (e+r-1) \times C_{in}$ is chosen from an input image, where is a positve integer. The first step transforms $I_i$ into $P_i$ by multiplying the transformation matrix $B$ with the $(e+r-1) \times (e+r-1)$ matrix of $I_i$ at each fixed channel. The size of $P_i$ is also $(e+r-1) \times (e+r-1) \times C_{in}$, Similarly, by using $L$ and the $k$-th kernel $K_k$, another tensor $J_k$ can be formed with the size of $(e+r-1) \times (e+r-1) \times C_{in}$. The second step is to execute the element-wise multiplication of $P_i$ with $J_k$, which results in a new tensor $\varLambda_{i,k}$. The third step is to sum the elements of $\varLambda_{i,k}$ along the channel direction through the summation tree. A $(e+r-1) \times (e+r-1)$ matrix $\varPi_{i,k}$ is obtained. The final step is to use the matrix $A$ to transfer $\varPi_{i,k}$ into a $e \times e$ matrix which would be $e^2$ outputs on the $i$-th sub-domain at the $k$-th channel of an output image. It is worth mentioning that both the tensor conversion of the first step and the matrix conversion of the fourth step can be realized through a linear combination tree (Figure \ref{fig:DAGofWinogradAlgorithm}). Similar to the summation tree, a linear combination tree is a sub-DAG with the tree structure in which the in-degree of internal and output vertices is at most two. All inputs of linear combination tree are mutiplied with different coefficients respectively at first, and then summed together to the only one output.

\begin{lemma}
\label{lemma: the number of vertices on a linear combination process}
A linear combination tree with $k$ input vertices involves $2k-2$ internal vertices and $1$ output.
\end{lemma}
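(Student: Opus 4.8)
The plan is to mirror the analysis of the summation tree in Lemma~\ref{lemma: the number of vertices on a summation tree} by splitting a linear combination tree into two consecutive phases and then adding the vertex counts. By the description of a linear combination tree, the computation proceeds as follows: (i) each of the $k$ input vertices is first multiplied by its own coefficient, and (ii) the resulting $k$ products are then summed to the single output through a summation tree.

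First I would handle phase (i). Each coefficient is one of the red transformation-matrix vertices, which are present from the start and have no predecessors; hence each multiplication is represented by a fresh vertex whose in-degree is two (its two predecessors being the corresponding data input and the corresponding red coefficient vertex). This creates exactly $k$ new vertices, and since each of them is consumed by the summation in phase (ii), none of them is the output, so all $k$ are internal vertices. Next, phase (ii) is, by definition, a summation tree whose $k$ input vertices are precisely the $k$ products produced in phase (i); applying Lemma~\ref{lemma: the number of vertices on a summation tree} to it, this phase contributes an additional $k-2$ internal vertices and exactly $1$ output vertex. Adding the two contributions yields $k + (k-2) = 2k-2$ internal vertices and $1$ output vertex, which is the claim of Lemma~\ref{lemma: the number of vertices on a linear combination process}.

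The delicate points I anticipate are bookkeeping rather than mathematical depth. I need to state explicitly the modeling convention that a ``multiply by a red coefficient'' operation occupies its own DAG vertex of in-degree two, since otherwise the count of the first phase changes; and I should dispose of the small cases separately so that the invocation of the summation-tree lemma is legitimate (for $k=1$ the single multiplication vertex is itself the output, giving $0 = 2k-2$ internal vertices; for $k=2$ Lemma~\ref{lemma: the number of vertices on a summation tree} already accounts correctly for a two-input summation). Once these conventions are fixed, the argument is a direct addition of two known counts and I expect no further obstacle.
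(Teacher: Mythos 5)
Your proof is correct and follows essentially the same route as the paper: count the $k$ multiplication-by-coefficient vertices as internal, then apply the summation-tree count of Lemma~\ref{lemma: the number of vertices on a summation tree} to the $k$ products to get $k-2$ more internal vertices and one output, for $k+(k-2)=2k-2$ internal vertices in total. Your explicit treatment of the modeling convention and the small cases $k=1,2$ is a minor refinement the paper omits, but the argument is the same.
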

\begin{proof}
For a linear combination process with $k$ inputs, all inputs are multiplied respectively by different coefficients that are always stored at fast memory. This multiplication could result in $k$ interal vertices. Furthermore, $k$ internal vertices are summed together through a summation tree. From Lemma \ref{lemma: the number of vertices on a summation tree}, another $k$ interal vertices and $1$ output node are formed. Hence, the linear combination tree has $2k-2$ internal vertices and $1$ output.
\end{proof}

\begin{lemma} 
\label{lemma: the number of vertices in DAG of WA}
In a DAG of any winograd algorithm, the total number of internal and output vertices is 
\begin{displaymath}
| V_{inter} \cup V_{out} |= O \left( \dfrac{2W_{out}H_{out}C_{out}C_{in}(e+r-1)^4}{e^2} \right).
\end{displaymath}
\end{lemma}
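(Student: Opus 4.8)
The plan is to count the internal and output vertices of the Winograd DAG step by step along its four‑step multi‑step partition, reusing Lemma~\ref{lemma: the number of vertices on a summation tree} (summation trees) and Lemma~\ref{lemma: the number of vertices on a linear combination process} (linear combination trees) for the per‑tree vertex counts, and then summing a uniform upper bound over all invocations of $F(e,r)$. Writing $\alpha = e+r-1$ for the tile side length, I would first observe that the output image is partitioned into $e\times e$ sub‑domains, so there are exactly $W_{out}H_{out}C_{out}/e^2$ sub‑domains and hence that many invocations of $F(e,r)$, each one associated with an index pair $(i,k)$.

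Next I would bound, for a single invocation, the number of internal‑plus‑output vertices created in each of the four steps, treating every transformation by $B$, $L$, $A$ as a collection of linear combination trees whose coefficient vertices are exactly the red vertices of the DAG and therefore do not enter the count, so that a tree on $k$ inputs contributes precisely $2k-2$ internal vertices and $1$ output vertex. \emph{Step 1 (tensor conversion):} at each of the $C_{in}$ channels, each of the $\alpha^2$ entries of $P_i$ is the output of a linear combination tree over the $\alpha^2$ entries of $I_i$, contributing at most $2\alpha^2-1$ new vertices by Lemma~\ref{lemma: the number of vertices on a linear combination process}, for a total of at most $C_{in}\alpha^2(2\alpha^2-1)$; the conversion of $K_k$ into $J_k$ similarly contributes at most $C_{in}\alpha^2(2r^2-1)$. \emph{Step 2 (element‑wise multiplication):} the tensor $\varLambda_{i,k}$ has $\alpha^2 C_{in}$ entries, giving $\alpha^2 C_{in}$ product vertices. \emph{Step 3 (channel summation):} for each of the $\alpha^2$ spatial positions a summation tree over $C_{in}$ values is built, contributing $C_{in}-1$ new vertices by Lemma~\ref{lemma: the number of vertices on a summation tree}, i.e.\ $\alpha^2(C_{in}-1)$ in all. \emph{Step 4 (matrix conversion):} each of the $e^2$ outputs is the root of a linear combination tree over the $\alpha^2$ entries of $\varPi_{i,k}$, contributing at most $e^2(2\alpha^2-1)$ vertices.

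Then I would add the four bounds: the per‑invocation count is $O(C_{in}\alpha^4 + C_{in}\alpha^2 r^2 + \alpha^2 C_{in} + e^2\alpha^2)$, and since $r\le\alpha$, $e\le\alpha$ and $C_{in}\ge 1$ all terms are $O(C_{in}\alpha^4)$, with the Step‑1 input transform alone already contributing $2C_{in}\alpha^4$. Multiplying by the $W_{out}H_{out}C_{out}/e^2$ invocations yields $|V_{inter}\cup V_{out}| = O\!\left(2W_{out}H_{out}C_{out}C_{in}\alpha^4/e^2\right) = O\!\left(2W_{out}H_{out}C_{out}C_{in}(e+r-1)^4/e^2\right)$, which is the claim.

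The care points I anticipate, none of them deep, are: making explicit that the coefficients of $A$, $B$, $L$ are red vertices and hence excluded, so the linear‑combination‑tree and summation‑tree counts apply verbatim; checking that no internal vertex is attributed to two invocations, which holds because the sub‑domains partition the output and the summation/linear‑combination trees attached to distinct index pairs are vertex‑disjoint, while the (in principle shareable across output channels) transformed input tiles $P_i$ are simply over‑counted here, which is harmless for an $O(\cdot)$ upper bound; and verifying the domination of the lower‑order terms via $W_{out}H_{out}\ge e^2$, $C_{out}\ge 1$ and $e\le e+r-1$. The only genuine choice is whether to retain the explicit constant $2$ in the leading term, which I would keep to match the statement. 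In short, the ``main obstacle'' is organizational bookkeeping rather than any mathematical difficulty.
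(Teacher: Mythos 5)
Your proposal is correct and follows essentially the same route as the paper's own proof: a per-invocation count over the four steps using Lemma~\ref{lemma: the number of vertices on a summation tree} and Lemma~\ref{lemma: the number of vertices on a linear combination process}, multiplied by the $W_{out}H_{out}C_{out}/e^2$ invocations of $F(e,r)$, with the step-by-step totals matching the paper's term for term. Your extra remarks on excluding the red coefficient vertices and on the harmlessness of over-counting shared transformed inputs are sensible but do not change the argument.
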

\begin{proof}
A winograd algorithm has $W_{out}H_{out}C_{out}$ output vertices. Every $e^2$ outputs are calculated at once by using a $(e+r-1) \times (e+r-1) \times C_{in}$ input tensor $I_i$ and a $r \times r \times C_{in}$ kernel $K_k$. 
At the first step, $I_i$ and $K_k$ generate $P_i$ and $J_k$ respectively.  $P_i$ and $J_k$ have the same dimension of $(e+r-1) \times (e+r-1) \times C_{in}$. Each vertex in $P_i$ is formed through a linear conbination tree with $(e+r-1)^2$ inputs from the input matrix at some channel of $I_i$. By Lemma \ref{lemma: the number of vertices on a linear combination process}, $(2(e+r-1)^2 -1)(e+r-1)^2C_{in}$ vertices are generated. Similarly, each vertex in $J_k$ is formed through a linear conbination tree with $r^2$ weights in $K_k$, which involves $(2r^2-1)(e+r-1)^2C_{in}$ vertices. In the second step, the corresponding element-wise multiplication of $P_i \odot J_j$ forms $(e+r-1)^2 C_{in}$ vertices further. The third step is to add the elements in $\varLambda_{i,k}$ ($\varLambda_{i,k}=P_i \odot J_j$) along the channel direction to deduce a matrix $\varPi_{i,k}$, which would generate $(C_{in}-1)(e+r-1)^2$ vertices through $(e+r-1)^2$ different summation trees (by Lemma \ref{lemma: the number of vertices on a summation tree}). Finally, the fourth step is to obtain $e^2$ outputs on the $i$-th sub-domain at the $k$-th channel of an output image. Since each vertex from the $e^2$ outputs is generated through a linear conbination tree with the inputs of all $(e+r-1)^2$ elements in $\varPi_{i,k}$. By Lemma \ref{lemma: the number of vertices on a linear combination process} again, the fourth step involves $(2(e+r-1)^2-1)e^2$ vertices. Since each $e^2$ output vertices are generated independently, the total number of internal and output vertices in DAG is $O(2W_{out}H_{out}C_{out}C_{in}(e+r-1)^4/e^2)$.
\end{proof}

Let $\{G_1(U_1,E_1), G_2(U_2,E_2), G_3(U_3,E_3), G_4(U_4,E_4)\}$ be the multi-step partition of $G(V,E)$. Since $P_i$ and $J_k$ are obtained indenpendently, we further divide the sub-DAG $G_1(U_1,E_1)$ into two small DAGs $G_{1,2}(U_{1,1},E_{1,1})$ and $G_{1,2}(U_{1,2},E_{1,2})$ where $G_{1,1}$ and $G_{1,2}$ are corresponding to the generation process of $P_i$ and $J_k$ respectively. As $r=e \pm 1$ is satisfied in Winograd algorithm, we can assume that $1/2 \leq r/e \leq 2$ in our estimation for $\varphi_i$ and $\psi_i$ ($1 \leq i \leq 4$).

\begin{lemma}
\label{lemma: upper bound of varphi1 for WA}
In Winograd algorithm, for any positive integer $h_1$, 
\begin{equation}
\label{equation: upper bound of varphi1 for WA}
\varphi_1(h_1) \leq \dfrac{6h_1(e+r-1)^4}{er} ~\hbox{and}~ \psi_1(h_1) \leq \dfrac{3h_1(e+r-1)^2}{er}.
\end{equation}
\end{lemma}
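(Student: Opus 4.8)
The plan is to follow the same template that was used in the proof of Lemma \ref{lemma: upper bound of ViU1 for DC} for the direct convolution, now applied to the two linear-combination-tree steps $G_{1,1}$ (generating the $P_i$'s) and $G_{1,2}$ (generating the $J_k$'s). Fix a vertex set $U$ with dominator set $D$ and minimum set $M$, each of size at most $S$, and suppose $|D\cap U_1|\le h_1$. Since $M$ has at most $S$ vertices and the internal vertex sets of the linear combination trees producing distinct entries of $P_i$ and $J_k$ are pairwise disjoint, $U$ meets the internal vertices of at most $S$ such trees. I would split the budget $h_1 = a_1 + a_2$, where $a_1 = |D\cap U_{1,1}|$ bounds the input entries available in input-image tiles $I_i$ and $a_2 = |D\cap U_{1,2}|$ bounds those available in kernels $K_k$, and estimate the contributions of $G_{1,1}$ and $G_{1,2}$ separately before adding them.

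For $G_{1,1}$: each entry of $P_i$ at a fixed channel is the output of a linear combination tree on the $(e+r-1)^2$ entries of the corresponding input-matrix slice of $I_i$, so by Lemma \ref{lemma: the number of vertices on a linear combination process} each such tree has at most $2(e+r-1)^2-1 \le 2(e+r-1)^2$ internal-plus-output vertices. The key counting step mirrors the direct-convolution argument: among the at most $S$ input slices (across all channels of all the $I_i$'s whose trees $U$ touches), separate those that contain at least $\sqrt{R\,a_1}$ dominator entries from those that contain fewer, where $R=r^2/\mu^2$ is the per-element reuse bound across sliding windows (here playing the role $R$ played before, since an input element of the image is reused across overlapping $(e+r-1)\times(e+r-1)$ windows). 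A reductio argument identical to the one in Lemma \ref{lemma: upper bound of ViU1 for DC} shows at most $\sqrt{R\,a_1}$ slices fall in the first class; the second class contributes at most $S\sqrt{R\,a_1}$ dominator entries usable as tree inputs. Multiplying the bound on the number of affected trees by the tree size $O((e+r-1)^2)$ and collecting the $e$-vs-$r$ comparison $1/2\le r/e\le 2$, I expect the $U_{1,1}$ part of $\varphi_1$ to be bounded by something of order $a_1 (e+r-1)^4/(er)$, and the $\widetilde O_{1,1}$ (final-output) part by order $a_1 (e+r-1)^2/(er)$, since each affected tree contributes exactly one output vertex of $P_i$. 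The argument for $G_{1,2}$ is the same with $a_2$ in place of $a_1$ and $r^2$ in place of $(e+r-1)^2$ for the kernel slice size, giving a smaller bound that is absorbed into the same form. Summing the $U_{1,1}$ and $U_{1,2}$ contributions over $a_1+a_2=h_1$ and tracking the constants carefully should yield the claimed $\varphi_1(h_1)\le 6h_1(e+r-1)^4/(er)$ and $\psi_1(h_1)\le 3h_1(e+r-1)^2/(er)$.

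The main obstacle I anticipate is bookkeeping the constants so that the coefficient lands exactly at $6$ (resp.\ $3$): one has to be careful that ``$2S\sqrt{Ra_1}$-style'' bounds, the tree-size factor $2(e+r-1)^2$, the split over the two sub-steps, and the $r/e$ ratio bounds all combine without overshooting, and that the reuse factor $R$ is legitimately bounded in terms of $e,r$ (using $(e+r-1)^2 \ge r^2$ and the stride). A secondary subtlety is confirming that $\psi_1$, which counts vertices landing in $\widetilde O_1$ rather than in all of $U_1$, really does drop a factor of $(e+r-1)^2$ relative to $\varphi_1$ — this follows because the output set $\widetilde O_1$ of step $1$ consists only of the $P_i$ and $J_k$ entries, one per linear combination tree, not the internal nodes; I would state that reduction explicitly before plugging into the recursion of Theorem \ref{theorem: upper bound of Vi}.
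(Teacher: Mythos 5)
There is a genuine gap: the core counting mechanism you import from Lemma \ref{lemma: upper bound of ViU1 for DC} is the wrong tool for step~1 of Winograd, and it cannot produce the bound the lemma claims. The $\sqrt{R h}$ two-class split with the reductio argument is designed for a \emph{bilinear} step, where each generated vertex is a product pairing an image entry with a kernel entry; it yields bounds of the form $S\sqrt{R\,a_1}$, i.e.\ with an explicit factor of $S$ and only $\sqrt{a_1}$ dependence on the dominator budget. The target inequality $\varphi_1(h_1)\le 6h_1(e+r-1)^4/(er)$ is \emph{linear} in $h_1$ and contains no factor of $S$; no amount of constant bookkeeping turns $S\sqrt{R\,a_1}$ into $C\,a_1$. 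This matters downstream: Lemma \ref{lemma: upper bound of TS for WA} feeds $\psi_1(k_1)$ into $\varphi_2(k_2+\psi_1(k_1))\sim(\cdot)^{3/2}$, so replacing the linear bound $\psi_1(S)=O(S(e+r-1)^2/(er))$ by an $S\sqrt{RS}$-type bound would inflate $T(S)$ beyond $O(S\sqrt{S})$ and break Theorem \ref{theorem: lower bound results of IM}.

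The reason the paper does not need the $\sqrt{\cdot}$ argument here is structural: in step~1 the transformation matrices $B$ and $L$ are assumed resident in fast memory, so each linear combination tree draws its inputs from a \emph{single} operand tensor ($I_i$ for $P_i$, $K_k$ for $J_k$); there is no combinatorial pairing of two loaded data sets. The paper's proof is therefore a direct fan-out count: each dominator vertex in $U_{1,1}$ lies in at most $(e+r-1)^2/e^2$ sliding windows (the reuse factor here is $(e+r-1)^2/e^2$, coming from $(e+r-1)\times(e+r-1)$ windows at stride $e$ --- not your $R=r^2/\mu^2$, which is the direct-convolution reuse), in each window it feeds $(e+r-1)^2$ trees, and each group of $(e+r-1)^2$ inputs yields at most $2(e+r-1)^4-2(e+r-1)^2$ internal vertices and $(e+r-1)^2$ outputs by Lemma \ref{lemma: the number of vertices on a linear combination process}. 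This gives $2k_1(e+r-1)^4/e^2$ generated vertices (of which $k_1(e+r-1)^2/e^2$ are outputs of $G_{1,1}$), the analogous count with $r^2$ in place of $e^2$ and no reuse for the kernel branch, and then $1/2\le r/e\le 2$ converts $1/e^2$ and $1/r^2$ into multiples of $1/(er)$. Your decomposition $h_1=k_1+k_2$, your use of the tree-size lemma, and your observation that $\psi_1$ counts one output per tree (hence loses a factor of $(e+r-1)^2$ against $\varphi_1$) are all correct and match the paper; it is only the central counting step that must be replaced by this linear per-input accounting.
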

\begin{proof}
Let $U$ be any vertex set whose dominator set $D$ and minimun set $M$ contain at most $S$ vertices. Assume that $|D \cap U_{1}| \leq h_1 $. In order to estimate $\varphi_1$ and $\psi_1$, we consider the vertices in $U \cap U_1$ generated by $D \cap U_{1, 1}$ and $D \cap U_{1, 2}$ respectively. Denote $k_1 = |D \cap U_{1, 1}|$ and $k_2=|D \cap U_{1, 2}|$ respectively. As $U_{1,1}$ is disjoint with $U_{1,2}$, we get $k_1 + k_2 \leq h_1$. On one hand, in sub-DAG $G_{1,1}$, every $(e+r-1)^2$ vertices in $D \cap U_{1, 1}$ are used as the inputs of $(e+r-1)^2$ linear conbincation trees. By Lemma \ref{lemma: the number of vertices on a linear combination process}, with $(e+r-1)^2$ inputs, each linear conbincation tree can generate $2(e+r-1)^2-2$ internal vertices and $1$ output. Hence, every $(e+r-1)^2$ vertices from $D \cap U_{1, 1}$ would form at most $(e+r-1)^2 \cdot (2(e+r-1)^2-2)$ internal vertices and $(e+r-1)^2$ outputs. Furthermore, since the reuse number of each input vertex is $(e+r-1)^2/e^2$, $D \cap U_{1, 1}$ can generate at most $2k_1(e+r-1)^4/e^2$ vertices in which there are $k_1(e+r-1)^2/e^2$ vertices in the output set of $G_{1,1}$. On the other hand, in sub-DAG $G_{1,2}$, every $r^2$ vertices in $D \cap U_{1, 2}$ are used as the inputs of $(e+r-1)^2$ conbincation trees, while any vertex in $D \cap U_{1, 2}$ would not be reused. Similar to the discussion above, it is clear that $D \cap U_{1, 2}$ could generate at most $2k_2(e+r-1)^4/r^2$ vertices in which $k_2(e+r-1)^2/r^2$ vertices are in the output set of $G_{1,2}$. Since $1/2 \leq r/e \leq 2$, we have $\varphi_1(h_1) \leq \max_{k_1 +k_2 \leq h_1} 2(e+r-1)^4 (\dfrac{k_1}{e^2} + \dfrac{k_2}{r^2}) \leq \dfrac{6 h_1(e+r-1)^4}{er}$ and $\psi_1(h_1) \leq \max_{k_1 +k_2 \leq h_1} 
(e+r-1)^2 (\dfrac{k_1}{e^2} + \dfrac{k_2}{r^2}) 
\leq \dfrac{3 h_1(e+r-1)^2}{er}$.
\end{proof}

\begin{lemma}
\label{lemma: upper bound of varphi2 for WA}
In Winograd algorithm, $\psi_2 = \varphi_2$ is valid for the second step. Futhermore, for any positive integer $h_2$,
\begin{equation}
\label{equation: upper bound of varphi2 for WA}
\varphi_2(h_2) \leq h_2 \sqrt{h_2} + \dfrac{(e+r-1)^2S}{e^2} \sqrt{h_2}.
\end{equation}
\end{lemma}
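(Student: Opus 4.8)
The plan is to follow the template of the proof of Lemma~\ref{lemma: upper bound of ViU1 for DC}, adapting it to the two-sided (window~$\times$~kernel) Hadamard structure of the second step together with the channel-summation trees of the third step (Figure~\ref{fig:DAGofWinogradAlgorithm}). First I would dispose of the equality $\psi_{2}=\varphi_{2}$: the second step is a pure element-wise multiplication, so the sub-DAG $G_{2}(U_{2},E_{2})$ has no internal vertices and every vertex of $U_{2}$ is an element of some $\varLambda_{i,k}$, hence simultaneously an output of the step, so that $U_{2}=\widetilde{O}_{2}$. Therefore $\varTheta(D)\cap U\cap U_{2}=\varTheta(D)\cap U\cap\widetilde{O}_{2}$ for every $U$ and every dominator $D$, whence $\widetilde{\varphi}_{2}(U,k)=\widetilde{\psi}_{2}(U,k)$ and the two maxima coincide; this is exactly parallel to the identity $\psi_{1}=\varphi_{1}$ established in Lemma~\ref{lemma: upper bound of ViU1 for DC}.

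For the bound on $\varphi_{2}(h_{2})$ I would fix a vertex set $U$ satisfying the $S$-partition constraints (a dominator set and a minimum set $M$ of size at most $S$) together with a dominator $D$ of $U\cap U_{2}$ with $|D\cap U_{2}|+|\varTheta(D)\cap\widetilde{O}_{1}|\le h_{2}$. The vertices that $D$ can use to produce elements of $\varLambda$ are the $P$- and $J$-vertices in $\varTheta(D)\cap\widetilde{O}_{1}$ and the at most $h_{2}$ $\varLambda$-vertices already in $D\cap U_{2}$; the latter account for at most $h_{2}$ of the total, so it remains to count the product vertices generated from scratch. A vertex $\varLambda_{i,k}[x,y,c]$ is generated by $D$ precisely when $D$ supplies both $P_{i}[x,y,c]$ and $J_{k}[x,y,c]$. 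At a fixed position $(x,y,c)$ this is an outer product: if $D$ contains $p_{(x,y,c)}$ of the relevant $P$-vertices and $j_{(x,y,c)}$ of the relevant $J$-vertices it can generate at most $p_{(x,y,c)}j_{(x,y,c)}$ product vertices there, subject to $\sum_{(x,y,c)}\big(p_{(x,y,c)}+j_{(x,y,c)}\big)\le h_{2}$.

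Two structural facts keep this sum bounded. First, the channel-summation trees of the third step have pairwise disjoint internal vertex sets, so $|M|\le S$ forces the generated $\varLambda$-vertices to feed into $O(S)$ distinct trees, that is, $O(S)$ quadruples $(i,k,x,y)$. Second, a single input-image element lies in at most $(e+r-1)^{2}/e^{2}$ sliding windows --- the window-overlap reuse exploited in the proof of Lemma~\ref{lemma: upper bound of varphi1 for WA} --- which caps how dense a family of $P$-vertices a modest dominator can generate. With these in hand I would run the same dense/sparse dichotomy as in Lemma~\ref{lemma: upper bound of ViU1 for DC}, splitting the $O(S)$ relevant trees according to whether $D$ is heavy or light on the $P$-side and on the $J$-side. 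In the light regime each tree behaves like a single small outer product, and concentrating the whole $h_{2}$-budget at one position gives the Loomis--Whitney / AM--GM type term $h_{2}\sqrt{h_{2}}$; in the heavy regime the number of heavy windows and kernels is bounded by $\sqrt{h_{2}}$ after discounting by the reuse factor, and each of them meets at most $S$ trees, contributing a term of order $(e+r-1)^{2}S\sqrt{h_{2}}/e^{2}$. Adding the two contributions (and folding the $\le h_{2}$ direct $\varLambda$-vertices and the $O(S)$ leaves of $U$ into the constants) yields $\varphi_{2}(h_{2})\le h_{2}\sqrt{h_{2}}+(e+r-1)^{2}S\sqrt{h_{2}}/e^{2}$.

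The main obstacle I expect is the combinatorics of this dense/sparse split. Unlike Lemma~\ref{lemma: upper bound of ViU1 for DC}, where one side (the kernels) was trivially capped by $|D|\le S$, here both the $P$-side and the $J$-side can be heavy, the extra spatial index $(x,y)$ couples the reuse factor to the position at which each outer product is formed, and a single $P$-vertex already serves many step-3 trees (all kernels with that window and position). Counting the window-overlap reuse $(e+r-1)^{2}/e^{2}$ exactly once and on the correct side, and bounding precisely how many of the $O(S)$ trees a heavy window or kernel can touch, is the delicate part; the remainder is the same bookkeeping as in the direct-convolution case, combined with the summation-tree count of Lemma~\ref{lemma: the number of vertices on a summation tree}.
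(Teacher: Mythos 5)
Your outline matches the paper's own (quite terse) proof: establish $\psi_2=\varphi_2$ from the absence of internal vertices in $G_2(U_2,E_2)$, bound the number of third-step summation trees that $U$ can meet via $|M|\le S$, and then rerun the heavy/light dichotomy of Lemma \ref{lemma: upper bound of ViU1 for DC} on the bipartite product structure $\varLambda_{i,k}[x,y,c]=P_i[x,y,c]\cdot J_k[x,y,c]$, with the light regime giving (number of trees)$\times\sqrt{h_2}$ and the heavy regime giving $\sqrt{h_2}\times h_2$. The one substantive divergence is where the coefficient $(e+r-1)^2S/e^2$ comes from. The paper obtains it purely as a tree count: $|M|\le S$ limits $U$ to at most $S$ fourth-step linear combination trees, and since every $e^2$ of those trees draw on the same $(e+r-1)^2$ third-step summation trees, $U$ meets at most $S(e+r-1)^2/e^2$ summation trees. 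You instead attribute the factor $(e+r-1)^2/e^2$ to window-overlap reuse of input-image elements. That reuse is a first-step phenomenon: the inputs to step two are the transformed tensors $P_i$, which are distinct DAG vertices for distinct windows $i$ even when the underlying pixels overlap, so a $P$-vertex is reused only across kernels $k$, never across windows, and no such discount applies on the $P$-side here. Your final inequality survives because your own tree count ($O(S)$, from disjointness of the third-step trees) is no larger than the paper's $S(e+r-1)^2/e^2$, but the mechanism you name for that coefficient is not the one at work; note also that your shortcut from $|M|\le S$ to ``$O(S)$ third-step trees'' skips the successor-chasing the paper makes explicit --- the root of a third-step tree can have a successor of $U$ inside a fourth-step combination tree and thereby avoid $M$, which is exactly why the paper routes the count through the fourth step.
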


\begin{proof}
In the sub-DAG $G_2(U_2, E_2)$, there is no internal vertices. Hence, $\psi_2 = \varphi_2$ is valid. Assume that a vertex set $U$ has a dominator set $D$ and a minimun set $M$ satisfying $|D \cap U_{2}| + |\varTheta(D) \cap \widetilde{O}_{1} | \leq h_2 $, $|D| \leq S$ and $|M| \leq S$. Since $M$ has no more than $S$ vertices, $U$ can have nonempty intersections with internal vertex sets of at most $S$ different linear conbination trees of the fourth step. We note that every $(e+r-1)^2$ outputs of summation trees in the third step, would be used as inputs of $e^2$ conbination trees. Due to $|M| \leq S$ again, $U$ must intersect with at most $S(e+r-1)^2/e^2$ independent summation trees of the third step. To estimate the upper bound of $\varphi_2$, we only need to consider the vertices in $U \cap U_2$ which are generated by $D \cap U_2$ and $\varTheta(D) \cap \widetilde{O}_{1}$ for at most $S(e+r-1)^2/e^2$ disjoint summation trees in the third step. Similar to the proof of Lemma \ref{lemma: upper bound of ViU1 for DC}, we can deduce that
$\varphi_2(h_2) \leq h_2 \sqrt{h_2} + (e+r-1)^2S \sqrt{h_2}/e^2$.
\end{proof}

\begin{lemma}
\label{lemma: upper bound of varphi3 for WA}
In Winograd algorithm, for any positive integer $h_3$, $\varphi_3(h_3) \leq h_3-1$ and $\psi_3(h_3) \leq \min \{h_3/2, S(e+r-1)^2/e^2\}$.
\end{lemma}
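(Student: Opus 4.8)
The plan is to prove the two bounds separately, reusing the template already established for the direct‑convolution step~2 (Lemma~\ref{lemma: upper bound of ViU2 for DC}) and for the Winograd step~2 (Lemma~\ref{lemma: upper bound of varphi2 for WA}). Fix a vertex set $U$ whose dominator set $D$ and minimum set $M$ each have at most $S$ vertices, and which satisfies the third‑step budget constraint $|D\cap U_{3}| + |\varTheta(D)\cap\widetilde{O}_{2}| \leq h_{3}$. Recall that $G_{3}(U_{3},E_{3})$ is a disjoint union of summation trees, one for each of the $(e+r-1)^{2}$ positions of every sub‑domain matrix $\varPi_{i,k}$, each tree summing the $C_{in}\geq 2$ channel entries of a column of $\varLambda_{i,k}$; the leaves of these trees are exactly the vertices of $\widetilde{O}_{2}$ and the roots are exactly the vertices of $\widetilde{O}_{3}$, and distinct roots have disjoint subtrees.

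For $\varphi_{3}$ the argument is immediate and mirrors Lemma~\ref{lemma: upper bound of ViU2 for DC}: every vertex of $U\cap U_{3}$ generated by $D$ lies inside one of these summation trees, and the vertices that can act as entrances of these trees from $\varTheta(D)$ number at most $h_{3}$ by the budget constraint. Distributing at most $h_{3}$ entrances among the trees and applying Lemma~\ref{lemma: the number of vertices on a summation tree} (a summation tree with $k$ leaves produces $k-1$ non‑leaf vertices), a forest with $k_{1},\dots,k_{t}$ entrances produces $\sum_{s}(k_{s}-1)\leq h_{3}-t\leq h_{3}-1$ internal‑plus‑output vertices, so $\varphi_{3}(h_{3})\leq h_{3}-1$.

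For $\psi_{3}$ I would prove the two halves of the minimum independently and then combine. For the $h_{3}/2$ half: a vertex counted by $\widetilde{\psi}_{3}(U,h_{3})$ is the root $o$ of a summation tree that lies in $\varTheta(D)$, and since $o$ is a step‑$3$ output it is accounted for through $\psi_{3}$ itself rather than charged to $|D\cap U_{3}|$; hence $o\in\varTheta(D)$ forces the budget to already cover an antichain cut separating $o$ from the leaves of its subtree, and in a binary summation tree with at least two leaves such a cut has size at least two (at least the two children of $o$, or one child together with a directly attached leaf). Because distinct roots have disjoint subtrees, these cuts are disjoint, so at most $h_{3}/2$ roots are generated, giving $\psi_{3}(h_{3})\leq h_{3}/2$. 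For the $S(e+r-1)^{2}/e^{2}$ half I would reuse verbatim the minimum‑set argument in the proof of Lemma~\ref{lemma: upper bound of varphi2 for WA}: since $|M|\leq S$, $U$ meets the internal‑vertex sets of at most $S$ distinct fourth‑step linear‑combination trees, and because the $(e+r-1)^{2}$ outputs of the third‑step summation trees of one sub‑domain feed exactly the $e^{2}$ combination trees of that sub‑domain, $U$ can meet at most $S(e+r-1)^{2}/e^{2}$ distinct third‑step summation trees, each contributing at most one vertex to $\widetilde{O}_{3}$. Taking the minimum yields $\psi_{3}(h_{3})\leq\min\{h_{3}/2,\,S(e+r-1)^{2}/e^{2}\}$.

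The main obstacle is the bookkeeping in the $h_{3}/2$ bound: one must be precise about exactly which vertices the third‑step budget $|D\cap U_{3}|+|\varTheta(D)\cap\widetilde{O}_{2}|$ charges (the inputs and internal vertices of step~$3$, with step‑$3$ outputs handled recursively through $\psi_{3}$), and then genuinely use the binary structure of the summation trees to convert the statement ``the subtree of a generated root must be cut'' into the constant factor~$2$ --- this is the step with no analogue in the direct‑convolution case, where the crude bound $\varphi_{2}(h_{2})\leq h_{2}-1$ already sufficed. The minimum‑set half is routine once Lemma~\ref{lemma: upper bound of varphi2 for WA} is available, and the $\varphi_{3}$ bound is a one‑line consequence of Lemma~\ref{lemma: the number of vertices on a summation tree}.
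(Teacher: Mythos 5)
Your proposal is correct and follows essentially the same route as the paper's own proof: the $\varphi_{3}(h_3)\leq h_3-1$ bound via the summation-tree vertex count of Lemma~\ref{lemma: the number of vertices on a summation tree}, the $S(e+r-1)^{2}/e^{2}$ bound by recycling the minimum-set argument from the proof of Lemma~\ref{lemma: upper bound of varphi2 for WA}, and the $h_{3}/2$ bound from the pairwise disjointness of the third-step summation trees together with the observation that producing each root costs at least two budgeted vertices. You simply make explicit the bookkeeping (the forest decomposition for $\varphi_3$ and the size-two cut per generated root) that the paper states in one line.
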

\begin{proof}
By Lemma \ref{lemma: the number of vertices on a summation tree}, it is clear that $\varphi_3(h_3) \leq h_3-1$. Based on the discussion in the proof of Lemma \ref{lemma: upper bound of varphi2 for WA}, for any $U$ whose minimum set has at most $S$ vertices, $U$ must have nonempty intersections with internal vertex sets of at most $S(e+r-1)^2/e^2$ independent summation trees of the third step. Hence, $\psi_3(h_3) \leq S(e+r-1)^2/e^2$ is valid for any $h_3 \geq 1$. Futhermore, In the third step, none of the input and internal vertices in one summation tree appears as a vertex in another. Hence, at least two vertices in fast memory are needed to form one output vertex of a summation tree. This means $\psi(h_3) \leq h_3/2$. Consequently, $\psi(h_3) \leq \min \{h_3/2, S(e+r-1)^2/e^2\}$ is valid.
\end{proof}

\begin{lemma}
\label{lemma: upper bound of varphi4 for WA}
In Winograd algorithm, for any positive integer $h_4$, $\varphi_4(h_4) \leq \min \{(2h_4-1)e^2, (2(e+r-1)^2-1) S\} $.
\end{lemma}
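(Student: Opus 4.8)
The plan is to establish the two entries of the minimum separately and then combine them. By Lemma~\ref{lemma: the number of vertices on a linear combination process}, the fourth sub-DAG $G_4(U_4,E_4)$ breaks up into one linear combination tree per output vertex; these trees are organized into sub-domain-channel groups of $e^2$ trees, the $e^2$ trees of a group sharing the very same $(e+r-1)^2$ input vertices (the entries of $\varPi_{i,k}$, which lie in $\widetilde{O}_3$), each tree contributing $2(e+r-1)^2-1$ non-input vertices, and non-input vertices of trees attached to different outputs being disjoint. Recall also that, as in the proof of Lemma~\ref{lemma: the number of vertices on a summation tree}, such a tree is a caterpillar: $k$ coefficient multiplications followed by a length-$k$ summation chain; the red transformation-matrix vertices require no I/O and are ignored throughout.

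First I would prove $\varphi_4(h_4)\leq (2(e+r-1)^2-1)S$. Take any $U$ whose dominator set $D$ and minimum set $M$ have at most $S$ vertices. If the non-input part of some step-4 tree $T$ meets $U\cap U_4$, then the topologically last vertex of $U$ in that part has all of its successors inside $T$ (the unique sink of $T$ is a global output with no successors), hence no successor in $U$, and therefore lies in $M$; trees attached to distinct outputs contribute distinct such vertices. So $U$ meets at most $|M|\leq S$ step-4 trees, and since each has only $2(e+r-1)^2-1$ non-input vertices, $|\varTheta(D)\cap U\cap U_4|\leq|U\cap U_4|\leq(2(e+r-1)^2-1)S$.

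Next I would prove $\varphi_4(h_4)\leq (2h_4-1)e^2$. As in the reasoning behind Lemma~\ref{lemma: dominator set of ViUj}, every vertex of $\varTheta(D)\cap U\cap U_4$ is generated, inside $G_4$, from the seed set $(D\cap U_4)\cup(\varTheta(D)\cap\widetilde{O}_3)$, which has at most $h_4$ vertices; moreover, making a seed an internal vertex of a tree helps only that one tree whereas making it a $\varPi_{i,k}$-entry helps all $e^2$ trees of its group, so the worst case is when every seed is such an entry. In a caterpillar tree with $(e+r-1)^2$ inputs, having $m$ of them available lets one produce at most $m$ multiplication vertices and at most $m-1$ chain sums, i.e., at most $2m-1$ vertices, the bound being attained when the available inputs form a prefix of the chain. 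Hence allocating $m_t$ input seeds to group $t$ yields at most $(2m_t-1)e^2$ step-4 vertices there; summing over the groups that are touched and using $\sum_t m_t\leq h_4$ gives $|\varTheta(D)\cap U\cap U_4|\leq(2h_4-1)e^2$, and the two estimates together are exactly the claimed minimum.

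The step I expect to be the real obstacle is the second bound: one has to pin down that, because the $e^2$ trees of a group share a single copy of each $\varPi_{i,k}$-entry and because the linear combination trees are caterpillars rather than balanced, $m$ available input entries unlock at most $2m-1$ vertices per tree, and that converting part of the budget $h_4$ into internal seeds in $D\cap U_4$ can never beat this count -- once that is settled, both bounds follow from Lemma~\ref{lemma: the number of vertices on a linear combination process} and Lemma~\ref{lemma: the number of vertices on a summation tree}.
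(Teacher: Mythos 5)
Your proposal is correct and follows essentially the same route as the paper's proof: the bound $(2(e+r-1)^2-1)S$ comes from the minimum set $M$ limiting $U$ to intersecting at most $S$ linear combination trees, and the bound $(2h_4-1)e^2$ comes from the at most $h_4$ available seeds each feeding the $e^2$ trees of a group, combined with Lemma~\ref{lemma: the number of vertices on a linear combination process}. You simply spell out the details (the caterpillar structure, the $2m-1$ count per tree, and why input seeds dominate internal seeds) that the paper's two-sentence argument leaves implicit.
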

\begin{proof}
Let $U$ be any vertex set whose dominator set $D$ and minimun set $M$ satisfy $|D \cap U_{4}| + |\varTheta(D) \cap \widetilde{O}_{3} | \leq h_4$, $|D| \leq S$ and $|M| \leq S$. Since each input of $U_4$ is used as an input for $e^2$ linear conbination trees, Lemma \ref{lemma: the number of vertices on a linear combination process} leads to $\varphi_4(h_4) \leq e^2 (2h_4-1)$. In the fourth step, each linear combination tree has $(e+r-1)^2$ inputs, and at most $S$ linear combination trees have nonempty intersections with internal vertex sets of $U$. By Lemma \ref{lemma: the number of vertices on a linear combination process}, we have $\varphi_4(h_4) \leq (2(e+r-1)^2-1) S$, Therefore, $\varphi_4(h_4) \leq \min \{e^2 h_4-1, (2(e+r-1)^2-1) S\}$.
\end{proof}

Based on the upper bounds of $\varphi_i$ and $\psi_i$ ($1 \leq i \leq 4$), it is easy to estimate $T$ for Winograd algorithm.

\begin{lemma}
\label{lemma: upper bound of TS for WA}
For Winograd algorithm,
\begin{equation}
\label{equation: upper bound of TS for WA}
T(S) = O \left( 2\dfrac{(e+r-1)^3 }{er} S\sqrt{S} + \dfrac{6(e+r-1)^2}{er} S\right).
\end{equation}
\end{lemma}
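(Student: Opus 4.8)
The plan is to substitute the estimates of Lemmas~\ref{lemma: upper bound of varphi1 for WA}--\ref{lemma: upper bound of varphi4 for WA} into the formula for $T(S)$ from Theorem~\ref{theorem: upper bound of Vi} specialized to $n=4$, namely $T(S)=S+\max(\varphi_1(k_1)+\varphi_2(k_2+\psi_1(k_1))+\varphi_3(k_3+\psi_2(k_2+\psi_1(k_1)))+\varphi_4(k_4+\psi_3(k_3+\psi_2(k_2+\psi_1(k_1)))))$ over $k_1+k_2+k_3+k_4\le S$, and then carry out the constrained maximization. Write $m=e+r-1$ for brevity. The first observation is that every $\varphi_j$ and every $\psi_j$ is non-decreasing, so it suffices to replace each $k_j$ by the crude bound $k_j\le S$ and each nested argument by its worst-case value and propagate. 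Using $r=e\pm1$, hence $1/2\le r/e\le 2$ and in particular $m^2\ge er$, Lemma~\ref{lemma: upper bound of varphi1 for WA} gives $\varphi_1(k_1)\le 6Sm^4/(er)=O(m^2S)$, and the argument fed into $\varphi_2$ is $h_2:=k_2+\psi_1(k_1)\le S+3Sm^2/(er)\le c\,S m^2/(er)$ for an absolute constant $c$.

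Next I would propagate $h_2$ through the remaining three steps. Plugging $h_2$ into Lemma~\ref{lemma: upper bound of varphi2 for WA} gives $\varphi_2(h_2)\le h_2^{3/2}+m^2Sh_2^{1/2}/e^2$; substituting the bound on $h_2$ and then using $1/2\le r/e\le 2$ to replace $(er)^{3/2}$ and $e^2\sqrt{er}$ by constant multiples of $er$, both summands become $O(m^3S\sqrt{S}/(er))$, which is exactly the leading term claimed. Since $\psi_2=\varphi_2$, the argument $h_3$ of $\varphi_3$ has the same order, and Lemma~\ref{lemma: upper bound of varphi3 for WA} yields $\varphi_3(h_3)\le h_3-1$, contributing another term of that same order but not a larger one, because step~3 is a summation tree and $\varphi_3$ is linear in its input count. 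The essential point is that the cap $\psi_3(h_3)\le Sm^2/e^2$ then stops any further growth: the argument of $\varphi_4$ is $O(Sm^2/e^2)$, so both branches of the $\min$ in Lemma~\ref{lemma: upper bound of varphi4 for WA} give $\varphi_4=O(m^2S)$, and the $\varphi_1$ contribution above is also $O(m^2S)$.

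Summing the four contributions together with the additive $S$ and collecting terms gives $T(S)=O(m^3S\sqrt{S}/(er)+m^2S)$; one then checks, using $1/2\le r/e\le 2$ once more, that the $O(m^2S)$ part coincides up to constants with the $m^2S/(er)$ term written in the statement (and that for problem sizes with $S$ at least a constant multiple of $m^2$ the $S\sqrt{S}$ term dominates it outright), which delivers the stated estimate. The main obstacle is the bookkeeping in the middle paragraph: one must track the nested compositions carefully enough to be certain that the $h^{3/2}$ growth introduced by $\varphi_2$ is the fastest anywhere in the chain — in particular that steps three and four cannot compound it because of the linear form of $\varphi_3$ and the cap inside $\psi_3$ and $\varphi_4$ — and then repeatedly invoke $1/2\le r/e\le 2$ to collapse the various monomials in $e$, $r$, and $e+r-1$ into the single normalized form appearing in~\eqref{equation: upper bound of TS for WA}.
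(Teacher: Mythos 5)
Your proposal is correct and follows essentially the same route as the paper's own proof: substitute the bounds of Lemmas~\ref{lemma: upper bound of varphi1 for WA}--\ref{lemma: upper bound of varphi4 for WA} into the $n=4$ instance of $T(S)$ from Theorem~\ref{theorem: upper bound of Vi}, identify $\varphi_2$ (and the linearly propagated $\varphi_3$) as the source of the $S\sqrt{S}$ leading term, use the cap in $\psi_3$ and the $S$-branch of the $\min$ in $\varphi_4$ to stop further growth, and absorb the $O((e+r-1)^2S)$ remainders under the implicit assumption $S \gtrsim (e+r-1)^2$, exactly as the paper does (if anything, you are more explicit than the paper about the $\varphi_3$ contribution). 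One small slip: to get $6S(e+r-1)^4/(er)=O((e+r-1)^2S)$ you need the upper bound $(e+r-1)^2\leq C\,er$, not the lower bound $(e+r-1)^2\geq er$ that you quote; both follow from $1/2\leq r/e\leq 2$, so nothing breaks.
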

\begin{proof}
Set $h(k_1, k_2) = k_2 + 3k_1(e+r-1)^2/er$ for any integer $k_1$ and $k_2$. Denote $T_1$ and $T_2$ as $T_1(k_1) = 6k_1 (e+r-1)^4/er$ and
\begin{displaymath}
T_2(k_1, k_2) = h(k_1, k_2) \sqrt{h(k_1, k_2)} + \dfrac{(e+r-1)^2}{e^2}S\sqrt{h(k_1, k_2)}.
\end{displaymath}
By Lemma \ref{lemma: upper bound of varphi1 for WA} - Lemma \ref{lemma: upper bound of varphi4 for WA}, we can deduce
\begin{equation}
T(S) \leq S + T_{1}(S) +T_2(S,0) + (e+r-1)^2 \left(\dfrac{1}{e^2}+2 \right) S.
\end{equation}
Therefore, it is easy to check that $T_2(S,0) \leq 2\dfrac{(e+r-1)^3 }{er} S\sqrt{S}$. In conclusion, the equation (\ref{lemma: upper bound of TS for WA}) is valid.
\end{proof}

So far, the lower bound of I/O complexity of Winograd algorithm can be established.

\begin{theorem}
\label{theorem: lower bound results of IM}
The communication lower bound of Winograd algorithm (WA) is
\begin{equation}
\label{equation: communication lower bound of WA}
Q_{lower~~WA} = \Omega \left( \dfrac{W_{out} H_{out} C_{out}C_{in}(e+r-1)r}{e\sqrt{S}} \right).
\end{equation}
\end{theorem}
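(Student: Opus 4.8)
The plan is to combine the general composite-code bound of Theorem~\ref{theorem: general IO lower bound result} with the two ingredients already established for Winograd: the total number of internal and output vertices in the DAG (Lemma~\ref{lemma: the number of vertices in DAG of WA}) and the upper bound on $T(S)$ (Lemma~\ref{lemma: upper bound of TS for WA}). Recall that the four-step multi-step partition $\{G_1,G_2,G_3,G_4\}$ of the Winograd DAG has already been fixed, and that Lemmas~\ref{lemma: upper bound of varphi1 for WA}--\ref{lemma: upper bound of varphi4 for WA} supply the vertex-generation functions $\varphi_i,\psi_i$ feeding into $T$. Hence the remaining work is essentially a substitute-and-simplify argument.

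First I would invoke Theorem~\ref{theorem: general IO lower bound result} directly: with a fast memory of size $S$, the number of I/O operations obeys $Q \ge S\left(|V|/T(2S) - 1\right)$. Then I substitute the vertex count. Since $V \supseteq V_{inter}\cup V_{out}$, we have $|V| \ge |V_{inter}\cup V_{out}|$, and the proof of Lemma~\ref{lemma: the number of vertices in DAG of WA} in fact counts these vertices exactly (each block of $e^2$ outputs contributes its own disjoint family of internal vertices), so $|V| = \Theta\!\left(W_{out}H_{out}C_{out}C_{in}(e+r-1)^4/e^2\right)$; and from Lemma~\ref{lemma: upper bound of TS for WA}, $T(2S) = O\!\left((e+r-1)^3 S^{3/2}/(er) + (e+r-1)^2 S/(er)\right)$.

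Next I evaluate the ratio. Because Winograd forces $r = e\pm 1$, we have $1/2 \le r/e \le 2$, so $e+r-1 = \Theta(e) = \Theta(r)$ and $(e+r-1)^2/e^2 = \Theta(1)$; in particular, for $S$ large enough the $S^{3/2}$ term dominates $T(2S)$, i.e. $T(2S) = O\!\left((e+r-1)^3 S^{3/2}/(er)\right)$. Dividing,
\[
\frac{|V|}{T(2S)} = \Omega\!\left(\frac{W_{out}H_{out}C_{out}C_{in}(e+r-1)^4/e^2}{(e+r-1)^3 S^{3/2}/(er)}\right) = \Omega\!\left(\frac{W_{out}H_{out}C_{out}C_{in}(e+r-1)r}{e\,S^{3/2}}\right).
\]
Multiplying by $S$ and absorbing the additive $-1$ (which is lower order once the problem scale is large) yields $Q = \Omega\!\left(W_{out}H_{out}C_{out}C_{in}(e+r-1)r/(e\sqrt{S})\right)$, exactly~\eqref{equation: communication lower bound of WA}.

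The main obstacle is not any single computation but the bookkeeping in the asymptotics: one must check that the internal/output vertex count is genuinely $\Theta$ rather than merely $O$ (so the numerator survives as an $\Omega$), that the $\Theta(e)=\Theta(r)=\Theta(e+r-1)$ substitutions are uniform, and that both the linear-in-$S$ term of $T(2S)$ and the trailing $-1$ are negligible against the $S^{3/2}$ term in the regime where the bound is asserted. This is precisely the ``asymptotic relation valid when the problem scale is large enough'' caveat already noted after Theorem~\ref{theorem: lower bound results of DC}, and the same reasoning carries over verbatim.
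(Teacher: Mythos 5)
Your proposal is correct and follows essentially the same route as the paper: invoke Theorem~\ref{theorem: general IO lower bound result}, substitute the vertex count from Lemma~\ref{lemma: the number of vertices in DAG of WA} and the bound on $T$ from Lemma~\ref{lemma: upper bound of TS for WA}, and simplify using $1/2 \le r/e \le 2$. Your version is in fact more explicit about the asymptotic bookkeeping (and correctly uses $T(2S)$ where the paper's displayed inequality writes $T(S)$), but the argument is the same.
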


\begin{proof}
By Theorem \ref{theorem: general IO lower bound result}, Lemma \ref{lemma: the number of vertices in DAG of WA} and Lemma \ref{lemma: upper bound of TS for WA},
we have
\begin{displaymath}
Q \geq S \cdot \left(
\dfrac{2 W_{out} H_{out} C_{out}C_{in}(e+r-1)^4}{e^2 T(S)} -1 \right),
\end{displaymath}
which implies that (\ref{equation: communication lower bound of WA}) is valid.
\end{proof}
\section{Near I/O-Optimal Strategy}
\label{section:I/O Optimal Dataflow}

\subsection{Methodology for Near I/O-Optimal Strategy}
\label{subsection:Methodology for Near I/O-Optimal Strategy}

In the proposed general I/O lower bound theory, the highest order term in I/O lower bound result (\ref{equation: general IO lower bound result}) must be determined by some $\varphi_i$ due to the definition (\ref{equation: upper bound2 of Vi}) of $T$. Specifically, for the direct convolution, the maximum vertex generation function $\varphi_2$ for the last step determines the highest order term in I/O lower bound (Equation (\ref{equation: communication lower bound of DC})). For Winograd algorithm, the highest order term in I/O lower bound (Equation (\ref{equation: communication lower bound of WA})) comes from $\varphi_3$ for the third step, rather than $\varphi_4$ for the last step. As the highest order term in I/O lower bound result represents the main part of I/O number, the related $\varphi_i$ points to the major process which involves the most I/O operations. 

By the function $\varphi_i$ which determines the highest order term in I/O lower bound result of a composite algorithm, we are able to find which data should be fully reused in the on-chip memory, and minimize the number of I/O operations during the $i$-th step of the composite algorithm. In detail, for the direct convolution, $\varphi_2$ which determines the highest order term in Equation (\ref{equation: communication lower bound of DC}), indicates that minimizing the number of I/O operations needs to maximize the output data reuse. For Winograd algorithm, $\varphi_3$ inspires us to maximize the data reuse of two temporary arrays which are involved during the third step.

After determining which data reuse should be exploited, the dataflow strategy can be designed to maximize the reuse of such data. In the following, we propose different schedules for the direct and Winograd convolutions by maximizing the reuse of output data and two temporary arrays respectively.





\subsection{Dataflow Design for Direct Convolution}

\begin{figure}[htbp]
	\centering
	\includegraphics[scale=0.35]{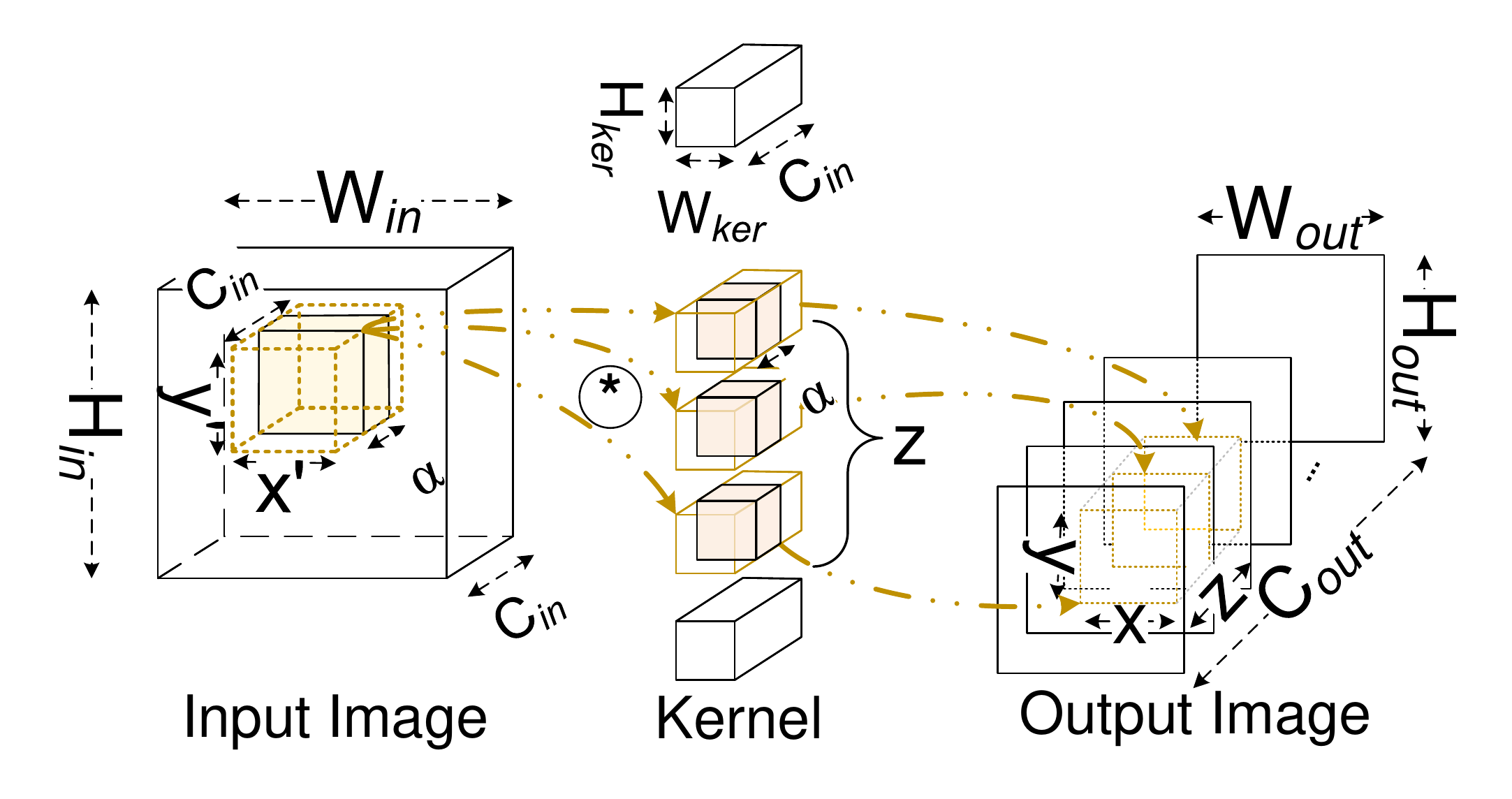}
	\caption{I/O Optimal Dataflow for Direct Convolution.}
	\label{fig:ImplemenationForDirectConvolution}
\end{figure}

For the direct convolution, the highest order term in I/O lower bound (Equation (\ref{equation: communication lower bound of DC})) comes from $\varphi_2$ for the last step. $\varphi_2$ indicates that the output data reuse should be fully exploited, which implies that we need to use the least inputs to produce the most outputs. Hence, the dataflow design should assign most of the effective on-chip memory to portions of outputs. Figure \ref{fig:ImplemenationForDirectConvolution} shows a sub-block of the output image with the dimension of $x \times y \times z$. Based on the fundamental principle above, to reach the minimum off-chip memory access, we tend to choose $xyz \approx S / N_{p}$ where $N_p$ is the total number of active processors.

To compute the output sub-block $x \times y \times z$, we need the inputs in the corresponding $x' \times y'$ locations from all input channels (the yellow sub-block in an input image) and $z$ kernels associated with the partial output channels (the yellow kernels), as shown in Figure \ref{fig:ImplemenationForDirectConvolution}. Since the on-chip memory is limited and tends to be used for storing the most outputs, it is necessary to load the required inputs and kernels by a series of stages, rather than at a time. During each stage, a portion of inputs $x' \times y' \times \alpha$ (the black sub-block) and the corresponding weights $H_{ker} \times W_{ker} \times \alpha$ of $z$ kernels are loaded to the on-chip memory (Figure \ref{fig:ImplemenationForDirectConvolution}). Since each input of the $i$-th channel only can be reused by the weights of the $i$-th channel, rather than other channels. In order to put the larger output sub-block in the limited on-chip memory, we set $\alpha=1$, which indicates that our dataflow design is to load a $x' \times y'$ tile with a fixed channel index firstly and then slide the tile along the channel direction.

After loading a $x' \times y'$ input tile and the corresponding weights of $z$ kernels into the on-chip memory, a partial sum can be performed on the output sub-block. To update the whole output sub-block, we continuously slide the $x' \times y'$ input tile along the channel direction, and load the corresponding inputs and weights (in the yellow blocks), and perform partial updates. Consequently, updating each output sub-block only needs to load the required inputs and weights from the off-chip memory to on-chip memory exactly once. Meanwhile, different output sub-blocks are updated by $N_{p}$ processors in parallel. 

In our dataflow design, there are $(W_{out}H_{out}C_{out})/(xyz)$ output sub-blocks in total. To update each sub-block, we need $x'y'C_{in}$ inputs from an input image and $W_{ker} H_{ker} C_{in}z$ weights from $z$ kernels. As $R = W_{ker}H_{ker} / \mu^2$, $x' \approx \mu x$ and $y' \approx \mu y$, the I/O volume for reading data is
\begin{eqnarray}
Q_{DC~reading} \approx \frac{H_{out} W_{out} C_{out}}{x y z} \left( H_{ker} W_{ker} C_{in} (z+\frac{x y}{R}) \right) \nonumber \\
\geq H_{out} W_{out} C_{out} H_{ker} W_{ker} C_{in} 
\left( 2 \sqrt{\frac{1}{Rxyz}} \right),
\end{eqnarray}
where the final equality holds if and only if $xy=Rz$. By the fact $R = W_{ker}H_{ker}/\mu^2$, $x'=\mu x$ and $y'=\mu y$ again, the requirement of $xy=Rz$ leads to $x' y' = z W_{ker}H_{ker}$, which determines the optimal size of each $x' \times y'$ tile. Further, the I/O volume for storing outputs is $W_{out}H_{out}C_{out}$. When we choose $xyz \approx S/N_{p} $ and $xy=Rz$, the total I/O volume is 
\begin{equation}
\label{equation: Q of actual DC}
Q_{DC} \approx \frac{2 H_{out} W_{out} C_{out} H_{ker} W_{ker} C_{in}}{\sqrt{RS/N_p}} + H_{out} W_{out} C_{out}. 
\end{equation}
If $N_{p} = 1$ and $\frac{H_{ker} W_{ker} C_{in}}{\sqrt{SR}} \gg 1$ which is easily satisfied in CNN applications due to $S$ usually being equal to or less than KB level, $Q_{DC}$ reaches the I/O lower bound (Theorem \ref{theorem: lower bound results of DC}). This fact indicates that, sequentially executing the dataflow and assigning most of the effective on-chip memory to the outputs can reach the minimum off-chip memory access. Otherwise, if we perform the dataflow in parallel, The equation (\ref{equation: Q of actual DC}) means that fully utilizing the on-chip memory owned by each processor to produce the partial sum could maximize the output data reuse and reduce the data transmission in the memory hierarchy.

In order to view the proposed dataflow at a high level, we conclude the details of this design as follows:

\begin{itemize}
	\item The input data reuse is fully considered. In fact, one input is reused by weights of $z$ kernels, and one weight is reused by $x \times y$ outputs. On the other hand, one input is also reused by at most $R$ sliding windows on each $x' \times y'$ tile.
	
	\item The output data reuse is fully exploited. In fact, the partial sum can always stay in the on-chip memory during the update process, and they are just written back to the off-chip memory only once. To make sure the larger output sub-block can be loaded in the on-chip memory, the optimal tiling is designed to slide the $x' \times y'$ tile along the channel direction, which reveals that the loading of inputs along the width and height directions should be considered prior to the channel direction.
	
	\item In order to achieve the I/O lower bound, the $x \times y \times z$ output sub-block needs to satisfy $xy=Rz$, which is called as the optimality condition in this work. Under this condition, $x' y' = z W_{ker}H_{ker}$, which determines the optimal size of each input tile.
\end{itemize}

\subsection{Dataflow Design for Winograd Algorithm}

\begin{figure}[htbp]
	\centering
	\includegraphics[scale=0.26]{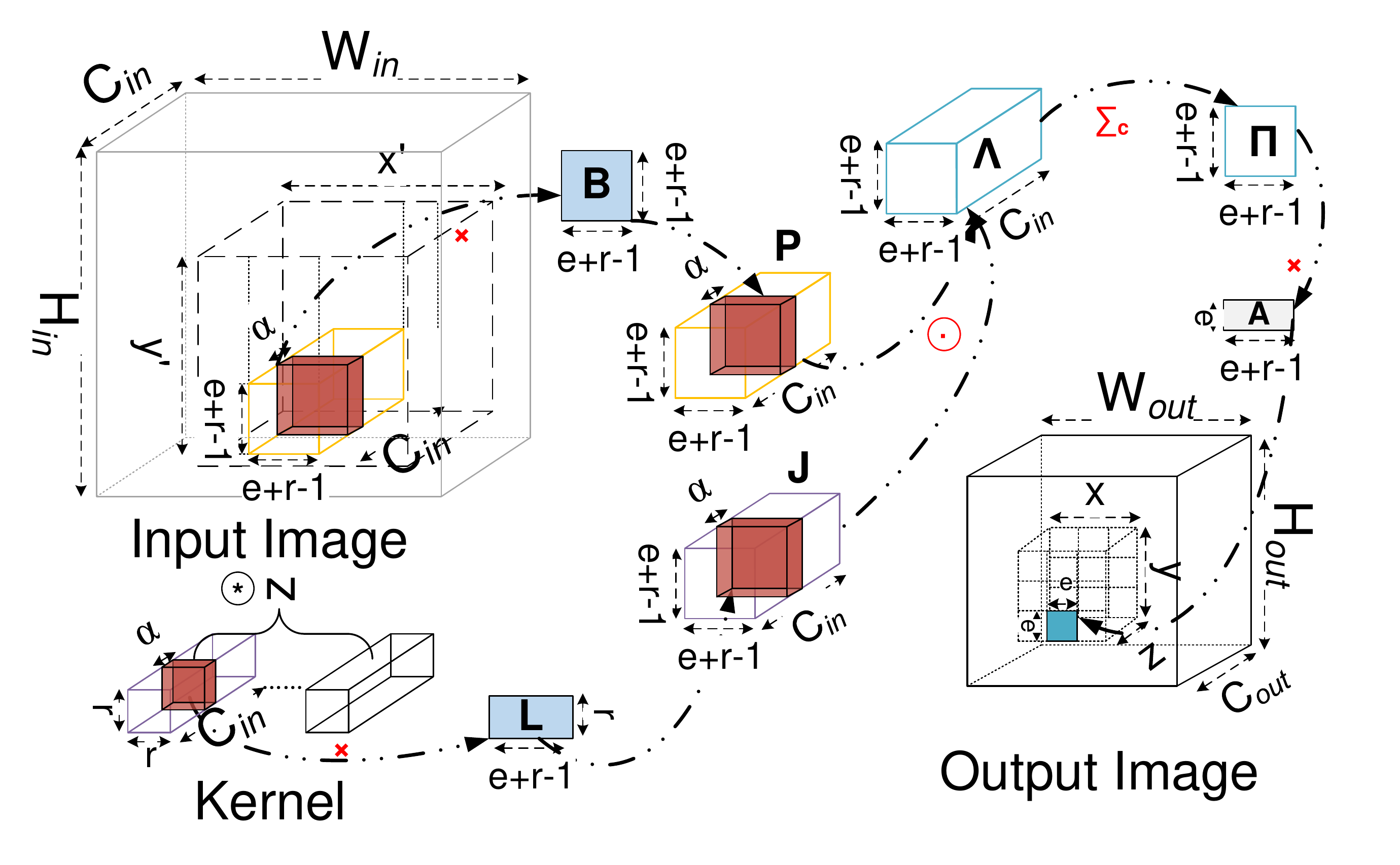}
	\caption{I/O Optimal Dataflow for Winograd Algorithm.}
	\label{fig:ImplemenationForWinogradAlgorithm}
\end{figure}

Similar to the analysis in the dataflow design for direct convolution,  $\varphi_3$ determining the highest order term in I/O lower bound of Winograd algorithm (Equation (\ref{equation: communication lower bound of WA})), leads us to maximize the data reuse of temporary arrays involved during the third step. 

To compute each $x \times y \times z$ output sub-block, Winograd algorithm needs to partition further sub-block into $xy/e^2$ smaller sub-blocks each of which has the size of $e \times e \times z$. Each $e \times e \times z$ small sub-block is computed by using the corresponding $(e+r-1) \times (e+r-1)$ locations from all input channels of the input images (i.e., the yellow block in the input image) and $z$ kernels associated with the partial output channels (Figure \ref{fig:ImplemenationForWinogradAlgorithm}), which are loaded into on-chip memory by a series of stages due to the limited on-chip memory. Based on the same discussion in the dataflow design, each stage loads a $(e+r-1) \times (e+r-1)$ input tile at an input channel (which means $\alpha = 1$) and the corresponding $r^2$ weights at the same channel of a kernel, and then produce a partial sum $\varLambda$ (Figure \ref{fig:ImplemenationForWinogradAlgorithm}). We allocate two $(e+r-1) \times (e+r-1)$ temporary arrays in the on-chip memory for the summation of all partial sums along the channel direction. The first array is used to save the last summation result, and the second one is for the generation of a new partial sum. When a new partial sum is created in the second array, it would be added to the first array. After collecting all partial sums along the channel direction, the $(e+r-1) \times (e+r-1)$ summation matrix $\varPi$ naturally generates in the first array (Figure \ref{fig:ImplemenationForWinogradAlgorithm}), which would be multiplied with a transform matrix to deduce $e \times e$ outputs in the same channel of the small output sub-block. To complete the updata of each small sub-block with the size of $e \times e \times z$, each processor continuously loads the required inputs and weights (the red blocks in Figure \ref{fig:ImplemenationForWinogradAlgorithm}), and performs partial updates. In order to exploit the parallelism of the computation of $x \times y \times z$ outputs, each processor could use serval threads to execute the computation of all $e \times e$ tiles in a channel in parallel. For the update of each $x \times y \times z$ sub-block, every $e^2$ outputs rely on two $(e+r-1) \times (e+r-1)$ temporary arrays at a time. To maximize the data reuse of temporary arrays, we should use the most on-chip memory to store the $2xyz/e^2$ required temporary arrays. Hence, our design chooses $2 \frac{(e+r-1)^2}{e^2} xyz \approx S/N_{p}$.


In the dataflow above, an output image is divided into $(W_{out}H_{out}C_{out})/(xyz)$ sub-blocks. For each sub-block, we need to load $x'y'C_{in}$ inputs from an input image and $zr^2C_{in}$ weights from $z$ kernels. As $\mu = 1$ is only valid in Winograd algorithm, we have $x' \approx x$ and $y' \approx y$. The I/O volume for reading data can be estimated as follows
\begin{eqnarray}
Q_{WA~reading} \approx \frac{H_{out} W_{out} C_{out} }{x y z} \left( xy C_{in}  + z r^2 C_{in} \right) \nonumber \\
\geq H_{out} W_{out} C_{out} C_{in} \left( 2 \frac{  r}{\sqrt{x y z}} \right),
\end{eqnarray}
where the final equality holds if and only if $xy=r^2z$. Due to $R=r^2$ in Winograd algorithm, $xy=r^2z$ leads to $xy=Rz$, which is similar to the optimality condition for the dataflow of direct convolution. In addition, the I/O volume for writing outputs is $W_{out}H_{out}C_{out}$. As $2\frac{(e+r-1)^2}{e^2} xyz \approx S/N_{p}$, the total I/O volume is
\begin{displaymath}
Q_{WA} \approx \frac{2 H_{out} W_{out} C_{out} C_{in}r(e+r-1)}{e\sqrt{S/N_{p}}} + H_{out} W_{out} C_{out}. 
\end{displaymath}

As the proposed dataflow is similar to our design for the direct convolution, we just list two specific details in this design as follows:

\begin{itemize}
	\item The dataflow design of direct convolution mainly focuses on the output data reuse, while the dataflow design of Winograd algorithm is to exploit the data reuse of temporary arrays and combine input data reuse in the best way. In addition, each $(e+r-1)^2$ inputs are reused by weights of $z$ kernels, and each $r^2$ weights are reused by $e^2 $ outputs.
	\item The parallelism of the computation of $x \times y \times z$ outputs is fully considered. The update of every $e \times e$ tiles at an output channel is performed in parallel. To achieve a high parallelism and data reuse, the most on-chip memory is for loading the temporary arrays.
\end{itemize}

\section{Auto-Tuning for Implementation}

\subsection{Auto-Tuning Engine}
The dataflow design above just provides a coarse-grained strategies to minimize the off-chip memory access. In order to achieve an optimal implementation, fine-grained computational schedule and memory access schedule are still needed. In this section, we mainly consider the optimal implementation on accelerators, such as GPU. Similar optimization can be used for other hardware backends.

\begin{figure}[htbp]
	\centering
	\includegraphics[scale=0.6]{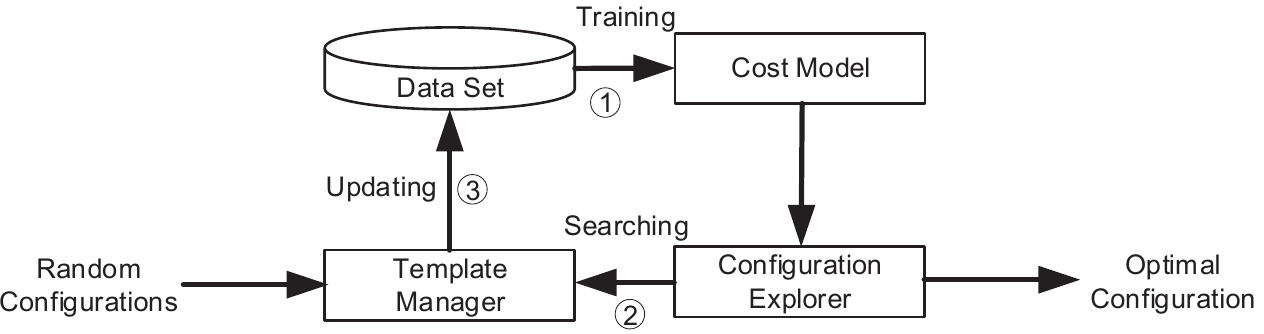}
	\caption{Auto-tuning Engine.}
	\label{fig:autotuning}
\end{figure}

For a given coarse-grained schedule, we define the configuration as a group of key performance parameters, including specific input shape and layout, number of threads in each thread block, tiling size, the shared memory size allocated to each thread block. Each configuration provides the description of an implementation way. All possible configurations constitute a configuration space whose size usually is over billions. In order to rapidly find the optimal choice in the huge space, we built an auto-tuning engine based on the learning-based cost modeling method. Figure \ref{fig:autotuning} shows the overview of our auto-tuning engine, which consists of three main components: a template manager that measures the execution time of any given configuration, and a cost model that predicts the cost of any given configuration, and a configuration explorer that searches promising new configurations.

\textbf{Template Manager:} In the low-level implementation, the proposed dataflow schedules are described as a template. Template manager is in charge of all schedule template, and generates various configurations for each template.
	
\textbf{Cost Model:} We use XGBoost method \cite{chen2016XGBoost} to train a gradient tree boosting model as the cost model to predict the runtime of any configuration. The model is trained using measurement data, which is consisted of a configuration and its execution time. During the auto-tuning process, the cost model would be updated periodically as the configuration explorer finds more configurations and updates the training dataset.
	
\textbf{Configuration Explorer:} During the configuration searching, the configuration explorer uses the trained cost model to predict the cost of any configuration, and searches the potential optimal configuration in the search space. Although the cost model could reduce the time to evaluate configurations, the searching process is still expensive due to a huge search space with over billions of size.

\subsection{Searching Based on Optimality Condition}

In order to improve the search efficiency, we construct a searching domain based on the optimality condition, which is helpful for significantly reducing the size of search space. Besides, we use a heuristic method to rapidly search promising configurations.

\textbf{Searching Domain:} Table \ref{tab:Configuration} presents the searching domain. According to the dataflow design, the tile is loaded into on-chip memory as a whole, which implies that $xyz \leq S_{b}$, $S_{b}$ is the shared memory size for each block. Furtermore, the optimality condition $xy=Rz$ leads to $z \leq \sqrt{S_{b}/R}$ and $xy \leq \sqrt{S_{b}R}$. In order to achieve a high level parallelism, at least two thread blocks are guaranteed to concurrently run on one streaming multiprocessor (SM), resulting in $S_{b} \leq S_{sm}/2$.

{
	\footnotesize
	\begin{table}[bhtp]\centering
		\vspace{-0.24em}
		\caption{Searching Domain.}\label{tab:Configuration}
		\vspace{-1.8em}
		\begin{center}
			\begin{tabular}{|c|c|}\hline
				Parameters & Definition and Constrains  \\\hline
				$H_{in}$, $W_{in}$, $C_{in}$ & Input shape \\\hline
				$H_{out}$, $W_{out}$, $C_{out}$ & Output shape \\\hline
				$H_{ker}$, $W_{ker}$ & Kernel shape \\\hline
				CHW, CWH, HWC &  Layout \\\hline
				$S_{sm}$ & Shared memory size of SM \\\hline
				\multirow{2}*{$S_{b}$} & Shared memory size for each block  \\
				& $S_b \leq S_{sm}/2$ \\\hline
				\multirow{2}*{$x$, $y$, $z$} & Tile size which are the factor of $H_{out}$, $W_{out}$, $C_{out}$, \\
				& $xyz \leq S_{b}$, $z \leq \sqrt{S_{b}/R}$ and $xy \leq \sqrt{S_{b}R}$  \\\hline
				$N_{xt}$, $N_{yt}$, $N_{zt}$ & Thread numbers which are  the factor of $x$, $y$, $z$  \\\hline
			\end{tabular}
		\end{center}
		\vspace{-1.0em}
	\end{table}
	
}

\textbf{Searching Process:} To find many promising configurations, the configuration explorer performs a searching process to select configurations from the searching domain. At the beginning of the searching process, $n_{s}$ random configurations are chosen as initial guesses. During each searching step, the configuration explorer randomly walks from each initial guesses to its nearby configuration in the searching domain. Each random walk tends to converge on a configuration that has lower predicted costs. Consequently, the $n_s$ parallel random walks generate $n_{s}$ promising configurations, which are saved as the initial guesses for the next searching step. Until all predicted costs of the $n_{s}$ selected configurations are lower than a threshold, they are outputted as a solution.

\subsection{Auto-tuning Process}

The proposed auto-tuning engine searches the optimal implementaion iteratively. Each iteration consists of three stages: (1) \textit{Model Training} that trains the cost model, (2) \textit{Configuration Searching} that applies the cost modal to select multiple promising configurations, (3) \textit{Dataset Updating} that measures the new configurations and updates the dataset. Until the measurement runtime of the selected configurations does not decrease for hundreds of iterations, the auto-tuning process would end. The parallel strategy corresponding the best selected configuration is the implementation of our near-optimal I/O dataflow.

\section{Evaluation}

In this section, we evaluate our proposed I/O optimal dataflow designs for the direct convolution and Winograd algorithm respectively. We first evaluate the optimal dataflow implementations derived from the proposed auto-tuning engine, and then compare the speeds of different automation searching methods, and finally demonstrate our implementation can achieve performance speedup in end-to-end cases. Our evaluation is mainly performed in the NVIDIA 1080Ti and V100 GPUs.

To evaluate our work from a broad scale, we use synthetic convolution cases with different $W_{ker}$ $H_{ker}$ and the stride $\mu$. On the one hand, in cuDNN library, the direct implementation of convolutions mainly has two approaches: direct convolution and image2col method \cite{jia2014learning}, where the direct convolution occasionally fails for some different input shapes, and the image2col method are usually better than the direct convolution. In order to present the superior of our implementations, we compare with the best one of two direct implementations in cuDNN. On the other hand, the indirect implementation of convolutions in cuDNN mainly is Winograd algorithm. The following evaluation compares the runtime of different convolution kernels of ours and cuDNN, where CUDA-9.0 and cuDNN-7.0.3 are used.

To evaluate the auto-tuning engine, we first compare the searching performance of our proposed searching method with different searching strategies in TVM, which represents the state-of-the-art technique for auto-tuning a convolution operation, and then compare our searched implementation with the optimal solution provided by TVM.

\subsection{Performance Comparison with cuDNN}

\begin{figure}[htbp]
	\centering
	\includegraphics[scale=0.24]{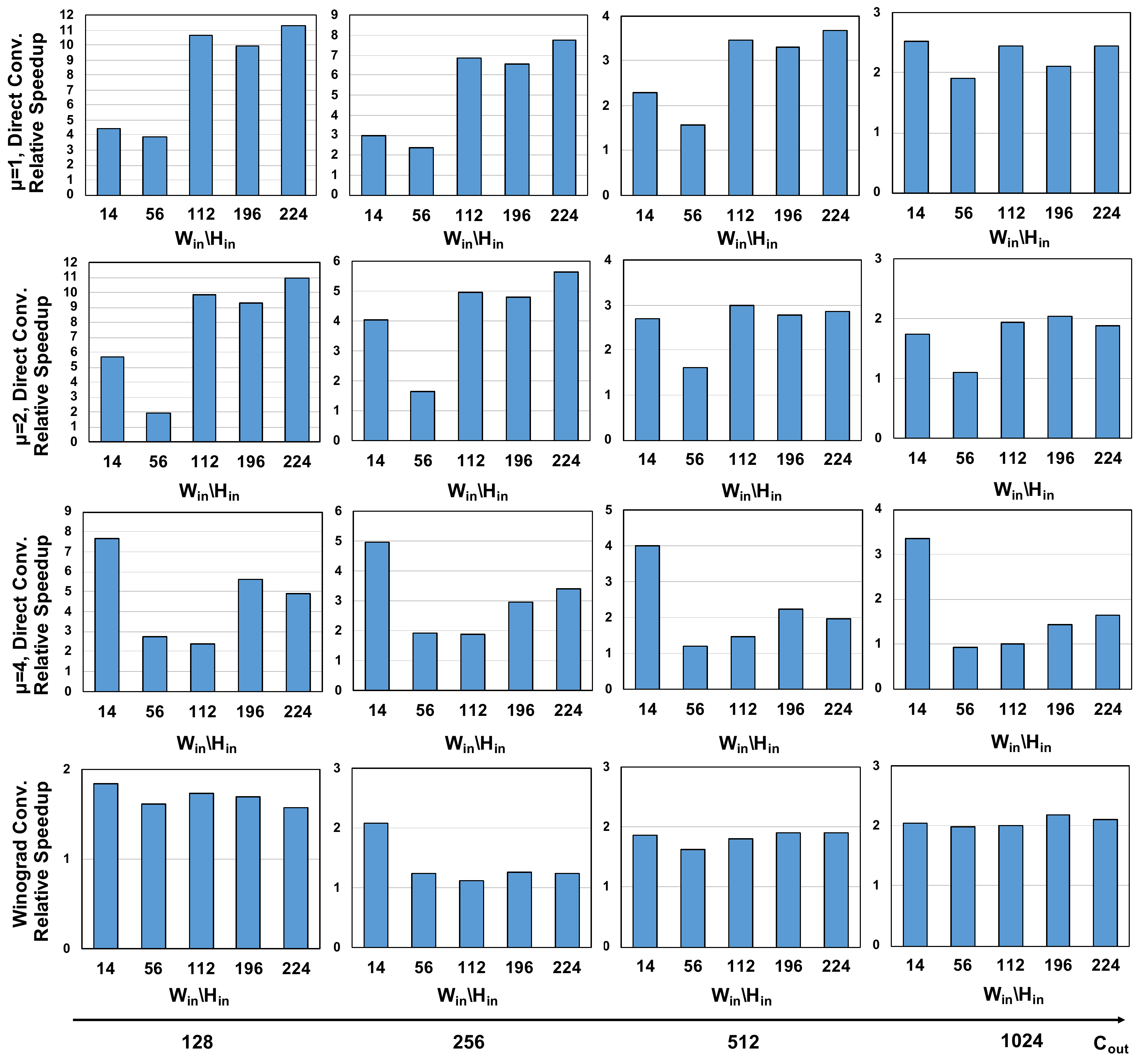}
	\caption{Performance Comparison of Dataflow Design over cuDNN for Direct Convolution and Winograd Algorithm on 1080Ti GPU. For all convolutions,  $H_{ker} \times W_{ker} = 3 \times 3$ and $C_{in} = 256$.}
	\label{fig:DCTest}
\end{figure}

Figure \ref{fig:DCTest} shows the performance comparison on the implementations of the direct convolution and Winograd algorithm respectively. We can find that our I/O optimal dataflow implementations can achieve $3.32 \times$ performance speedup on average. We have three important observations from the results. 

Firstly, the benefit from the dataflow is consistent as the $H_{in}$ and $W_{in}$ increase, and our methodology can have significant performance improvement. This mainly owes to the design of exploiting input and output data reuse. I/O dataflow design maximizes the data reuse of the $x' \times y'$ tile at a given channel. When $H_{in}$ and $W_{in}$ become larger, the more data reuse can be achieved.

Secondly, when $C_{out}$ is small, the dataflow contribution is always higher for the direct convolution. Conversely, when $C_{out}$ is large, the benefit from the dataflow is always higher for Winograd algorithm. 

Third, on the whole, the dataflow benefits decrease as the stride $\mu$ increase. This is because the motivation of I/O dataflow design is to minimize the off-chip memory access. When the stride $\mu$ is larger, more off-chip memory accesses gradually become independent with each other.

Furthermore, Figure \ref{fig:BatchTest} shows the batched convolution test. It is clear that, compared with scaling the batch size of cuDNN, our I/O optimal dataflow still achieves $1.51 \times$ performance speedup on average. On the one hand, For a given batch-size, when $H_{in}$ and $W_{in}$ increases, the performance improvement from our dataflow design gradually becomes apparent. On the other hand, when $H_{in}$ and $W_{in}$ are small, the dataflow contribution is small. However, when $H_{in}$ and $W_{in}$ become larger, the convolution needs more I/O operations, and the benefit from the dataflow becomes greater. When $H_{in}$ and $W_{in}$ are $112$, the speedup becomes larger with the batch size increasing.

\begin{figure}[htbp]
	\centering
	\includegraphics[scale=0.43]{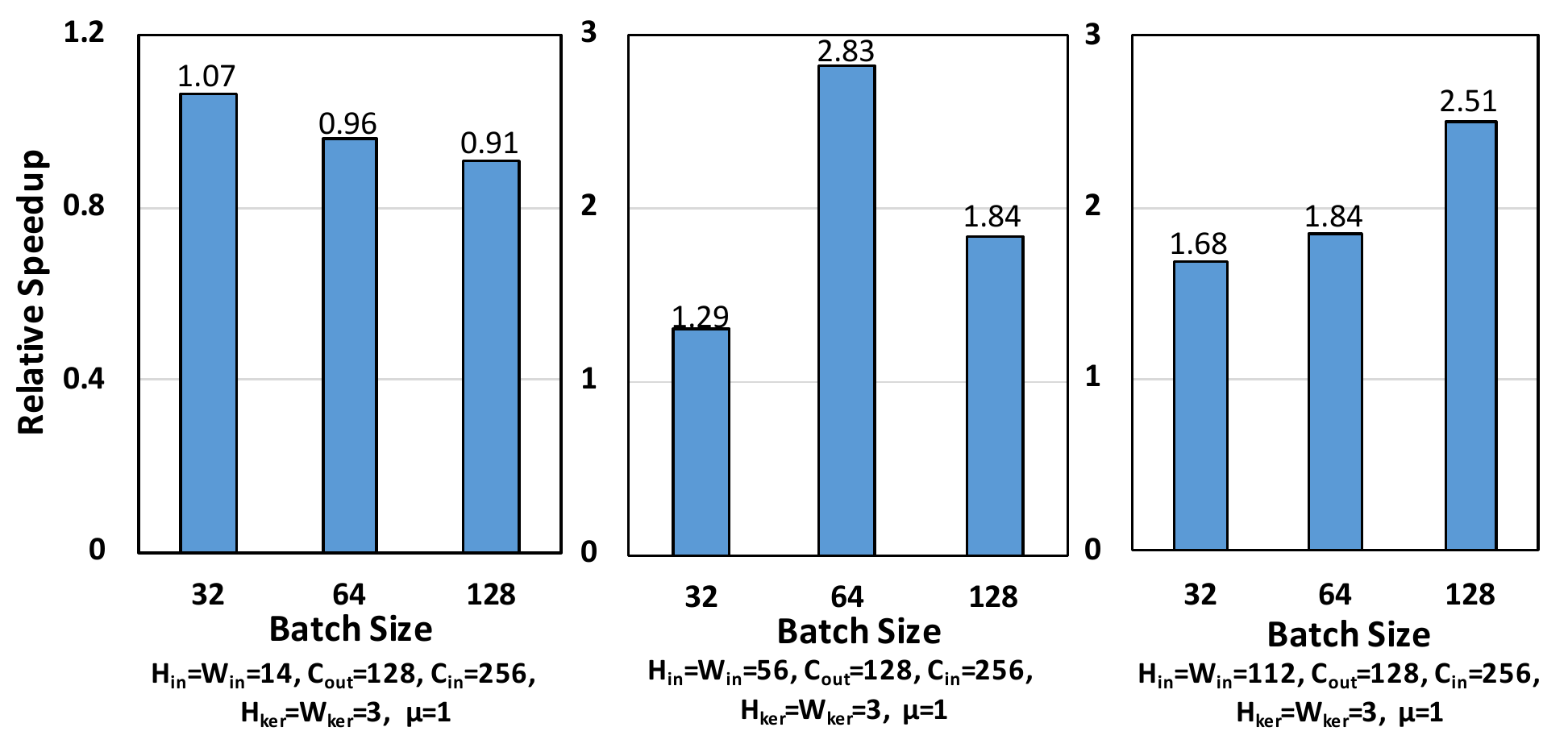}
	\caption{Performance Comparison of Dataflow Design over cuDNN for Batched Direct Convolution Test on 1080Ti GPU.}
	\label{fig:BatchTest}
\end{figure}

\subsection{Performance Comparison with TVM}

\begin{figure}[htbp]
	\centering
	\includegraphics[scale=0.36]{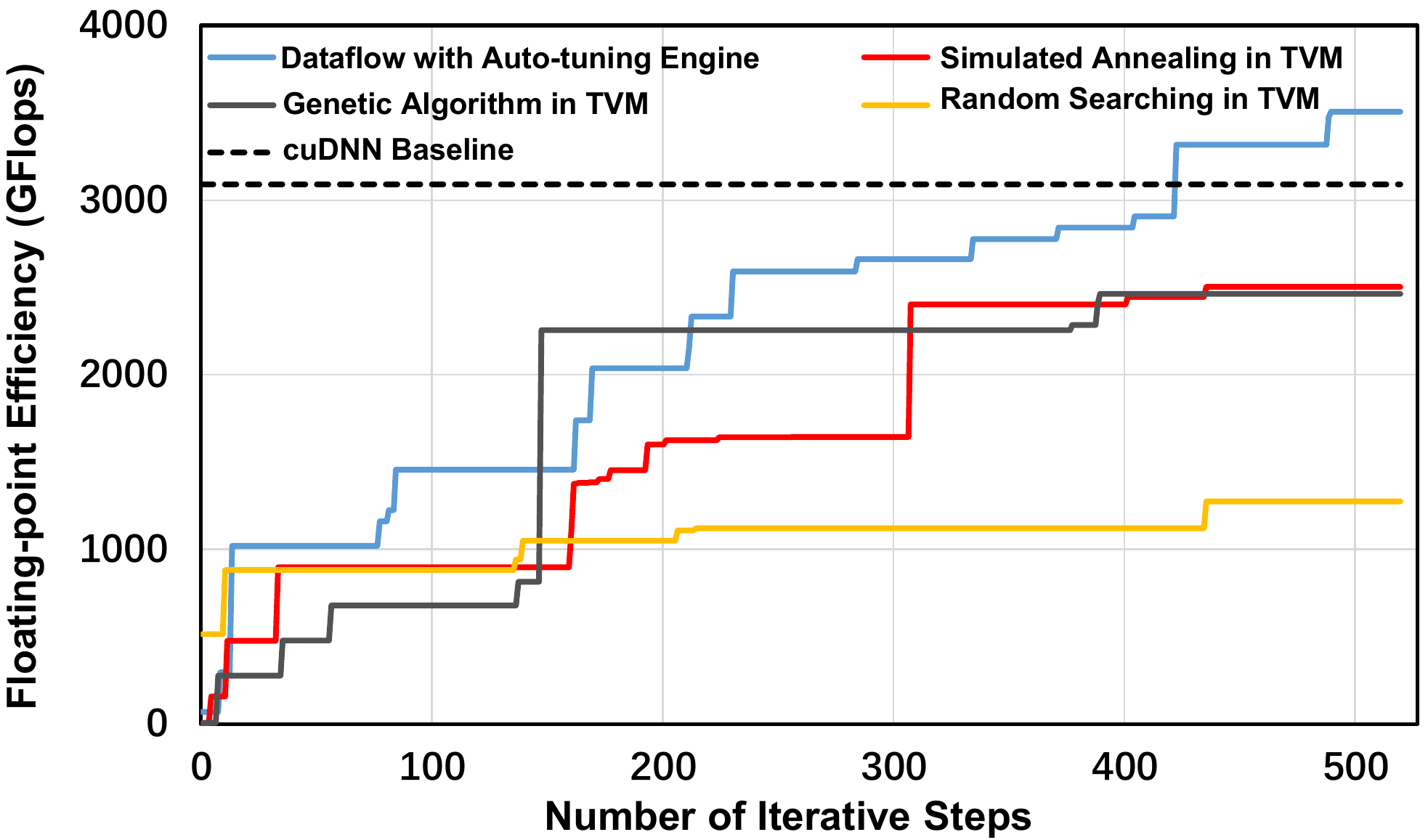}
	\caption{Comparison of Different Automation Methods.}
	\label{fig:AutoTuningCurve}
\end{figure}

{

	\footnotesize
	\begin{table*}[bhtp]\centering
		\vspace{-1.0em}
		\caption{Comparison of TVM with Auto-tuning Engine (ATE).}\label{tab: Comparison of TVM with ATE}
		\vspace{-1.5em}
		\scriptsize
		\begin{tabularx}{17.9cm}{p{1.1cm}<{\centering}|p{0.2cm}<{\centering}p{0.8cm}<{\centering}p{0.5cm}<{\centering}p{1.1cm}<{\centering}p{0.4cm}<{\centering}p{0.7cm}<{\centering}|p{1.2cm}<{\centering}p{1.2cm}<{\centering}p{0.9cm}<{\centering}|p{0.3cm}<{\centering}p{0.3cm}<{\centering}p{0.9cm}<{\centering}|p{0.8cm}<{\centering}p{0.8cm}<{\centering}p{1cm}<{\centering}}\toprule
			\multirow{2}*{Convolution} &\multicolumn{6}{c}{Parameter}&\multicolumn{3}{c}{Size of Search Space}  & \multicolumn{3}{c}{Iterations} & \multicolumn{3}{c}{Performance of Solution (GFlops)} \\\cmidrule{2-16}
			&$C_{in}$ & $~H_{in}/W_{in}~$ & $~C_{out}~$ & $~H_{ker}/W_{ker}~$ & stride & padding &TVM & ATE &ATE/TVM &TVM & ATE & TVM/ATE &TVM & ATE &ATE/TVM \\\midrule
			conv1 &3 &227 &96 &11 &4 &0 &$9.29 \times 10^6$ &$4.81 \times 10^6$ &51.78\% &142 &197 &0.72 &2927.30 &5377.06 &1.84 \\
			conv2 &96 &27 &256 &5 &1 &2 &$2.25 \times 10^8$ &$4.76 \times 10^7$ &21.16\% &762 &449 &1.53 &5909.73 &6426.83 &1.09 \\
			conv3 &256 &13 &384 &3 &1 &1 &$1.87 \times 10^7$ &$4.51 \times 10^6$ &24.12\% &877 &389 &2.25 &2107.68 &2555.93 &1.21 \\
			conv4 &384 &13 &256 &3 &1 &1 &$1.54 \times 10^7$ &$5.23 \times 10^6$ &33.96\% &784 &407 &1.93 &2040.57 &2040.92 &1.00 \\
			conv3\_wino &256 &13 &384 &3 &1 &1 &$2.59 \times 10^5$ &$1.36 \times 10^5$ &52.51\% &352 &202 &1.74 &6700.77 &6726.17 &1.01 \\
			conv4\_wino &384 &13 &256 &3 &1 &1 &$1.58 \times 10^5$ &$8.06 \times 10^4$ &51.01\% &587 &286 &2.05 &7121.57 &7118.23 &1.00 \\
			\bottomrule
		\end{tabularx}
	\end{table*}
}

Table \ref{tab: Comparison of TVM with ATE} presents the detail information about configuration space, the number of iterations and the best solution's runtime of the auto-tuning engine and TVM during searching the optimal implementations of different convolution layers in AlexNet on V100 GPU. We have three important observations from the experiment results. Firstly, the constraints for the templates and the proposed searching domain can successfully reduce the size of configuration space to about $20\%-50\%$ for the direct convolution and $50\%$ for Winograd algorithm. The compression ratio for Winograd algorithm is not small, because the size of original configuration space is small (see the space size in TVM) and the flexibility for implementation design is limited essentially. Secondly, the proposed auto-tuning engine finds the final solution faster than TVM, thanks to the proposed searching domain. Thirdly, the final configuration found by the auto-tuning engine usually has a shorter runtime than the best solution in TVM. The three facts above demonstrate that the auto-tuning engine has the strong scaling efficiency for searching optimal configuration. 

Figure \ref{fig:AutoTuningCurve} shows the comparison of different automation methods for searching an optimal direct convolution implementation of the conv1 in Table \ref{tab: Comparison of TVM with ATE} on V100 GPU. The ML-based model in TVM starts with no training data and uses the collected data to improve itself. The X-axis is the number of iterative steps and the Y-axis is the floating-point arithmetic efficiency in GFlops. From Figure \ref{fig:AutoTuningCurve}, we observe a similar trend for all automation methods. During the iterations, each automation method gradually finds the better configuration with higher floating-point arithmetic efficiency. It should be noted that the proposed auto-tuning engine is able to find better configurations much faster than the others. This mainly owes to two reasons. On the one hand, the I/O optimality condition is used to prune configuration search space, which leads to the proposed searching domain. On the other hand, the parallel searching method effectively improves the searching process in the searching domain.

\subsection{Performance Comparison on CNN Models}

The modern CNN models introduce many layer structures, such as convolution layer. More specifically, the convolution layer is important and popular in many state-of-the-art CNN models such as ResNet \cite{szegedy2016inception}, VggNet, SqueezeNet \cite{iandola2016squeezenet} and so on. In the following, we demonstrate that our proposed auto-tuning engine can help for accelerating CNN inference. 


Figure \ref{fig:end-to-end} shows the performance comparison of the dataflow design and cuDNN on different CNN models. For SqueezeNet, Vgg-19, ResNet-18, ResNet-34 and Inception-v3, our optimal implementation can achieve $2.67 \times$, $1.09 \times$, $1.02 \times$, $1.09 \times$ and $1.23 \times$ performance speedup respectively compared with using cuDNN. The performance benefits come from two aspects. The different kinds of convolutions take up the main part of CNN models. Besides, for each convolution layer, the proposed auto-tuning engine could find a better implementation than cuDNN.

\begin{figure}[htbp]
	\centering
	\includegraphics[scale=0.58]{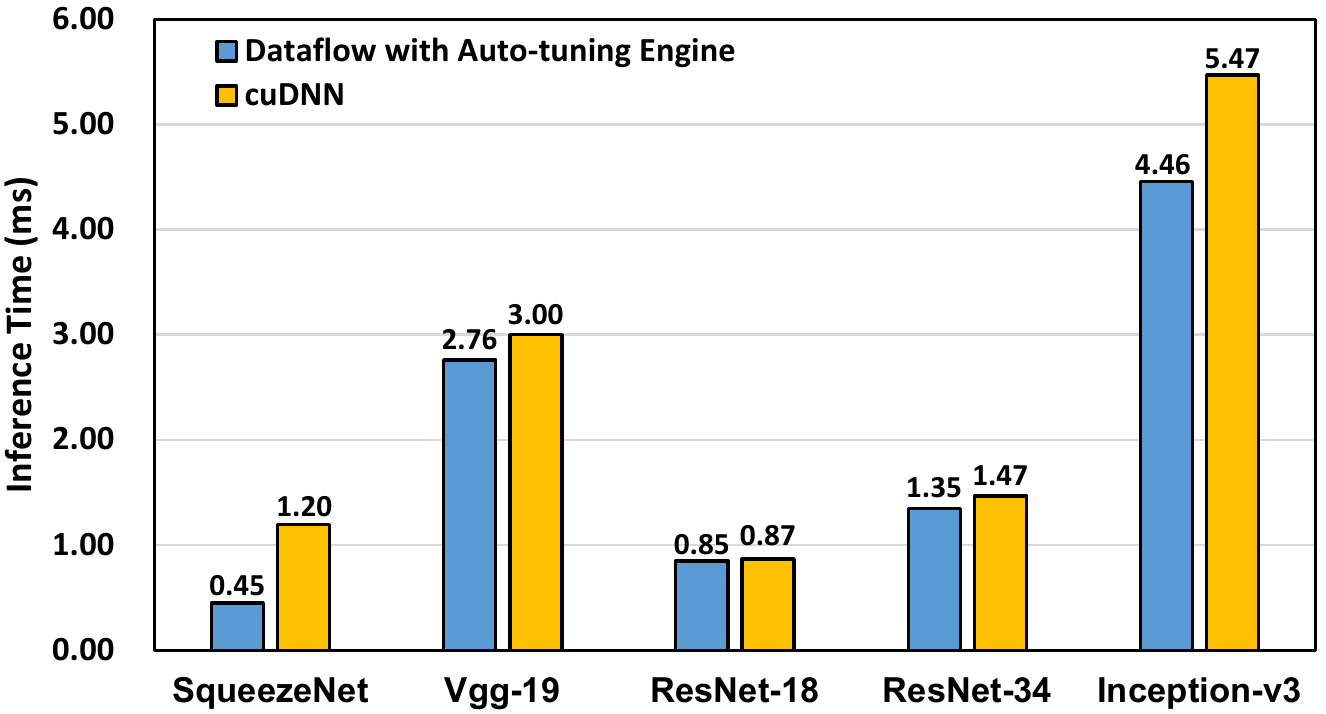}
	\caption{Performance Comparison of Dataflow Design over cuDNN on different CNN Models on V100 GPU.}
	\label{fig:end-to-end}
\end{figure}

\subsection{Sensitivity for GPU Architecture}

\begin{figure}[htbp]
	\centering
	\includegraphics[scale=0.46]{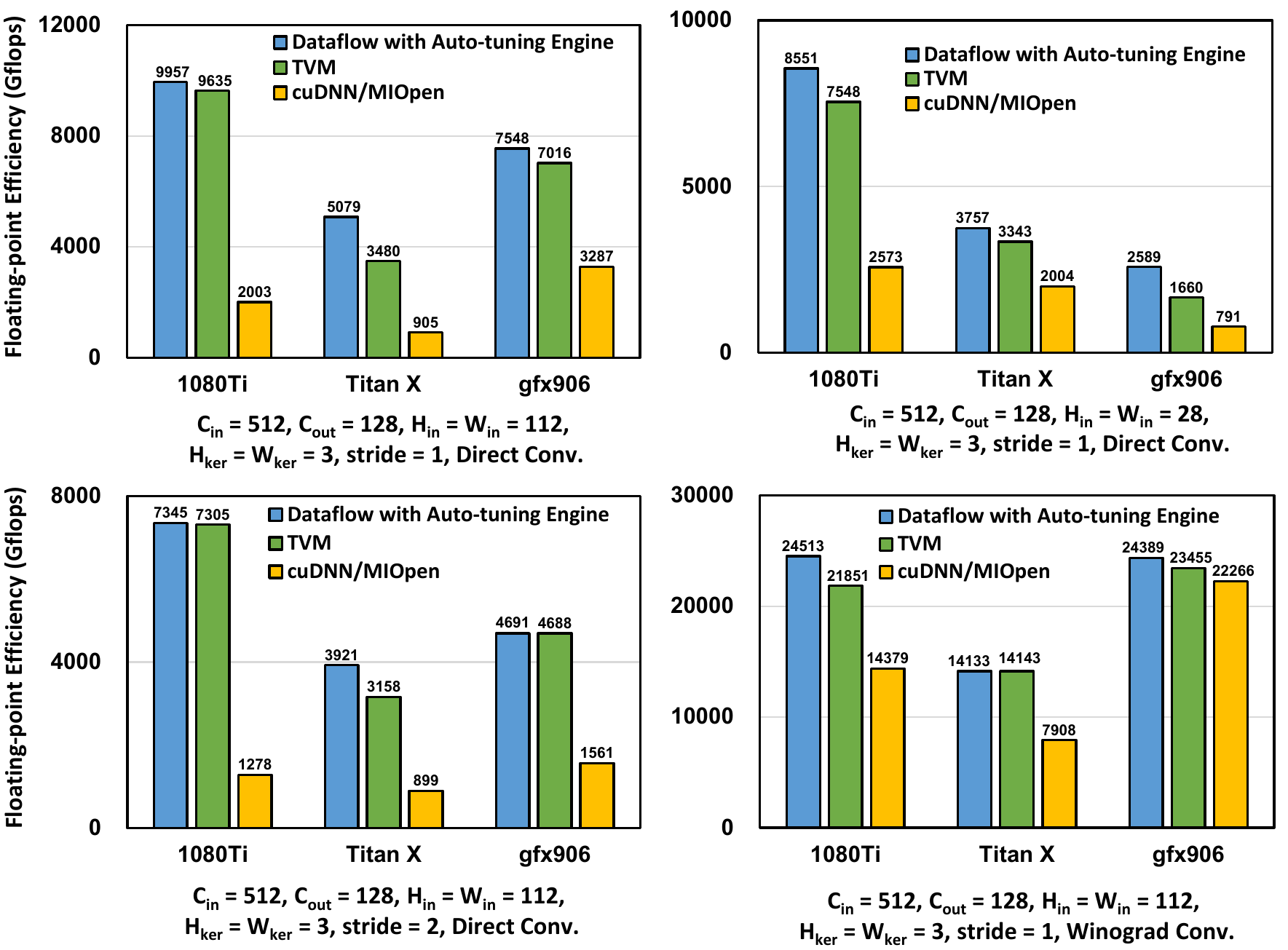}
	\caption{Sensitivity on different GPU architectures.}
	\label{fig:MultiPlatform}
\end{figure}

To demonstrate the scalability on GPU architecture, we evaluate the proposed dataflow with auto-tuning engine on Pascal and Maxwell architectures. We use one kind of Pascal architectures: 1080Ti, and one kind of Maxwell architecture: GTX Titan X. Figure \ref{fig:MultiPlatform} shows the evaluation results on the above two architectures. The proposed dataflow is much faster than cuDNN. Compared with the solution of TVM, for the direct convolution, the improvement of our implementation on these architectures can achieve about $1.05 \times$ and $1.27 \times$ respectively. For Winograd algorithm, the speedups of our dataflow are $1.12 \times$ and $1.01 \times$ respectively on these architectures.

In addition, we compare the dataflow design with MIopen library on AMD GFX906 platform (Pre-Wukong GPU), and use ROCm-2.9 and MIopen-2.1 in this evaluation. On average, the performance improvement is up to $2.86 \times$ and $1.10 \times$ for direct convolution and Winograd algorithm respectively. Besides, compared with the solution of TVM, our optimal implementation achieves $1.21 \times$ speedup for the direct convolution and $1.03 \times$ speedup for Winograd algorithm. We find that our optimal implementation is well ported to different architectures and achieve a consistent performance speedup.


\section{Related Work}
The red-blue pebble game is widely used in theory analysis of communication lower bound to guide optimal communication strategy. After Hong \& Kung established the I/O complexity theory \cite{jia1981complexity}, Savage developed the notion of S-span to derive Hong-Kung style lower bounds \cite{savage1995extending}. Kwasniewski et al. provided a new proof of I/O complexity of matrix-matrix multiplication and designed a parallel algorithm to reach its lower bound \cite{kwasniewski2019red}. Although the red-blue pebble game model has been proposed for many years \cite{aggarwal1988input,savage1998models,demmel2012communication,Ballard2011minimizing,ballard2013graph,solomonik2013minimizing}, it is still difficult to use this model to establish I/O lower bounds of composite algorithms which involve several different kinds of computational patterns \cite{elango2014characterizing}.
To get around the essential difficulties, the lower bound of composite algorithms was considered by modifying the red-blue pebble game model into a red-blue-white pebble game model \cite{elango2014characterizing}, which uses some restrictions on models, such as the limitation of disallowing re-computation of values on the DAG \cite{elango2014characterizing}. However, such restrictions seem inappropriate for the lower bound analysis of some convolution algorithms. For example, Winograd algorithm allows re-computation of values to decrease the number of I/O operations. In order to solve the difficulties, this work at first establishes a general I/O lower bound theory for any composite algorithm based on the red-blue pebble game model without introducing the limitation of disallowing re-computation of values on the DAG.
	
For convolutions in DNN, Demmel et al. estimated the minimum memory access of direct convolution by solving an intricate optimization problem \cite{demmel2018communication}.
Furthermore, Chen et al. transformed the direct convolution into Matrix-matrix multiplication, and successfully deduced the lower bound of the off-chip communication of direct convolution in CNN accelerators \cite{chen2020hpca}. However, our work is the first time to perform a systematic analysis of diverse convolution algorithms in deep learning by developing a general I/O lower bound theory for any composite algorithm. It is worth mentioning that the I/O lower bound in Equation (\ref{equation: communication lower bound of DC}) is equivalent to the I/O lower bounds of direct convolution in \cite{demmel2018communication,chen2020hpca}, while our proposed result on direct convolution is the tighter lower bound with a more precise coefficient. Besides, the previous works \cite{demmel2018communication,chen2020hpca} mainly focus on the direct convolution, and seem not easy to adapt to Winograd algorithm. However, to the best our knowledge, this work at first establishes the I/O lower bound of Winograd algorithm.

To fully exploit the research efforts from convolution algorithm and micro-architecture optimizations, many software libraries, such as cuDNN, are launched to pack these optimizations together in order to reduce programming difficulty. However, due to the increasing demand on performance, directly using the software libraries sometimes is not satisfactory. In recent years, the convolution optimization is widely concerned. Some excellent implementations are proposed for different convolution algorithms \cite{chen2016eyeriss,shah2018runtime,peemen2013memory,shi2015locality,jo2018energy}. However, most of the studies mainly focus on the optimization from experience\cite{zhang2019ispa}. In this work, we try to propose the I/O optimal dataflow based on the lower bound theoretical analysis. By comparing the I/O volume of the dataflow with the lower bound, we find the optimality condition for I/O optimal design. On the other hand, in the convolution optimization, the combinatorial choices of memory access, threading pattern, and novel hardware primitives creates a huge configuration space. A common way is to adopt a predefined cost model to guide the search, but building an accurate cost model is difficult due to the increasing complexity of modern hardware. To addresses these challenges, some searching strategies based on the learning-based cost models are proposed, in which TVM represents the state-of-the-art auto-tuning technique. However, it still needs a large search cost due to the huge search space. In this work, this work firstly considers to use the deduced optimality condition to fully reduce the size of search space, and proposes an effective parallel searching method to find the optimal implementation, which leads to an effective auto-tuning engine. Compared with TVM, it could faster find a better final solution.

\section{Conclusion}
In this paper, we have tackled the challenge of building I/O lower bound theory and designing I/O optimal dataflow implementations for convolutions. By fine-grain viewing the recent lower bound theory developed under the red-blue pebble game model, we fully consider the influence of sub-computations to each other, and propose a general I/O lower bound theory for composite algorithms. Based on the proposed theory, we establish the communication lower bound results for the typical representatives of direct and indirect convolution methods, which are the direct convolution and Winograd algorithm. Furthermore, for each approach, we design the I/O optimal dataflow strategy based on the lower bound analysis. By developing an auto-tuning engine for searching the optimal configuration, we push the envelope of performance of our dataflow designs further.

\begin{acks}
The authors would like to thank all anonymous referees for their valuable comments and helpful suggestions. The work is supported by National Key Research and Development Program of China under Grant No. (2018AAA0103302, 2016YFC1401706, 2016YFB0200800), National Natural Science Foundation of China under Grant No. (62032023, 61802369) and Huawei Technologies Co., Ltd.. The authors also thank Dr. Long Wang and the group of Huawei Technologies Co., Ltd. for their help to this research.
\end{acks}
\bibliographystyle{ACM-Reference-Format}
\bibliography{ppopp2021}


\end{document}